\definecolor{linkcolor}{rgb}{0.45,0.05,0.05}
\definecolor{citecolor}{rgb}{0.05,0.45,0.45}
\definecolor{urlcolor}{rgb}{0.05,0.05,0.45}
\definecolor{commentcolor}{rgb}{0.7,0.7,0.7}
\newcommand{\todo}[1]{}
\date{}
\renewenvironment{proof}{}
\newcounter{commentcounter}
\newcommand{\sstodo}[2][]
\theoremstyle{definition} 
\newtheorem{x}{X}
\newtheorem{q}{Q} 
\newtheorem{rr}{R} 
\newtheorem{definition}[x]{Definition} 
\newtheorem{corollary}[x]{Corollary} 
\newtheorem{question}[q]{Question} 
\newtheorem{lemma}[x]{Lemma} 
\newtheorem{theorem}[x]{Theorem} 
\newtheorem{property}[x]{Property}
\newtheorem{proposition}[x]{Proposition}
\newtheorem{remark}[rr]{Remark}
\newenvironment{example}
  {\pushQED{\qed}\examplex} 
  {\popQED\endexamplex}
\definecolor{grey}{rgb}{0.5,0.5,0.5}
\definecolor{darkgrey}{rgb}{0.15,0.15,0.15}
\definecolor{darkblue}{rgb}{0.05,0.05,0.5}
\definecolor{darkgreen}{rgb}{0.05,0.4,0.05}
\definecolor{darkestgreen}{rgb}{0.5,0.0,0.5}
\definecolor{darkorange}{rgb}{0.5,0.25,0.00}
\providecommand{\defemp}[1]{\emph{#1}} 
\newcounter{tecounter}
\newenvironment{tightenumerate}
{
    \begin{list}{
    \arabic{tecounter}\addtocounter{tecounter}{1})}{%
    \setcounter{tecounter}{1}
        \setlength{\leftmargin}{08pt}
        \setlength{\topsep}{1pt}
        \setlength{\partopsep}{0pt}
        \setlength{\itemsep}{2pt}
        \setlength\labelwidth{0pt}}
        \ignorespaces}
{\unskip\end{list}}
\providecommand{\reachedvf}[3]{\ensuremath{\mathcal{V}_{#2}(#1, #3)}}
\providecommand{\reachedf}[2]{\ensuremath{\mathcal{V}_{#1}(#2)}}
\providecommand{\reachedc}[2]{\ensuremath{\mathcal{C}(#1, #2)}}
\providecommand{\reaching}[2]{\ensuremath{\mathcal{S}^{#1}_{#2}}}
\providecommand{\Language}[1]{\ensuremath{\mathcal{L}(#1)}}
\newcommand*{\probleminternal}[4]{
    {\small
	\par
	\medskip
	\noindent\fbox{\parbox{0.98\columnwidth}{
		\textbf{#4:} {#1} \\[0.05in]
		\renewcommand{\tabcolsep}{2pt}
		\begin{tabularx}{\linewidth}{rX}
			\emph{Input:} & #2 \\
			\emph{Output:} & #3
		\end{tabularx}
	}}}
	\par
	\medskip
	\par
}
\let\rel\relax 
\let\id\relax
\providecommand{\ssl}[1]{{\scriptsize\textsf{#1}}}
\newcommand*{\decproblem}[3]{\probleminternal{#1}{#2}{#3}{Decision Problem}}
\newcommand{\KleeneStr}[1]{\ensuremath{{#1}^{\ast}}}
\newcommand{\ptimes}{\times}
\newcommand{\powSet}[1]{\raisebox{.15\baselineskip}{\large\ensuremath{\wp}}({#1})}
\newcommand{\rel}[1]{\textcolor{darkblue}{\,\mathrm{#1}}\,}
\newcommand{\relsub}[2]{\textcolor{darkblue}{\,\mathrm{#1}}_{{#2}}\,}
\newcommand{\set}[1]{\textcolor{darkestgreen}{#1}}
\newcommand{\mon}[1]{\textcolor{darkgreen}{#1}}
\newcommand{\rcmp}{\textcolor{darkorange}{\,\fatsemi\,}}
\newcommand{\id}[1]{{\mathrm{1}}_{\mon{#1}}}
\newcommand{\scat}{\cdot}
\newcommand{\scdevices}{sensori-computational devices\xspace}
\newcommand{\scdevice}{sensori-computational device\xspace}
\newcommand{\Scdevices}{Sensori-computational devices\xspace}
\newcommand{\device}{device\xspace}
\newcommand{\devices}{devices\xspace}
\providecommand{\etc}{\emph{etc}}
\newcommand{\actraw}[1]{\bigcdot}
\newcommand{\act}[3]{{#2}\actraw{#1}{#3}}
\newcommand{\factraw}[1]{\rel{S}_{\set{\!#1}}}
\newcommand{\fact}[3]{\factraw{#1}({#2},{#3})}
\newcommand{\variation}[2]{\textsc{Delta}_{#1}(#2)}
\newcommand{\shrink}[2]{\textsc{Shrink}_{#1}(#2)}
\newcommand{\pump}[2]{\textsc{Pump}_{#1}(#2)}
\newcommand{\mi}[2]{\textsc{Int}_{#1}(#2)}
\newcommand{\filltable}[2]{\textsc{FillLeaves}_{#1}(#2)}
\newcommand{\change}[2]{\superrestr{\textcolor{darkblue}{\nabla}}{#1}{{}_{{}_{#2}}}}
\newcommand{\disaggregator}[2]{\textcolor{darkblue}{\partial}\restr{\textcolor{darkblue}{\boldsymbol{\oplus}}}{#1}}
\newcommand{\collapser}[2]{\textcolor{darkblue}{\smallint}\restr{\textcolor{darkblue}{\boldsymbol{\oplus}}}{#1}}
\newcommand{\osmod}[3]{{#1}\sim{#2}\;\left(\bmod #3\!\right)}
\newcommand{\monop}{\oplus}
\newcommand{\neut}{\mathscr{N}}
\newcommand{\gc}{{\sc Graph-3c}\xspace}
\newcommand{\Dminus}{\ensuremath{\boldsymbol{-}}}
\newcommand{\Dplus}{\ensuremath{\boldsymbol{+}}}
\newcommand{\Dz}{\ensuremath{\text{\O}}}
\newcommand{\De}{\ensuremath{{\boldsymbol{=}}}}
\newcommand{\dirN}{\ensuremath{\uparrow}}
\newcommand{\dirNE}{\ensuremath{\nearrow}}
\newcommand{\dirE}{\ensuremath{\rightarrow}}
\newcommand{\dirSE}{\ensuremath{\searrow}}
\newcommand{\dirS}{\ensuremath{\downarrow}}
\newcommand{\dirSW}{\ensuremath{\swarrow}}
\newcommand{\dirW}{\ensuremath{\leftarrow}}
\newcommand{\dirNW}{\ensuremath{\nwarrow}}
\newcommand{\acc}{\text{\rm acc}}
\newcommand{\Dstay}{\ensuremath{\bot}}
\newcommand{\Dflip}{\ensuremath{\top}}
\newcommand{\Integers}{\set{\mathbb{Z}}}
\newcommand{\Naturals}{\set{\mathbb{N}}}
\providecommand{\ssl}[1]{{\scriptsize\textsf{#1}}}
\newcommand\superrestr[3]{{
  \left.\kern-\nulldelimiterspace 
  #1 
  \vphantom{\big|} 
  \right|_{#2}^{#3} 
  }}
\newcommand\restr[2]{{
  \left.\kern-\nulldelimiterspace 
  #1 
  \vphantom{\big|} 
  \right|_{#2} 
  }}
\newcommand{\citep}[1]{\cite{#1}}
\newcommand{\citet}[1]{\cite{#1}}
\renewcommand{\emptyset}{\varnothing}
\newcommand{\notimplies}{\centernot\implies}
\newcommand*\bigcdot{\mathpalette\bigcdot@{.7}}
\newcommand*\bigcdot@[2]{\mathbin{\vcenter{\hbox{\scalebox{#2}{$\m@th#1\bullet$}}}}}
\renewcommand{\gets}{\coloneqq}
\newcommand{\algosize}{\small}
\newcommand{\ovm}{{\sc ovm}\xspace}
\newcommand\rebut[1]{#1}
\newif\ifmoveprooftoend
\newcommand\showDeferredProofs{}
                \edef\next{\noexpand\g@addto@macro\noexpand\showDeferredProofs{%
                        \noexpand\begin{proof}[#1]\unexpanded\expandafter{\BODY}\noexpand\end{proof}}}
\BODY\end{proof}
\title{\Huge
An abstract theory of sensor eventification
}
\begin{document}
\pagestyle{empty}


\author{\authorblockN{Yulin Zhang}
\authorblockA{Amazon Robotics\authorrefmark{1}
\\North Reading, MA, USA\\
Email: zhangyl@amazon.com\\
\authorrefmark{1} This work was done prior to joining Amazon.\\[-6pt]
}
\and
\authorblockN{Dylan A. Shell}
\authorblockA{Dept. of Computer Science \& Engineering\\
Texas A\&M University\\
College Station, TX, USA.\\
Email: dshell@tamu.edu}\\[-6pt]
}

\maketitle
\begin{abstract}
Unlike traditional cameras, 
event cameras measure changes in light intensity and report differences. This
paper examines the conditions necessary for other traditional sensors to admit
eventified versions that provide adequate information
 despite outputting only changes.
The requirements depend upon the regularity of the signal space,
which we show may depend on several factors including structure arising from
the interplay of the robot and its environment, 
the input--output computation needed to achieve 
its task,
as well as the specific mode
of access (synchronous, asynchronous, polled, triggered).
\todo{P~4.2}
\rebut{Further, there are additional properties of 
stability (or
non-oscillatory behavior) that can be
desirable for a system to possess and that we show are 
also closely related to the preceding notions.
} This paper contributes theory and algorithms (plus a hardness result) that addresses these considerations while developing several elementary robot examples along the way.
\end{abstract}

\section{Introduction}

Advances in sensing technologies have the potential to disrupt the field of
robotics. Twenty-five years ago, the shift from sonar to LiDAR sensors triggered a
significant change as robots became, rather abruptly, much more capable.  Since
information enters a robot through its sensors, a change to its sensor suite
often has ramifications downstream\,---\,sometimes quite far downstream.
Accordingly, it is useful to have tools to help us understand the impact of
sensor modifications.

The last decade has seen steadily-growing interest in \emph{event cameras}, a
novel type of camera  that operates on a separate principle from traditional
devices\,\cite{lichtsteiner2008ev,event22gallego}.  These cameras afford
various new opportunities, and a growing body of work has begun exploring these
possibilities\,\cite{mueggler2017event,zhu18rss,rebecq19events,sanket21rss}. At
a high level, event cameras report changes in intensity rather than an absolute 
measurement.  These devices perform per-pixel differencing instead of operating
with entire frames\,---\,the very concept of a `frame', while crucial for
devices employing a shutter (whether rolling or global), is absent from event cameras.
Because this different principle of operation eases some hardware design
constraints, current technology allows for consumer-grade devices that, when
compared to frame-based cameras, can be much more efficient in transmitting
image data and operate at considerably higher temporal resolution with greater
dynamic range\,\cite{event22gallego}.  Not only are these performance traits
attractive for reducing motion blur, but event cameras report 
information that is important for robots in key applications: their
output naturally focuses on changes to the scene, picking up dynamic elements
within the perceived environment (e.g.,~\cite{maqueda2018event} and \cite{zhu2019unsupervised}). 

Whether event cameras will form an impetus for new sets of innovative applications, or will drive
some radical departure from existing methods, or even initiate a thorough
re-examination of the field's underpinnings---all remains to be seen.  
This paper is not about event cameras \emph{per se}. Instead, it asks the question: 

\newenvironment{myquote}[1]%
  {\list{}{\leftmargin=#1\rightmargin=#1}\item[]}%
  {\endlist}
\begin{myquote}{0.7cm}
\noindent For any sensor, say, 
of type \textcolor{darkblue}{$X$}$\;\in\{$$\textrm{compasses},$ 
 $\textrm{IMUs},$ $\textrm{LiDARs},$ $\dots \}$, is there a useful ``event \textcolor{darkblue}{$X$}'' version?
\end{myquote}

The transformation from the raw sensor into an event version, a process which we
dub \emph{eventification},\footnotemark\, involves disentangling, conceptually, several 
different facets.%
\footnotetext{A term inspired by~\cite{notomista2018persistification}.}
We introduce theory by which one can formulate the preceding question in a
meaningful way.  The present paper is an abstract treatment of the
essential properties that make event cameras interesting, expressed with reasonable
rigor, and in adequate detail to lay open some connections that were not
immediately apparent.

\medskip 
The long-term goal of this theory is to try to change the way our field  interacts with
the areas of sensor design and with signal processing researchers. Today, most roboticists are consumers: 
we see what is out there, we buy something from a catalogue, we bolt it to a robot, and then integrate it with software. Robot use-cases
(i.e., task performance) should play a greater role in informing what sensors ought to exist, what should
be designed, and how manufacturers might target roboticists.

\subsection{Related work}

Our work was inspired by the recent paper of Zardini, Spivak, Censi, and
Frazzoli~\cite{zardini21compositional}, wherein the authors provide a
compositional architecture with which they express a model of a UAV system.
That robot system has, as one specific sensor, an event camera.  Their model leads one
to ponder whether the `eventfulness' of the camera might be obtained by some
abstract transformation of a traditional camera\,---\,if so, what would such a
transformation look like? Hence the present paper, which retains some 
of the spirit of their work. That same spirit is also apparent in the important,
early paper of Tabuada, Pappas, and Lima~\cite{tabuada2004compositional} which
provides an expressive mathematical framework through which aspects of 
robotic systems' behavior can be represented and examined.\footnotemark
\footnotetext{A recent ICRA workshop~\cite{compositional21} attests to expanding
interest in such topics.}
Their work employs equivalence based on bisimulation;
the notion of output simulation we employ is similar (but known to be distinct,
cf.~\cite{rahmani2018relationship}). 
The concept of stutter bisimulation\,---where sequences may have repeated subsequences---\,was introduced and
studied in the early model checking
literature~\cite{browne1988characterizing,denicola95three}, though we are
not aware of
applications to robotics. The 
present paper can be understood as generalizing output simulation so that,
among other things, it may also treat a form of stutter.

The question of whether some sensors provide a system with a sufficiency of information has roots
in the classic notion of observability\,\cite{kalman1963mathematical,Hermann77nonlinearcontrollabilityobservability}. 
More recently, and
more directly in the robotics community, 
the subject has been related to concepts 
such as perceptual limits\,\cite{donald91perceptual}, 
information spaces\,\cite{lavalle10sensing} (originally of von Neumann and
Morgenstern), and lattices of sensors\,\cite{lavalle19lattice, zhang21lattices}.  Erdmann's
work\,\cite{erdmann95understanding} reverses the question, asking not what
information some given sensor provides, but what a (virtual) sensor ought to
provide. His action-based sensors become, then, a computational abstraction for
understanding the discriminating power needed to choose productive actions.
The idea of the discriminating power and a (virtual) sensor wrapping some
computation permeates this paper's treatment as well.
Whether some transformation undermines the ability to extract
sufficient information, especially as a model for non-idealized sensors,
appears in~\cite{setlabelrss}\,---\,a paper which we shall refer to again, later.
An important class of transformations are ones that seek to 
compress or reduce information. These fit under the umbrella of 
minimalism, an idea with a long history in robotics\,\cite{connell90,mason1993kicking}, but with
adherents of a more recent generation having a greater focus on 
algorithmic\,\cite{o2017concise} and optimization-based tools\,\cite{pacelli2019task,zhang19accelerating}.

\rebut{
\todo{P~2.4}Neuromorphic engineering,
the field that pioneered event cameras, 
is concerned with a class of devices much broader
than just cameras~\cite{liu10neuro}.
In recent years,
along with advancements in spiking-neuron and
neural computing~\cite{davies18loihi,merolla14TrueNorth, modha23NorthPole},
event-driven tactile sensors~\cite{taunyazov20event}, 
\todo{P~3.8}synthetic cochlea~\cite{liu14asynchronous},
chemical concentration and gas detection sensors~\cite{sarkar22organic,wang22bio}
have been
developed. 
}
We feel the robotics community could be better at informing sensor designers about what 
devices would be germane for robot use.

\subsection{Paper Organization with a Preview of Contributions}

The next section deals with preliminaries and begins by introducing, with some basic
notation, definitions that have mainly been established elsewhere.
Section~\ref{sec:genOS} introduces the core 
notion of substitutable behavior (Definition~\ref{def:osmod}) on which this work is based;
it takes a new and general form, subsuming and unifying two previous concepts, while affording
much greater expressive power. 
In Section~\ref{sec:differencing} this power is put to use. We give a basic 
structure, which we call an observation variator (Definition~\ref{def:diffobs}),
that is capable of reporting differences in the signal space, leading to 
the formation of a derivative. 
We pose a form of optimization
question, asking how to find a smallest variator, and then establish that minimization is
NP-hard (Theorem~\ref{thm:hardness}).
As we then show, modes of data acquisition affect the 
sensor's power, so Section~\ref{sec:shrink-and-pump} turns to this in depth,
moving beyond synchronous data flow.  The key result
(Theorem~\ref{thrm:equiv-pump-shrink}) is that polling and event-triggered
acquisition modes are equivalent to one another.
Section~\ref{sec:monoidal} considers the fully asynchronous 
data acquisition mode; doing so requires the 
variator to have additional structure (Definition~\ref{def:monvar}, a
monoidal variator). The problem, when expressed directly, appears complicated;
we construct a conceptually simpler version, and show that they are
actually equivalent (Theorem~\ref{thrm:integrate-iff-disag}).
The penultimate section motivates and examines some simple notions of stable
behavior, which ensure the sensor will not chatter.
But fortunately  chatter-free behavior can be
obtained, essentially for free, in problems of interest (Theorem~\ref{thm:compose-algos}).
Section~\ref{sec:conclusion} offers a brief summary of the paper.

Overall, the work explicates the concept of eventification,
and then identifies and explores some further connections.  With an eye toward an axiomatic theory of
sensing, some care has been exercised to be economical: additional structure 
is introduced just when actually demanded; for instance, only in Section~\ref{sec:monoidal} do
any algebraic properties make an appearance. 

\section{Background: filtering problems}
\label{sec:background}

To be analogous to event cameras,
event sensors must couple raw sensor devices (i.e., physical components and electronics for energy transduction) with some computation (e.g., signal differencing). 
Thus, our treatment will 
consider them to be units that are abstract \emph{\scdevices} (borrowing this term of Donald~\cite{donald95information}). These units implement a kind of abstract sensor (here, a term inspired by Erdmann~\cite{erdmann95understanding}).
We will use procrustean filters, a basic framework for treating (potentially
stateful) stream processing units, to model such \scdevices: 

\begin{definition}[\cite{setlabelrss}]
\label{def:scdevice}
A \defemp{\scdevice}
is a 6-tuple $(V, V_0, Y, \tau, C, c)$ in which $V$ is a non-empty finite set of states, $V_0$ is the non-empty set of initial states, $Y$ is the set of observations,
$\tau: V\times V\rightarrow \powSet{Y}$ is the transition function, $C$ is the set of outputs, and $c: V\to \powSet{C}\setminus\{\emptyset\}$ is the output function.
(We write $\powSet{A}$ to denote the powerset of set $A$.)
\end{definition}

\rebut{
\todo{P~3.2}
A \scdevice translates between streams of discrete symbols.
These objects are transition systems
for processing streams, with finite memory
(represented as the set of states) used to
track changes in sequences as they're being processed incrementally.
Acting as transducers, 
they receive a stream of observations as input, revealed one
symbol at a time, and generate one output per input symbol. 
In our setting, the observations will come from a raw sensor or after some simple post-processing; outputs, represented abstractly as colors, encode either actions (for a policy) or state estimates (for a filter).
}


The sets of states, initial states, and observations for $F$ are denoted $V(F)$, $V_0(F)$ and $Y(F)$, respectively. 
All the \scdevices throughout this paper (i.e., units modeled, in the terminology of~\cite{setlabelrss},
 via some filter $F$) will just be 
presented as a graph, with states as its vertices and transitions as directed edges bearing sets of observations.
For simplicity, for all such \devices we shall assume that $Y(F)$ is finite.
The values of the output function  will be visualized as a set of colors at each vertex, hence the naming of $C$ and $c(\cdot)$.

Given a particular \scdevice $F=(V, V_0, Y, \tau, C, c)$, an observation sequence (or a
string) $s=y_1y_2\dots y_n\in \KleeneStr{Y}$, and states $v, w \in V$, we say that $w$ is
\defemp{reached by} $s$ (or $s$ \defemp{reaches} $w$) when traced from $v$, if there exists a sequence of states $w_0,w_1,
\dots, w_{n}$ in $F$, such that $w_0 = v$, $w_n = w$, and $\forall i\in \lbrace 1, 2, \dots, n\rbrace,
y_i\in \tau(w_{i-1}, w_i)$.
(Note that $\KleeneStr{Y}$ denotes the Kleene star of $Y$.)
We let the set of all states reached by $s$ from a state $v$ in
$F$ be denoted by $\reachedvf{v}{F}{s}$ 
and denote all states reached by $s$ from any initial state of the filter with $\reachedf{F}{s}$, 
i.e., $\reachedf{F}{s} = \bigcup_{v_0 \in V_0} \reachedvf{v_0}{F}{s}$.
If $\reachedvf{v}{F}{s}=\emptyset$, then we say that string $s$ \emph{crashes} in $F$ starting from $v$.

%
We also denote the set of all strings reaching $w$ from some initial state in $F$ by $\reaching{F}{w}=\{s\in \KleeneStr{Y}| w\in\reachedf{F}{s}\}$. 
The set of all strings that do not crash in $F$ is called the \defemp{interaction language} (or, briefly, just \defemp{language}) of $F$, and is written as $\Language{F}=\{s\in \KleeneStr{Y}| \reachedf{F}{s}\neq\emptyset\}$.
We also use $\reachedc{F}{s}$ to denote the set of outputs for all states reached in $F$ by $s$, i.e., $\reachedc{F}{s}=\bigcup_{v\in \reachedf{F}{s}} c(v)$. When $s$ crashes, the vacuous union gives $\reachedc{F}{s}=\emptyset$.
Definition~\ref{def:scdevice} ensures that any $\Language{F}$ contains at least $\epsilon$, the empty string; 
 we have $\reachedc{F}{\epsilon}=\cup_{v_0\in V_0} c(v_0)$. 

We focus on \scdevices with deterministic behavior:
\begin{definition}[deterministic]
An \scdevice $F=(V, \{v_0\}, Y, \tau, C, c)$ is \defemp{deterministic} or
\defemp{state-determined}, if for every $v_1, v_2, v_3\in V$ with
$v_2\neq v_3$, $\tau(v_1, v_2)\cap \tau(v_1, v_3)=\emptyset$. Otherwise, we say
that it is \defemp{non-deterministic}.
\end{definition}
Algorithm~$2$ in \cite{saberifar18pgraph} can turn any non-deterministic \scdevice into one with an identical language but which is deterministic. 
Hence, without loss of generality, in what follows all \scdevices 
will be assumed to be deterministic.

Overwhelmingly we shall give simple examples, but one easily gains expressive
power by constructing complex \scdevices by composing more elementary ones:

\begin{definition}[direct product]
\label{def:direct-product-of-pgraph}
Given $F=(V, V_0, Y, \tau, C, c)$ and 
$F'=(V', V'_0, Y', \tau', C', c')$, then their
\defemp{direct product} is the 6-tuple 
$F \ptimes F' = (V \times V', V_0 \times {V_0}', Y \times Y', \tau_{F \times F'}, C\times C', c_{F \times F'})$ with
\begin{align*}
\tau_{F \times F'} & \colon &  (V \times V') \times (V \times V')  & \to  \powSet{Y \times Y'},\\[-2pt]
& &((v_i, v'_j), (v_k, v'_m)) & \mapsto  \tau(v_i, v_k) \times \tau'(v'_j, v'_m);\\[8pt]
c_{F \times F'} & \colon & (V \times V')  & \to  \powSet{C \times C'}\setminus\{\emptyset\},\\[-2pt]
& &(v_i, v'_j) & \mapsto c(v_i) \times c'(v'_j).  \phantom{shift left more please}\\[-12pt]
\end{align*}
\end{definition}
(Note: To save notational bloat, we shall only present pairwise products, trusting the reader will be comfortable with the obvious extension to any finite collection.)
\begin{remark}
\label{rem:lang_inclusion}
If $s_1s_2\dots s_n \in \Language{F \ptimes F'}$ then each $s_i = (y_i,y'_i)$ has $y_i\in Y(F)$ 
and $y'_i \in Y(F')$, and further $y_1y_2\dots y_n \in \Language{F}$, and $y'_1y'_2\dots y'_n \in \Language{F'}$. 
The converse, however, needn't hold: e.g., for some $y_1y_2\dots y_n \in
\Language{F}$ there may exist no $s_1s_2\dots s_n \in \Language{F \ptimes F'}$ with
$s_i = (y_i,y'_i)$.
\end{remark}

\medskip

The standard way to compare \scdevices is in terms of input-output substituability, that is, whether one can serve as an  functional replacement for another. The following expresses this idea.

%
\begin{definition}[output simulation~\cite{o2017concise}]
\label{def:stdos}
Let $F$ and $F'$ be two \scdevices, then $F'$ \defemp{output simulates} $F$ if
$\Language{F} \subseteq \Language{F'}$
and 
$\forall s\in \Language{F}:
\reachedc{F'}{s}\subseteq \reachedc{F}{s}$.
\end{definition}

\rebut{
\todo{P~3.3}
If $F'$ output simulates $F$,
the intuition is that 
then any stream of observations that $F$ can process
can also be processed effectively by $F'$; the outputs that $F'$ yield will be consistent with those $F$ could produce. In terms of functionality, $F'$ may serve as an alternative for $F$.
}

When considering $F'$ and $F$, often $F$ would be treated as providing a specification (with 
$\Language{F}$ circumscribing aspects of the world that may arise, and 
$\reachedc{F}{\cdot}$ characterizing suitable outputs); an output simulating
$F'$ realizes behavior that is acceptable under this specification.
This is because such a \scdevice $F'$ is able to handle all strings
from $F$ and yields some suitable outputs for each string.
Note that the output may be the result of some sort of 
estimation (like a combinatorial filter~\cite{lavalle10sensing}), or the output
may be a representation of an action to be executed, and so encode a policy (e.g.,\,\cite{o2017concise}).
%
%


\section{Generalized output simulation}
\label{sec:genOS}

For this paper, the point of departure is a more general notion of output
simulation.  We consider a case where one may specify some relation that
modifies the strings of one \scdevice, so that the second \device must process
strings through 
(or in the image of) 
the relation.

\begin{definition}[output simulation modulo a relation]
\label{def:osmod}
Given two \scdevices $F$ and $F'$, and 
binary relation $\rel{R} \subseteq A \times B$
we say that $F'$ \defemp{output simulates} $F$ \defemp{modulo}~$\rel{R}$, 
denoted by $\osmod{F'}{F}{\rel{R}}$, if $\forall s\in \Language{F}$: 
\begin{tightenumerate}
\item $\exists t \in \Language{F'}$ such that $s\rel{R}t$; 
\item  $\forall t \in B$ such that $s\rel{R}t$, $t \in \Language{F'}$ and $\reachedc{F}{s} \supseteq \reachedc{F'}{t}$.
\end{tightenumerate}
\quad\qquad(Notice that, as $t \in \Language{F'}$, $\reachedc{F'}{t} \neq  \emptyset$.)
\end{definition}
%

Some \scdevice $F$ is \defemp{output simulatable modulo relation $\rel{R}$}
if there exists some $F'$  
which output simulates $F$ modulo~$\rel{R}$.
More concisely, in such cases we may say that $F$ is \defemp{$\rel{R}$-simulatable}.
\rebut{
\todo{P~3.3}
When $F'$ output simulates $F$ modulo~$\rel{R}$,  intuitively, 
the streams of observations $F$ can process can also be effectively processed by $F'$ after they've been
pushed through binary relation~$\rel{R}$.
Because~$\rel{R}$ may be 1-to-1, 1-to-many, many-to-1, or many-to-many, this generalization 
gives the ability to treat several phenomena of interest.
}

\begin{remark} 
\label{rem:simplerelations}
As most relations we will use are binary relations, we'll suppress the
`binary' qualifier in that case. 
Also, when some relation is a (partial or total) function and it is clearer to express it as a map, we will write it using standard notation for functions.
\end{remark} 

\smallskip

Definition~\ref{def:osmod} is the fundamental notion of behavioral substitutability that underlies
our treatment in this paper. 
It is a non-trivial generalization of two prior concepts. 
One interesting and, as it turns out, particularly useful degree of flexibility is that the relation
$\rel{R}$ can associate strings of differing lengths.

First, the preceding definition generalizes the earlier one:

\begin{remark} 
When $\rel{R}\!=\!\rel{\mathbf{id}}$, the identity relation, then 
 Definition~\ref{def:osmod} recovers 
 Definition~\ref{def:stdos} (the standard definition of output simulation, the subject of extensive prior study).
\end{remark} 

The specific requirement that $F'$ handle at least the inputs that $F$ does, 
explicit in Definition~\ref{def:stdos}, becomes:

\begin{property}
\label{property:total}
For relation $\rel{R} \subseteq A \times B$,
a necessary condition 
for any $F$ to be $\rel{R}$-simulatable
is that \mbox{$\rel{R} \cap (\Language{F}\times B)$} be \defemp{left-total} in the sense
that for every $s \in \Language{F}$, there exist some $t$ with $s\rel{R}t$. 
(Were it otherwise for some $s \in \Language{F}$, then that string $s$ suffices to
violate condition~1 in Definition~\ref{def:osmod}.)
\end{property}

And second, for the other generalization:

\begin{remark} 
\label{rem:label-maps}
Definition~\ref{def:osmod} subsumes the ideas of \emph{sensor maps}~\cite{setlabelrss} (also called label maps). These are
functions, $h : Y \to X$, 
taking individual observation symbols to another set.
(One may model sensor non-ideality by applying
such functions; for instance, observations $y \in Y$ and $y'\in Y$ can be
conflated when $h(y) = h(y')$.) 
Specifically, sensor maps only give relations
$\rel{R}$ restricted so that any $s\rel{R}t$ 
must have  $|s| = |t|$, that is, the strings will have equal length.
\end{remark}


Given a \scdevice $F$ and general relation $\rel{R}$, the question is whether any \scdevice $F'$ exists to 
output simulate $F$ modulo $\rel{R}$.
For the particular case that $\rel{R}$ is $\rel{\mathbf{id}}$,
$F$ always output simulates itself.
This fact means that the prior work focusing on minimizing filters,
such as\,\cite{rahmani2018relationship,saberifar2017combinatorial,o2017concise},
can be reinterpreted as 
optimizing size subject to 
output simulation modulo$\,\rel{\mathbf{id}}$. 

Returning to more general relations $\rel{R}$, the
existence of a suitable $F'$ is the central question in
prior work on the destructiveness of label maps\,\cite{setlabelrss,ghasemlou2019accel}. 
General relations make the picture more complex, however, and these will be our focus as well as complex relations composed from more basic ones.

\rebut{
\todo{P~3.3}
When $\rel{R}$ is many-to-1 it models compression or conflation. Output simulation of $F$ modulo such a relation shows that $F$'s behavior is unaffected by reduction of observation fidelity, i.e., it is 
compression that is functionally lossless.
\todo{P~2.2}Relations $\rel{R}$ that are 1-to-many model noise via non-determinism. Output simulation modulo such relations show that operation is preserved under the injected 
uncertainty.
And many-to-many relations treat both aspects simultaneously.
}

Going forward, we will use the open semi-colon symbol to denote relation composition, i.e.,
{$\rel{U} \rcmp \rel{V}$} is $\{(u,v) \mid \exists r \text{ s.t.} (u,r)
\in \rel{U} \text{ and }(r,v) \in \rel{V}\}$. Beware that when both relations are
functions  (cf.\ Remark~\ref{rem:simplerelations}),
the notation unfortunately
reverses the convention for function composition, so $g \rcmp\, f =
f(g(\cdot)) = f \circ g$.
(This will arise in, for example, Problem~\ref{q:shrink}.) 

\begin{property} 
\label{prop:subsetrel}
Given \scdevices $F$ and $G$,
and left-total relations 
$\rel{U}$ and $\rel{V}$
 on $\Language{F} \times \Language{G}$,
 with  $\rel{U} \supseteq  \rel{V}$, then
$\osmod{G}{F}{\rel{U}}  \implies
\osmod{G}{F}{\rel{V}}$.
\end{property}

Hence, sub-relations formed by dropping certain elements do not cause a violation in output simulation if left-totalness is preserved. Before composing chains of relations,
we examine further the connection 
raised in Remark~\ref{rem:label-maps}.

\begin{remark}
Unlike sensor maps, the property of output simulating modulo some relation is not monotone under composition.
For sensor maps, there is a notion of irreversible destructiveness:
composition of a destructive map with any others is permanent, always resulting in a
destructive map. That theory can talk meaningfully of a feasibility
boundary in the lattice (e.g., title of \cite{ghasemlou2019accel}).
For relations, composing
additional relations can `rescue' the situation. For instance, consider
the \device $F_\text{rgb}$ in Figure~\ref{fig:small-filter-ex}.
For relation $\rel{U} = \left\{(a,p), (a,q), (b,q), (b,t)\right\}$ there can be
no $G$ that output simulates $F$ modulo $\rel{U}$ because
$q$ must either be green or blue, but can't be both.
Formally 
$\{\text{green}\} = \reachedc{F}{a} \supseteq \reachedc{G}{q}$ since $a \rel{U} q$,
and 
$\{\text{blue}\} = \reachedc{F}{b} \supseteq \reachedc{G}{q}$ since $b \rel{U} q$, and 
$\reachedc{G}{q} \neq \emptyset$.
But with $\rel{V} = \left\{(p,a), (p,a'), (t,b), (t,b')\right\}$, which is not
left-total (cf. Property~\ref{property:total}), crucially, then it
is easy to give some $G'$ so that $\osmod{G'}{F_\text{rgb}}{\rel{U} \rcmp \rel{V}}$.
One can simply take $F_\text{rgb}$ and add $a'$ and $b'$ to the edge sets with $a$ and $b$,
respectively.
\end{remark}


\begin{figure}[ht!]
\vspace*{-8pt}
    \centering
         \centering
        \includegraphics[scale=0.5]{./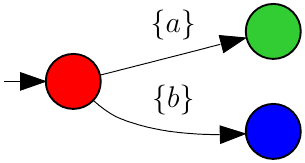}
         \caption{A small \scdevice $F_{\text{rgb}}$, with $Y(F_\text{rgb}) = \{a,b\}$,
         and $C = \{\text{red},\text{green},\text{blue}\}$.
         \label{fig:small-filter-ex}}
\end{figure}

%

Nevertheless, one may form a chain of relations:
\begin{restatable}[]{theorem}{chain}
\label{thrm:chain}
Given two relations $\rel{R_1}$ and $\rel{R_2}$, and
\scdevice $F$, 
if there exists a \scdevice $F_1$ with
$\osmod{F_1}{F}{\rel{R_1}}$ and there exists a \scdevice $F_2$ with
$\osmod{F_2}{F_1}{\rel{R_2}}$, then 
$\osmod{F_2}{F}{\rel{R_1} \rcmp \rel{R_2}}$.
\end{restatable}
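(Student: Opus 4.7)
The plan is to verify the two conditions of Definition~\ref{def:osmod} directly for the pair $(F_2,F)$ with respect to the composed relation $\rel{R_1}\rcmp\rel{R_2}$, using the given simulations modulo $\rel{R_1}$ and $\rel{R_2}$ as lemmas. Fix an arbitrary $s\in\Language{F}$; everything will be done for this $s$ and then quantified.

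For condition~1, I would chain the existential witnesses. Since $\osmod{F_1}{F}{\rel{R_1}}$, applying condition~1 of Definition~\ref{def:osmod} to $s$ produces some $t\in\Language{F_1}$ with $s\rel{R_1}t$. Now treat this $t$ as the input string for the second simulation: since $\osmod{F_2}{F_1}{\rel{R_2}}$ and $t\in\Language{F_1}$, condition~1 again yields some $u\in\Language{F_2}$ with $t\rel{R_2}u$. By the definition of relational composition, $s\,(\rel{R_1}\rcmp\rel{R_2})\,u$, giving the required witness.

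For condition~2, I would start from an arbitrary $u$ with $s\,(\rel{R_1}\rcmp\rel{R_2})\,u$ and unfold the composition: there exists some intermediate $t$ with $s\rel{R_1}t$ and $t\rel{R_2}u$. Applying condition~2 of $\osmod{F_1}{F}{\rel{R_1}}$ to the pair $(s,t)$ yields both $t\in\Language{F_1}$ and $\reachedc{F}{s}\supseteq\reachedc{F_1}{t}$. Now condition~2 of $\osmod{F_2}{F_1}{\rel{R_2}}$, applied to $(t,u)$ (which is available precisely because we just established $t\in\Language{F_1}$), yields $u\in\Language{F_2}$ and $\reachedc{F_1}{t}\supseteq\reachedc{F_2}{u}$. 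Composing the two inclusions gives $\reachedc{F}{s}\supseteq\reachedc{F_2}{u}$, completing both requirements.

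The only subtle point, and the step I would be most careful with, is the quantifier pattern in condition~2: the composed relation is ``for all $u$'', but to invoke the hypotheses I must choose an intermediate $t$ depending on $u$. This is fine because the two hypotheses are \emph{universal} in their respective second coordinates (every $t$ with $s\rel{R_1}t$ already lies in $\Language{F_1}$ and bounds $\reachedc{F}{s}$, and likewise every $u$ with $t\rel{R_2}u$ for that $t$ behaves correctly), so no coherent selection of $t$ is needed. There is no hidden length or alignment issue either, since Definition~\ref{def:osmod} imposes no such requirement on the relations. Thus the whole argument is essentially a diagram-chase through the two pairs of conditions, with no additional machinery required.
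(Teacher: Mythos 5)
Your proposal is correct and follows essentially the same route as the paper: verify the two conditions of Definition~\ref{def:osmod} directly, chaining existential witnesses through an intermediate string for condition~1 and chaining the output containments $\reachedc{F}{s}\supseteq\reachedc{F_1}{t}\supseteq\reachedc{F_2}{u}$ for condition~2. If anything, your derivation of $t\in\Language{F_1}$ straight from condition~2 of $\osmod{F_1}{F}{\rel{R_1}}$ is a slightly cleaner bookkeeping step than the paper's appeal to containments between the relations' carrier sets and the languages.
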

\begin{movableProof}
Suppose that 
$\rel{R_1} \subseteq A \times C$ and
$\rel{R_2} \subseteq D \times B$, then 
since 
$\Language{F} \subseteq A$ and $B \subseteq \Language{F_2}$, 
to establish that 
$\osmod{F_2}{F}{\rel{R_1} \rcmp \rel{R_2}}$, we 
verify the two required properties: 
1) For all $s \in \Language{F}$, 
$\exists u \in \Language{F_1}$ with $s\rel{R_1}u$, and 
since $u \in \Language{F_1}$, 
$\exists v \in \Language{F_2}$ with $u\rel{R_2}v$. But then 
$(s,v) \in \rel{R_1} \rcmp \rel{R_2}$.
2) For any $(s,v) \in \rel{R_1} \rcmp \rel{R_2}$, there exists some
$t \in C \cap D$ such that $s \rel{R_1} t$ and 
$t \rel{R_2} v$.
Since $C \subseteq \Language{F_1} \subseteq D$, 
the bridging $t \in \Language{F_1}$. 
When this pair $(s,v)$
has $s\in \Language{F}$ then
first: 
$\reachedc{F}{s}  \supseteq  \reachedc{F_1}{t}$
because 
$\osmod{F_1}{F}{\rel{R_1}}$;
second:  $\reachedc{F_1}{t} \supseteq  \reachedc{F_2}{v}$
because
$\osmod{F_2}{F_1}{\rel{R_2}}$.
Hence, $\reachedc{F}{s}  \supseteq  \reachedc{F_1}{t} \supseteq  \reachedc{F_2}{v}$, as required.
\end{movableProof}

\smallskip
Since Sections~\ref{sec:differencing} and~\ref{sec:shrink-and-pump}  will
consider particular relations that model properties
specifically related to event sensors, this theorem can be
useful when one is interested in devices under the composition of those relations.

\section{Structured observations: observation differencing}
\label{sec:differencing}

The most obvious fact about event cameras is that the phenomena they are
susceptible to (photons) impinge on some hardware apparatus (the silicon
retina) in a way which produces signals (intensity) for which differencing is a
meaningful operation.  
We talk about `events' as changes in those signals because we can
define and identify
differences (e.g., in brightness).
Thus far, our formalized signal readings are only understood to involve elements drawn
from $Y(F)$, just a set. The idea in this section is to contemplate structure in
the raw signal space that permits some sort of differencing.
Accordingly, the pair definitions that follow next.

\begin{definition}[observation variator] 
\label{def:diffobs}
An \defemp{observation variator}, or just variator, for a set of observations $Y$ is a set $\set{D}$ and
a ternary relation $\factraw{D} \subseteq Y \times \set{D} \times Y$. 
\end{definition}

Often the two will be paired: $(\set{D},\factraw{D})$. Reducing
cumbersomeness, the $\factraw{D}$ will be dropped sometimes, but understood to be
associated with $\set{D}$ and $Y(F)$ for some
\scdevice $F$.  
Anticipating some cases later, when $(y, d, y') \in \factraw{D}$ we may also write 
it as a function: $y' = \fact{D}{y}{d}$. But
beware of the fact that it may be multi-valued, and it may be partial. 

On occasion we will call $\set{D}$ the set of
\defemp{differences}, terminology which aids in interpretation but should
be thought of abstractly (as nothing ordinal or numerical has been assumed
about either the sets $Y(F)$ or $\set{D}$).

\begin{definition}[delta relation] 
\label{def:changetrans}
For a \scdevice $F$ with variator $(\set{D},\factraw{D})$,
the associated \defemp{delta relation} is 
\mbox{$\change{F}{\set{D}} \subseteq \Language{F} \times (\{\epsilon\} \cup  (Y(F) \scat \KleeneStr{\set{D}}))$}
defined as follows: 
\begin{tightenumerate}
\item[0)]  $\epsilon\change{F}{\set{D}}\epsilon$, and 
\item[1)]  $y_0\change{F}{\set{D}}y_0$, for all $y_0 \in Y(F) \cap \Language{F} $, and 
\item[2)]  $y_0 y_1 \dots y_{m} \change{F}{\set{D}} y_0 d_{1} d_{2} \dots  d_{m},
\;\text{where } 
(y_{k-1}, d_{k}, y_{k}) \in \factraw{D}$.
\end{tightenumerate}
\end{definition}


Intuitively, the interpretation is that $\factraw{D}$ tells us that $d_k$ represents a shift taking place to get to
$y_{k}$ from symbol $y_{k-1}$. (With mnemonic `difference' for $d_k$.) 

\bigskip

\begin{figure}[t]
    \hspace{-4pt}\includegraphics[width=0.49\textwidth]{./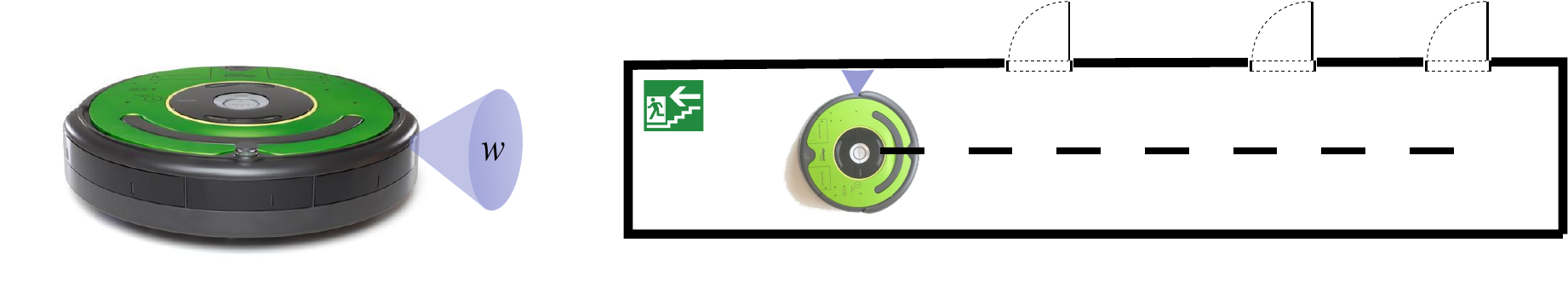}
    \caption{An iRobot Create drives down a corridor its wall sensor $w$ generating output values as it proceeds.}
    \label{fig:create-corridor}
\end{figure}

Leading immediately to the following question:

\begin{question}
\label{q:change}
For any \scdevice $F$ with variator $(\set{D},\factraw{D})$, is it $\change{F}{\set{D}}$-simulatable?
\end{question}

Given $F$ with variator $(\set{D},\factraw{D})$, we call a \device $F'$ that output simulates $F$ modulo $\change{F}{\set{D}}$ a \emph{derivative} of $F$. 
In such cases we will say $F$ has a derivative under the observation variator $\set{D}$.

\begin{figure}[h]
    \vspace*{-16pt}
    \begin{subfigure}[b]{0.44\linewidth}
        \centering
        \includegraphics[scale=0.42]{./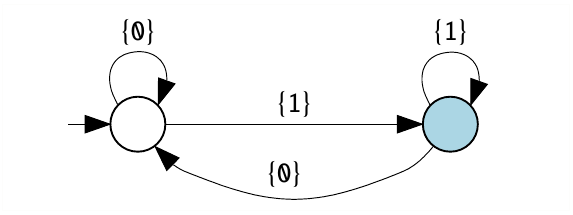}
         \caption{A \scdevice $F_{\text{wall}}$, with $Y(F_\text{wall}) = \{\texttt{0},\texttt{1}\}$; within
         the vertices, white encodes $\{\texttt{0}\}$, and azure $\{\texttt{1}\}$.
         \label{fig:wall-flip-ex}}
        \vspace*{2ex}
     \end{subfigure}
     ~
    \begin{subfigure}[b]{0.46\linewidth}
        \includegraphics[scale=0.42]{./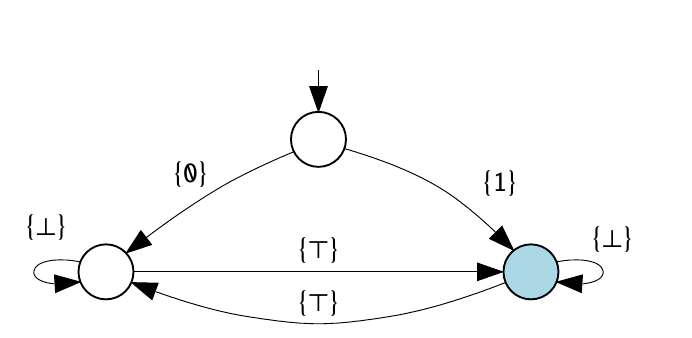}
         \caption{A \scdevice $F'_{\text{wall}}$ with observation variator is $\set{D_2} = \{\Dstay, \Dflip\}$, 
         so $\osmod{F'_{\text{wall}}}{F_{\text{wall}}}{\change{F_{\text{wall}}}{\set{D_2}}}$.
         \label{fig:wall-flip-delta}}
        \vspace*{2ex}
     \end{subfigure}
     \vspace*{-6pt}
    \caption{Two \scdevices describing the scenario in Example~\ref{ex:create-binary-beam}: (a) A model of the rather trivial transduction of the Create's wall sensor, and (b) its derivative under the observation variator $\set{D_2}$.
    \label{fig:flip}
    }
\end{figure}

\begin{example}[iRobot Create wall sensor]
\label{ex:create-binary-beam}
In Figure~\ref{fig:create-corridor} an iRobot Create moves through an environment. As it
does this, the infrared wall sensor on its port side generates a series of readings.
These readings (obtained via Sensor Packet ID: \#8, with `\texttt{0} = no wall, \texttt{1} = wall seen' \cite[pg.~22]{oi}) have binary values.
\todo{P~2.5}\rebut{
The Create's underlying hardware realizes some basic computation on the raw sensor to produce these values by thresholding luminance, either as a voltage comparison via analogue circuitry or after digital encoding. 
To cross the hardware/software interface,
the detector's binary signal is passed through 
a transducer, namely \scdevice of the form shown in Figure~\ref{fig:wall-flip-ex}.
}

A suitable observation variator is $\set{D_2} = \{\Dstay, \Dflip\}$, and 
ternary relation written in the form of a table as
$(\text{row}, \text{cell-entry}, \text{column}) \in \factraw{\set{D_2}} \subseteq \{\texttt{0},  \texttt{1}\} \times \set{D_2} \times \{\texttt{0},  \texttt{1}\}$ as:

\begin{center}
\begin{tabular}{c|c|c|}
\multicolumn{1}{c}{$\factraw{\set{D_2}}$} & \multicolumn{1}{c}{\texttt{0}}  & \multicolumn{1}{c}{\texttt{1}}  \\\cline{2-3}
    \texttt{0} & $\Dstay$ &  $\Dflip$ \\ \cline{2-3}
    \texttt{1} & $\Dflip$ &  $\Dstay$ \\ \cline{2-3}
\end{tabular}
\end{center}

Using this variator, there is a \scdevice that 
output simulates 
$F_{\text{wall}}$ modulo 
the delta relation $\change{F_{\text{wall}}}{\set{D_2}}$.
Figure~\ref{fig:wall-flip-delta} shows its derivative $F'_{\text{wall}}$.
A direct interpretation for how $\set{D_2}$ encodes the variation in the bump
signal is that $\Dflip$ indicates a flip in the signal; while $\Dstay$ makes no change. 
Also, the first item in the sequence, some element from
$Y(F_\text{wall})$, describes the offset from wall at time of initialization.
\end{example}

The preceding example, though simple, illustrates why 
we have started from the very outset by considering stateful
devices. This may have seemed somewhat peculiar because
we are treating sensors and these are seldom conceived of as
especially stateful.
An event sensor requires \emph{some} memory, and
so state is a first-class part of the model. (As already touched upon, some
authors have applied the moniker `virtual' to sensors that involve some
computational processing.) 

Especially when exploring aspects of the delta relation's definition, most of
our examples will involve very simple input--output mappings. It should
be clear that they could quickly become rather more complex. 
For instance if,  in Figure~\ref{fig:create-corridor}, the robot must tell apart odd and even 
doors, then a suitable  adaption of the sensor is easy to imagine:
the \num{2} states in Figure~\ref{fig:wall-flip-ex} become \num{4},
and the outputs involve three colors, \etc.

\newcommand{\exleft}{\textsc{\,left\,}}
\newcommand{\excent}{\textsc{\,null\,}}
\newcommand{\exright}{\textsc{\,right\,}}
\begin{example}[Lane sensor]
\label{ex:lane}
A self-driving car, shown in Figure~\ref{fig:lanechange}, moves on a
highway with three lanes. It is equipped with on-board LiDAR sensors to detect
the vehicle's current lane.  Supposing these are indexed from its right to left as
$0$, $1$ and $2$, then this lane sensor produces one of these three outputs. To
construct a sensor that reports a change in the current lane, consider the
observation variator is $(\set{D_\text{3-lane}},\factraw{\set{D_\text{3-lane}}})$ with
$\set{D_\text{3-lane}}=\{\exleft, \excent, \exright\}$ and,
$\factraw{\set{D_\text{3-lane}}}(i,d)= \min(\max(i + v(d)),0),2)$,
where $v(\exleft) = +1$, $v(\excent) = 0$, and $v(\exright) = -1$.

This observation variator will be able to transform any sequence of lane
occupations into unique lane-change signals in a \num{3}-lane road.
\end{example}

\begin{figure}[t]
    \centering
    \includegraphics[width=0.895\linewidth]{./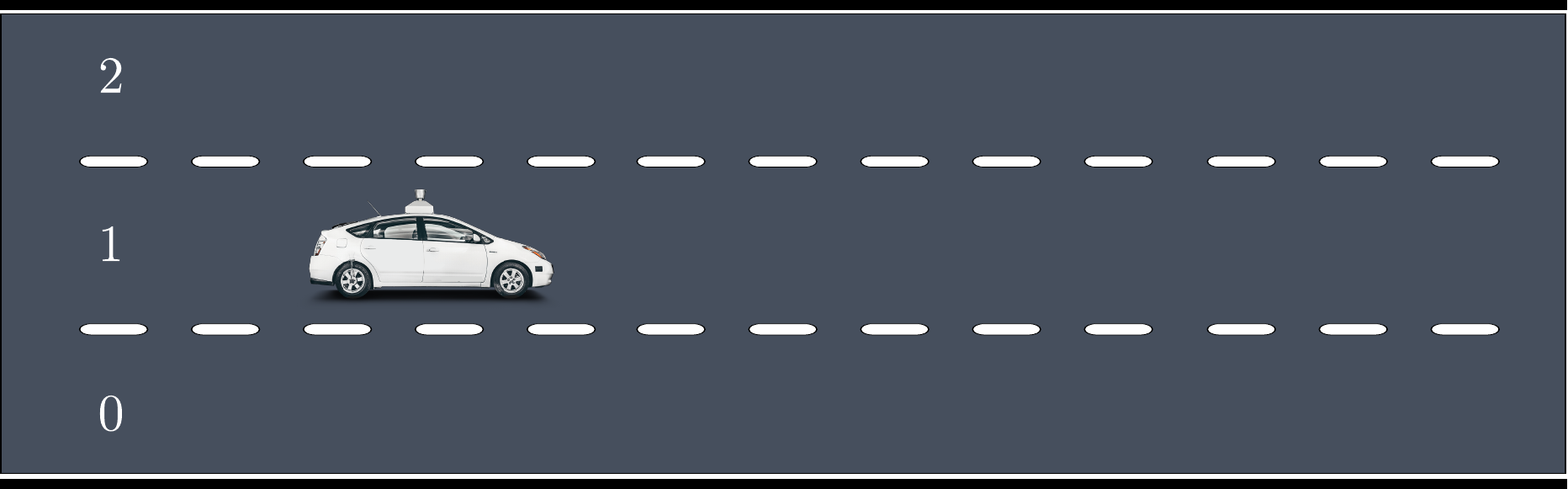}
    \caption{A self-driving car, with on-board sensors to detect whether
the vehicle changes to the left or right, or stays at the current lane.}
    \label{fig:lanechange}
    \vspace*{-10pt}
\end{figure}

\begin{figure}[b]
    \centering
    \vspace*{-10pt}
    \includegraphics[width=0.475\textwidth]{./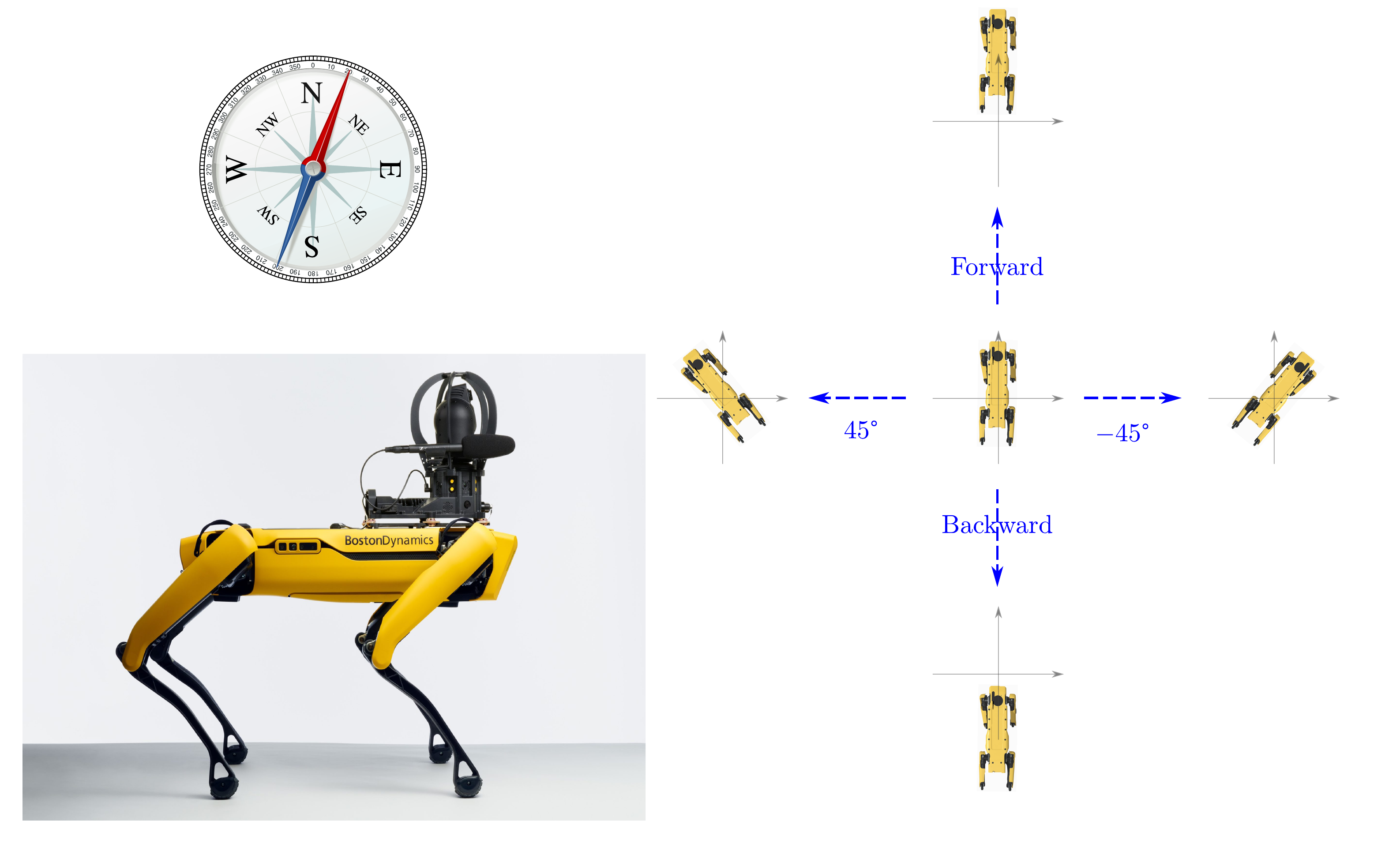}
    \caption{A Boston Dynamics Minispot quadraped  is equipped with a compass to give its heading. 
    To simplify the control challenge, the Minispot is programmed with motion primitives to step forward and backward, and to turn in place by $\pm\ang{45}$ (illustrated on the right).
    }
    \label{fig:spot-turns-45}
\end{figure}

\begin{example}[Minispot with a compass]
\label{ex:compass45}
Consider a Minispot, the Boston Dynamics quadruped robot in Figure~\ref{fig:spot-turns-45}. 
Assume that it is equipped with motion primitives that, when activated, execute
a gait cycle allowing it to move forward a step, move backward a step, or turn
in place $\pm\ang{45}$, without losing its footing.  Starting facing North,
after each motion primitive terminates, the Minispot's heading will be one of
\num{8} directions (the \num{4} cardinal plus \num{4} intercardinal ones).

Suppose the raw compass produces measurements $x \in \{\dirN, \dirNE, \dirE,
\dirSE, \dirS, \dirSW, \dirW, \dirNW\}$, then 
consider the observation variator 
$(\set{D_3},\factraw{\set{D_3}})$
with $\set{D_3}=\{\Dminus, \Dz, \Dplus\}$, where 
$(x,\Dz,x) \in \factraw{\set{D_3}}$, and
$(x,\Dplus,y) \in \factraw{\set{D_3}}$ if the angle from $x$ to $y$ is $\ang{45}$, 
and $(x,\Dminus,y) \in \factraw{\set{D_3}}$ if the angle from $x$ to $y$ is $\ang{-45}$.

Given the four motion primitives, any sequence of those actions produces 
a sequence of compass measurements that the output 
variator $\set{D_3}$ can model. The constraint implied by such sequences means
that a model of the raw compass will have a derivative
under variator~$\set{D_3}$.
\end{example}

The previous two examples show that constraints can be imposed on the signal space---in the first case  owing
to the range of lanes possible (i.e., a
saturation that arises); or via structure inherited from the control system (i.e., only some transitions are achievable). 
For both
situations, a simple \num{3} element set is sufficient only because the sequences
of changes the sensor might encounter has been limited.

\begin{example}[Minispot with a compass, revisited]
\label{ex:compass90}
Suppose the Minispot of Example~\ref{ex:compass45} has been enhanced by supplementing its
motion library with a primitive that allows it to turn in place by $\pm\ang{90}$.
Now, after each motion primitive terminates, the compass signal can include changes for
which $(\set{D_3},\factraw{\set{D_3}})$ is inadequate. 
When Definition~\ref{def:changetrans} is followed to define 
$\change{F}{\set{D_3}}$, those sequences involving 
$\ang{90}$ changes fail to find any $d_k\in \set{D_3}$, and the relation is not left-total.
Hence, via Property~\ref{property:total}, there can be no derivative.

Naturally, a more sophisticated observation variator does allow a 
derivative. 
Supposing we encode $\{\dirN, \dirNE,\dots \dirNW\}$ instead
with headings as integers $\{0, 45,\dots, 315\}$, then we might consider the
variator $(\Integers, +)$. By the plus we refer to the
function $+\colon\Integers\times\Integers \to \Integers$, the usual addition
on integers.  To meet
the requirements of Definition~\ref{def:diffobs} strictly, we ought to take
the restriction to the subset of triples (in the relation) where the first and third slots
only have elements within $\{0, 45, \dots, 315\}$. 
Then there are \devices that will output simulate modulo $\change{F}{\Integers}$.
\end{example}

The preceding illustrates how, for
a sufficient observation variator, $(\set{D},\factraw{D})$, 
some aspect of the signal's variability is
expressed in the cardinality of $\set{D}$.
Thus, seeking a small (or the smallest possible) $\set{D}$ will be instructive; employing the whole kitchen sink, as was done with the integers above, fails to pinpoint the necessary information.
Also, having stated that some variators are sufficient and touched upon Property~\ref{property:total}, the following remark is in order.

\begin{remark}
\label{remark:partial}
Despite Property~\ref{property:total}, one does not require that $\factraw{D}$ be
left-total for Question~\ref{q:change} to have an affirmative answer. For instance, some pairs of $y$ and $y'$ may never appear sequentially in
strings in $\Language{F}$, and so $\factraw{D}$ needn't have any triples
with $y$ and $y'$ together.
(Though, obviously, the left-totalness of $\change{F}{\set{D}}$ is required.)
\end{remark}

\begin{proposition}
\label{prop:function}
A sufficient condition 
for an affirmative answer to Question~\ref{q:change} 
is that 
$y' = \fact{D}{y}{d}$ be a single-valued partial function
whose 
${\change{F}{\set{D}}}$ is left-total. 
More explicitly, the requirement on 
$\fact{D}{y}{d}$ is
\[(y, d, y') \in \factraw{D} \text{ and } (y, d, y'') \in \factraw{D} \implies y' = y''.\]
\end{proposition}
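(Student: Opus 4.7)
The plan is to prove the proposition constructively: given $F = (V, V_0, Y, \tau, C, c)$ with a variator $(\set{D}, \factraw{D})$ satisfying the single-valuedness and left-totalness hypotheses, I will exhibit an explicit derivative $F'$ and verify the two conditions of Definition~\ref{def:osmod}. The key idea is that single-valuedness of $\fact{D}{y}{d}$ lets us unambiguously reconstruct the observation sequence of $F$ from an initial symbol plus a string of deltas, and we simply bake that reconstruction into $F'$'s state.

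The construction I would use takes
\[
V(F') = V_0 \;\sqcup\; \bigl(V \times Y\bigr), \qquad V_0(F') = V_0, \qquad Y(F') = Y \sqcup \set{D},
\]
with outputs $c'(v_0) = c(v_0)$ for $v_0 \in V_0$ and $c'((w, y)) = c(w)$ otherwise, and transitions defined in two families: (i) from an initial state $v_0 \in V_0$ on input $y \in Y$ to $(w, y)$ whenever $y \in \tau(v_0, w)$; and (ii) from $(v, y)$ on input $d \in \set{D}$ to $(w, y')$ whenever $y' = \fact{D}{y}{d}$ is defined and $y' \in \tau(v, w)$. Intuitively, $F'$ carries the most recent observation in its state to serve as the anchor against which the next delta is interpreted, and otherwise tracks $F$ step-by-step on the reconstructed observation sequence. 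Determinism of $F'$ follows from determinism of $F$ together with the single-valued partial function hypothesis on $\fact{D}{y}{\cdot}$.

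To verify output simulation, I would proceed by induction on the length of $t$. For $t = \epsilon$ and $t = y_0$ both cases follow by inspection from the initial clauses (0) and (1) of Definition~\ref{def:changetrans} plus the definition of $V_0(F')$ and the first family of transitions. For the inductive step, consider $s = y_0 y_1 \cdots y_m \in \Language{F}$ with $s\,\change{F}{\set{D}}\,t$, so $t = y_0 d_1 \cdots d_m$ with $(y_{k-1}, d_k, y_k) \in \factraw{D}$. The induction hypothesis gives that tracing $y_0 d_1 \cdots d_{m-1}$ in $F'$ from $V_0(F')$ lands precisely in the states $\{(w, y_{m-1}) : w \in \reachedf{F}{y_0 \cdots y_{m-1}}\}$; then by single-valuedness the transition on $d_m$ yields $y_m$ as the anchor and mimics $F$'s transition on $y_m$ exactly, so $\reachedf{F'}{t} = \{(w, y_m) : w \in \reachedf{F}{s}\}$. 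Taking outputs gives $\reachedc{F'}{t} = \reachedc{F}{s}$, which is the required $\supseteq$. For the first condition of Definition~\ref{def:osmod}, I would invoke the hypothesized left-totalness of $\change{F}{\set{D}}$ directly: every $s \in \Language{F}$ has \emph{some} $t$ with $s\,\change{F}{\set{D}}\,t$, and the argument above shows any such $t$ lies in $\Language{F'}$.

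The main obstacle is bookkeeping in the inductive step: I need to confirm that the second coordinate of the state reached in $F'$ always matches the freshly derived observation $y_k$, so that the next delta $d_{k+1}$ is interpreted against the correct anchor. This is exactly where single-valuedness is essential\,---\,without it, the transition out of $(v, y_{k-1})$ on $d_k$ could land in several $(w, y_k)$ with incompatible second coordinates, breaking the correspondence with $F$'s unique trajectory. A minor subtlety, worth a brief remark, is that partial-ness of $\fact{D}{y}{\cdot}$ is harmless: undefined pairs simply correspond to transitions absent from $F'$, and because $\change{F}{\set{D}}$ is assumed left-total on $\Language{F}$, such gaps never obstruct processing of any $t$ that actually arises (echoing Remark~\ref{remark:partial}).
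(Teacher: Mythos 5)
Your proposal is correct, and it takes a genuinely more self-contained route than the paper. The paper's own argument is essentially an information-preservation claim: using left-totality and the single-valuedness hypothesis it shows that two distinct observation strings can never share an image under $\change{F}{\set{D}}$ (by inducting $y_{i+1} = \fact{D}{y_i}{x_{i+1}} = y'_{i+1}$ from the common first symbol), concludes ``no conflation of input occurs,'' and then explicitly stops short of exhibiting a \scdevice, deferring the finite-state construction to Algorithm~\ref{alg:variation} (see the ``Proposition~\ref{prop:function} (Revisited)'' remark, whose correctness rests on Lemma~\ref{lm:changeC}). You instead build the derivative directly: the state space $V_0 \sqcup (V \times Y)$ carries the last reconstructed observation as an anchor, the transition on $d$ applies $\fact{D}{y}{d}$ and then mimics $F$, and your induction yields the exact identity $\reachedc{F'}{t} = \reachedc{F}{s}$ (stronger than the required inclusion), with condition~1 of Definition~\ref{def:osmod} supplied by the assumed left-totality of $\change{F}{\set{D}}$; single-valuedness is used exactly where the paper uses it, to make the anchor update unambiguous and, as a bonus, to keep $F'$ deterministic. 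What your approach buys is a standalone, fully verified proof that does not lean on the general algorithmic machinery; what the paper's buys is brevity and reuse, since Algorithm~\ref{alg:variation} also covers the harder case where distinct input strings are conflated but share compatible outputs. The only caveat worth a sentence if you wrote this up: if $\set{D}$ is infinite your $Y(F') = Y \sqcup \set{D}$ strains the paper's standing finiteness convention on observation sets, but the paper's deferred construction makes the same finiteness assumption for $\set{D}$, so this is not a defect relative to the original argument.
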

\begin{movableProof}
Since ${\change{F}{\set{D}}}$ is left-total, every string in $\Language{F}$ maps to a non-empty set of sequences. The
map preserves sequence length. Then,
we show that if 
$y_0 y_1 \dots y_{m}\change{F}{\set{D}} x_0 x_{1}\dots  x_{m}$
and
$y'_0 y'_1 \dots y'_{m}\change{F}{\set{D}} x_0 x_{1}\dots  x_{m}$
then each $y_i = y'_i$. 
Since, $y_0 = x_0 = y'_0$,  apply $y_{i+1} = \fact{D}{y_i}{x_{i+1}} = y'_{i+1}$ inductively.
No conflation of input occurs, so no information is lost. 
While intuitive,  it remains to show that the necessary translation can be effected by a
finite-state
\scdevice operating symbol-by-symbol. (We revisit the explicit construction below
once some algorithms have been presented.)
\end{movableProof}

Placing stronger constraints solely on the variator, 
we obtain another sufficient condition:

\begin{proposition}
\label{prop:injective-f-act}
For $(\set{D},\factraw{D})$, if for every $y, y' \in Y(F)$, there exists a unique $d \in \set{D}$ so 
$y' = \fact{D}{y}{d}$ then Question~\ref{q:change}'s answer is affirmative.
\end{proposition}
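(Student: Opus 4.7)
My plan is to reduce the claim directly to Proposition~\ref{prop:function} by showing that both of its hypotheses---single-valuedness of $\fact{D}{y}{d}$ as a partial function, and left-totalness of $\change{F}{\set{D}}$---are implied by the stronger condition assumed here.

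First I would verify left-totalness of $\change{F}{\set{D}}$. For any $s = y_0 y_1 \ldots y_m \in \Language{F}$, the existence part of the hypothesis supplies, for each consecutive pair $(y_{k-1}, y_k) \in Y(F) \times Y(F)$, some $d_k \in \set{D}$ with $(y_{k-1}, d_k, y_k) \in \factraw{D}$. Concatenating these with the leading $y_0$ yields $t = y_0 d_1 d_2 \ldots d_m$ with $s \change{F}{\set{D}} t$ in the sense of Definition~\ref{def:changetrans}, so $\change{F}{\set{D}}$ is left-total. (Note that this step only uses the existence clause; cf.\ Remark~\ref{remark:partial}.)

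Next I would argue that $\fact{D}{y}{d}$ is a single-valued partial function. Suppose for contradiction that $(y, d, y'_1), (y, d, y'_2) \in \factraw{D}$ with $y'_1 \neq y'_2$. Then the single element $d$ is an existence witness for both the pair $(y, y'_1)$ and the pair $(y, y'_2)$. Reading the uniqueness clause as designating, for each pair in $Y(F) \times Y(F)$, a single element of $\set{D}$ as its witness, no $d$ can simultaneously play that role for two distinct pairs (otherwise the witnessing assignment fails to be a well-defined map on pairs). Hence $y'_1 = y'_2$, and $\fact{D}{y}{d}$ is single-valued.

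Both hypotheses of Proposition~\ref{prop:function} are then in hand, and invoking that proposition directly yields a \scdevice $F'$ output simulating $F$ modulo $\change{F}{\set{D}}$. The main obstacle is the interpretive step of the second part, namely extracting single-valuedness of $\fact{D}{y}{d}$ from the uniqueness clause; once that is settled, the reduction is mechanical and the explicit construction of $F'$ (carrying states of the form $V(F) \times (Y(F) \cup \{\star\})$ so as to remember the most recent observation and hence decode each incoming difference) proceeds just as in the construction referenced at the end of the proof of Proposition~\ref{prop:function}.
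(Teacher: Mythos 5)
Your reduction plan and its first half match the paper: the paper's entire proof is the observation that the hypothesis makes $\change{F}{\set{D}}$ left-total, leaning implicitly on the no-conflation argument of Proposition~\ref{prop:function}, and your left-totalness argument is fine. The genuine gap is the second step, where you extract single-valuedness of $\fact{D}{y}{d}$ from the uniqueness clause. The uniqueness asserted is uniqueness of the witness $d$ for each fixed pair $(y,y')$: it says the assignment $(y,y')\mapsto d$ is a well-defined map, and well-definedness does not forbid two distinct pairs $(y,y'_1)$ and $(y,y'_2)$ from sharing the same unique witness $d$. Your parenthetical claim that this would make the witnessing assignment ``fail to be a well-defined map'' conflates well-definedness with injectivity; from $(y,d,y'_1),(y,d,y'_2)\in\factraw{D}$ you cannot conclude $y'_1=y'_2$.

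The failure is not merely formal. Take $\set{D}=\{d\}$ and $\factraw{D}=Y(F)\times\{d\}\times Y(F)$: every pair then has a (trivially unique) witness, so the stated hypothesis holds verbatim, yet $\fact{D}{y}{d}$ is maximally multi-valued and $\change{F}{\set{D}}$ sends every length-$(m{+}1)$ string beginning with $y_0$ to the single image $y_0 d\cdots d$. If $\Language{F}$ contains, say, $aa$ and $ab$ with disjoint outputs, condition~2 of Definition~\ref{def:osmod} forces $\reachedc{F'}{ad}$ to be a nonempty subset of $\reachedc{F}{aa}\cap\reachedc{F}{ab}=\emptyset$, so no derivative exists. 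Hence the single-valuedness you need cannot be derived from the uniqueness clause; it has to be imported as part of the intended reading of the hypothesis---e.g., that for each $y$ the map $d\mapsto\fact{D}{y}{d}$ is a single-valued function, so that ``a unique $d$ for every $(y,y')$'' makes it a bijection onto $Y(F)$, which is exactly the setting in which Proposition~\ref{prop:group} later invokes this result. With that reading your reduction to Proposition~\ref{prop:function} goes through as you describe (and coincides with what the paper's terse proof silently assumes); without it, the bridge step fails.
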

\begin{movableProof}
One easily establishes that ${\change{F}{\set{D}}}$ is left-total.
\end{movableProof}

\rebut{
\todo{P~3.5}
The two previous propositions, while straightforward `closed-form' sufficient conditions, make demands which seldom hold for realistic sensors. For instance,
$\factraw{D}$ may be multi-valued in order to model noise.
The complete answer for any $\factraw{D}$,
deferred until Section~\ref{answer:q1},
does appear in 
Lemma~\ref{lm:changeC}.
}

\bigskip

But first, a camera as an example is, of course, long overdue.

\begin{example}[single-pixel camera]
\label{ex:single-pixel-camera}
\newcommand{\exbits}{8}
\newcommand{\exverts}{256}
\newcommand{\exrange}{255}
A single-channel, \num{\exbits}-bit, single-pixel camera is a device that returns reading in the range $Y = \{0, \dots, \exrange\}$ at any point in time.
We might model such a camera via a \scdevice $F_\text{cam}$ that does no state-based computation: have it use \num{\exverts} vertices $V(F_\text{cam}) = \{v_0, \dots, v_{\exrange}\}$, 
and outputs $C_\text{cam}  = \{o_0, \dots, c_{\exrange}\}$, so that $c(v_i) =\{o_i\}$, and transitions which consider only the last value $\tau_\text{cam}(v_i, v_j)
= \{j\}$.  

For the observation variator, we again can
use standard integers $(\Integers, +)$.   Now we restrict to the subset of
triples where the first and third slots only have elements within $\{0, \dots,
\exrange\}$; the second slot then clearly only has elements
$\{-\exrange,\dots, \exrange\}$.

The \device $F_\text{cam}$ with variator $(\Integers, +)$ (or the restriction described) is $\change{F}{\set{D}}$-simulatable because
a derivative $F'_\text{cam}$ can be constructed for it. 
\end{example}

The integers used to model observation changes for the pixel intensities
differ from those with the compass bearings: that is, the elements that remain
after $+\colon\Integers\times\Integers \to \Integers$ is restricted in
Example~\ref{ex:single-pixel-camera} and Example~\ref{ex:compass90} have
different structure. We shall revisit this.

Rather more obviously, a single-pixel camera has only limited applicability. 
Next, we consider how to scale up.

\subsection{Modeling complex sensors}

We start with a definition that allows aggregation of output variators.

\begin{definition}[direct product variator]
\label{def:direct-product-of-variator}
Given $(\set{D_1}, \factraw{D_1})$ as an output variator for  $F_1$, and $(\set{D_2}, \factraw{D_2})$ for  $F_2$, then
the set $\set{D_{1\times 2}} = \set{D_1} \times \set{D_2}$
and $\factraw{D_{1\times 2}}$ defined via
\begin{align}
\left((y_1,y_2), (d_1,d_2), (y'_1, y'_2)\right)\in\!\factraw{D_{1\times 2}}\!\!\!\Longleftrightarrow &
    \, (y_1, d_1, y'_1)\!\in\!\factraw{D_1} \,\wedge \nonumber\\  
    & \;\;(y_2, d_2, y'_2)\!\in\!\factraw{D_2}
    \label{eq:variator_prod}
\end{align}
is an observation variator for $F_1 \ptimes F_2$, and is termed
the \defemp{direct product variator}.
\end{definition}

\begin{example}[iRobot Create bump sensors]
\label{ex:create-binary-bump}
Recalling the iRobot Create of Example~\ref{ex:create-binary-beam}, these
robots have left and right bump sensors, both of which provide binary values
(obtained via Sensor Packet ID: \#7 (bits 0 and 1, right and left,
respectively, encoding `\texttt{0} = no bump, \texttt{1} = bump'
\cite[pg.~22]{oi}). As these are binary streams, just like the wall sensor,
they can each be transformed with variator $(\set{D_2}=\{\Dstay,
\Dflip\},\factraw{D_2})$ that tracks bit flips.
And to track both, one constructs $\{\Dstay, \Dflip\} \times \{\Dstay,
\Dflip\}$, and the ternary relation $\factraw{D_{2 \times 2}}$.
In this way, a $d_i = (\Dstay,\Dstay)$ would indicate that neither
bump sensor's state has changed since previously.
\end{example}

\begin{example}[1080p camera]
\label{ex:big-pixel-camera}
A conventional 1080p camera has \num{3} channels at a resolution of \num{1920}
$\times$ \num{1080}. Using 
the $F_\text{cam}$ of Example~\ref{ex:single-pixel-camera}, 
apply the direct product (of Definition~\ref{def:direct-product-of-pgraph})
to form an aggregate device, and the direct product (of
Definition~\ref{def:direct-product-of-variator}) to 
the variator $(\Integers, +)$.
The triple relation is large (with $~4.0\times10^{11}$ elements).
\end{example}

In order for this camera, assembled from the single-pixel ones, to not be unwieldy we must show how
output simulation also aggregates. We do this in Proposition~\ref{prop:product-OS-mod-length-compatible}, but 
need the following definition first.

\begin{definition}[length compatible relations]
\label{def:lengthcompatible}
A pair of relations, $\rel{R_1}$ and $\rel{R_2}$, 
each on sets of sequences $\rel{R_1} \subseteq A \times B$ (with $A \subseteq \KleeneStr{\Sigma_a}$, $B \subseteq \KleeneStr{\Sigma_b}$),
$\rel{R_2} \subseteq C \times E$ (with $C \subseteq \KleeneStr{\Sigma_c}$, $E \subseteq \KleeneStr{\Sigma_e}$)
are \defemp{length compatible} if 
for every $a \in A$ and $c\in C$ with the same length 
(i.e., $|a| = |c|$), 
there exists some $b$ and $e$ of identical length 
($|b| = |e|$) 
with $a\rel{R_1}b$ and $c\rel{R_2}e$.
\end{definition}

In other words, when we consider the image of any sequence under $\rel{R_1}$, along with the image of any sequence of identical length under $\rel{R_2}$, both 
contain \emph{some} pair of common-lengthed sequences.

We now express the property of interest.

\begin{proposition}[product output simulation]
\label{prop:product-OS-mod-length-compatible}
Given \scdevices $F_1$ and $F_2$ which are 
output simulatable modulo length compatible relations $\rel{R_1} \subseteq A \times B$ 
and $\rel{R_2} \subseteq C \times E$, respectively, then
\device $F_1 \ptimes F_2$ is 
output simulatable modulo
relation $\rel{R_{1\times 2}}$ defined via:
\begin{align}
\big((a_1,c_1)\dots (a_n,c_n)\big) &\rel{R_{1\times 2}} \big((b_1,e_1)\dots (b_m,e_m)\big) \nonumber\\
                                        & \big\Updownarrow  \label{eq:relation_prod} \\
\left( a_1\dots a_n\right) \rel{R_1} \left(b_1,\dots, b_m\right)\;&\wedge \; \left( c_1\dots c_n\right) \rel{R_2} \left(e_1,\dots, e_m\right)\nonumber.
\end{align}
\end{proposition}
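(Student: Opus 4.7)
The plan is to verify the two conditions of Definition~\ref{def:osmod} directly, by constructing a witness from the given output-simulating devices $F'_1$ (with $\osmod{F'_1}{F_1}{\rel{R_1}}$) and $F'_2$ (with $\osmod{F'_2}{F_2}{\rel{R_2}}$), and showing that $F'_1 \ptimes F'_2$ works as the output simulator of $F_1 \ptimes F_2$ modulo $\rel{R_{1\times 2}}$.

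First I would take any $s = (a_1, c_1)\dots(a_n, c_n) \in \Language{F_1 \ptimes F_2}$ and apply Remark~\ref{rem:lang_inclusion} to extract $a = a_1\dots a_n \in \Language{F_1}$ and $c = c_1\dots c_n \in \Language{F_2}$ (automatically of equal length). Each output simulation then yields \emph{some} $b \in \Language{F'_1}$ with $a \rel{R_1} b$ and \emph{some} $e \in \Language{F'_2}$ with $c \rel{R_2} e$ (condition~1 of each hypothesis). These $b$ and $e$ need not have matching length, and this is where length compatibility (Definition~\ref{def:lengthcompatible}) is indispensable: it guarantees the existence of $b, e$ with $|b| = |e|$ related as above. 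The expected main obstacle is exactly this step, since without the length-matching hypothesis the very object $(b_1, e_1)\dots(b_m, e_m)$ required by the definition of $\rel{R_{1\times 2}}$ in~(\ref{eq:relation_prod}) is not even well-formed.

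Having obtained matched-length $b$ and $e$, I would show that $t = (b_1, e_1)\dots(b_m, e_m) \in \Language{F'_1 \ptimes F'_2}$. This is a direct consequence of the construction in Definition~\ref{def:direct-product-of-pgraph}: if $b$ traces in $F'_1$ from an initial state $v_0$ through states $v_0, v_1, \dots, v_m$ and $e$ traces in $F'_2$ from $v'_0$ through $v'_0, v'_1, \dots, v'_m$, then the pair trace $(v_0, v'_0), \dots, (v_m, v'_m)$ starts at an initial state of the direct product, and each transition label $\tau(v_{k-1}, v_k) \times \tau'(v'_{k-1}, v'_k)$ contains $(b_k, e_k)$. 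Thus $s \rel{R_{1\times 2}} t$ with $t \in \Language{F'_1 \ptimes F'_2}$, discharging condition~1 of Definition~\ref{def:osmod}.

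For condition~2, I would take any $t = (b_1, e_1)\dots(b_m, e_m)$ with $s \rel{R_{1\times 2}} t$. Unpacking~(\ref{eq:relation_prod}) gives $a \rel{R_1} b$ and $c \rel{R_2} e$, and the very act of pairing forces $|b| = |e|$. From condition~2 of each output simulation hypothesis I get $b \in \Language{F'_1}$, $e \in \Language{F'_2}$, and the output containments $\reachedc{F_1}{a} \supseteq \reachedc{F'_1}{b}$ and $\reachedc{F_2}{c} \supseteq \reachedc{F'_2}{e}$. The argument of the previous paragraph again places $t \in \Language{F'_1 \ptimes F'_2}$. Finally, the direct-product output function $c_{F \times F'}(v, v') = c(v) \times c'(v')$ makes $\reachedc{F_1 \ptimes F_2}{s} = \reachedc{F_1}{a} \times \reachedc{F_2}{c}$ and similarly $\reachedc{F'_1 \ptimes F'_2}{t} = \reachedc{F'_1}{b} \times \reachedc{F'_2}{e}$, so the individual containments multiply to yield $\reachedc{F_1 \ptimes F_2}{s} \supseteq \reachedc{F'_1 \ptimes F'_2}{t}$, completing the verification.
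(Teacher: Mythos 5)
Your proposal is correct and takes essentially the same route as the paper: form the product $F'_1 \ptimes F'_2$ of the two output simulators, use length compatibility to obtain a common-length pair for condition~1, and use the factorization of reached outputs over the direct product for condition~2. The one small nuance is that the length-matched $b$ and $e$ belong to $\Language{F'_1}$ and $\Language{F'_2}$ by condition~2 of the respective hypotheses (length compatibility alone only supplies the related pair), a fact you use implicitly when tracing $t$ in the product — the paper's proof is equally terse on this point.
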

\begin{movableProof}
There must exist $G_1$ and $G_2$ with
$\osmod{G_1}{F_1}{\rel{R_1}}$, and
$\osmod{G_2}{F_2}{\rel{R_2}}$. 
Then if we form $G_1 \ptimes G_2$,
we see $\osmod{G_1 \ptimes G_2}{F_1 \ptimes F_2}{\rel{R_{1\times 2}}}$, which is
verified via the conditions of 
Definition~\ref{def:osmod}; if $s_1s_2\dots s_n  \in \Language{F_1 \ptimes
F_2}$,  then each $s_i = (a_i,c_i)$ with
$a_1a_2\dots a_n \in \Language{F_1}$, 
and $c_1c_2\dots c_n \in \Language{F_2}$. 
1) For all such $s_1s_2\dots s_n$ then 
there exist $b_1b_2\dots b_m \in \Language{G_1}$ and $e_1e_2\dots e_m \in \Language{G_2}$,
of identical length, with $a_1a_2\dots a_n \rel{R_1} b_1b_2\dots b_m$ and 
with $c_1c_2\dots c_n \rel{R_2} e_1e_2\dots e_m$. 
Then sequence $t_1t_2\dots t_m$, where for each $k\in\{1,\dots, m\}$ we take $t_k = (b_k,e_k)$,
is $s_1s_2\dots s_n {\rel{R_{1\times 2}}} t_1t_2\dots t_m$.
As required, $t_1t_2\dots t_m \in \Language{G_1 \ptimes G_2}$ because
neither $u_1u_2\dots u_m$ crashes on $G_1$, nor $v_1v_2\dots v_m$ on $G_2$, and simply pairing the respective states visited in sequence gives the states
visited in $G_1 \ptimes G_2$.
2) For any $(b_1,e_1)(b_2,e_2)\dots (b_m,e_m) \in \Language{G_1 \ptimes G_2}$ with $(a_1,c_1)(a_2,c_2)\dots (a_n,c_n)\rel{R_{1 \times 2}}(b_1,e_1)(b_2,e_2)\dots (b_m,e_m)$,~the
$\reachedc{G_1 \ptimes G_2}{(b_1,e_1)(b_2,e_2)\dots (b_m,e_m)} = \reachedc{G_1}{b_1b_2\dots b_m} \times \reachedc{G_2}{e_1e_2\dots e_m}$.
But $\reachedc{G_1}{b_1\dots b_m} \subseteq  \reachedc{F_1}{a_1\dots a_n}$, 
and, similarly,
$\reachedc{G_2}{e_1\dots e_m} \subseteq  \reachedc{F_2}{c_1\dots c_n}$. 
Hence, $\reachedc{G_1}{b_1\dots b_m} \times \reachedc{G_2}{e_1\dots e_m} \subseteq
\reachedc{F_1}{a_1\dots a_n} \times \reachedc{F_2}{c_1\dots c_n} = \reachedc{F_1 \ptimes F_2}{(a_1,c_1)(a_2,c_2)\dots (a_n,c_n)}$. 
\end{movableProof}

\begin{corollary}
\label{corollary:delta-products}
Given \scdevices $F_1$ and $F_2$, with variators $(\set{D_1},\factraw{D_1})$ and $(\set{D_2},\factraw{D_2})$, a sufficient condition for direct product $F_1 \ptimes F_2$, with
direct product variator $(\set{D_{1\times 2}},\factraw{D_{1\times 2}})$,
to possess a derivative is that $F_1$ and $F_2$ do.
\end{corollary}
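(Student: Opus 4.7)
The plan is to invoke the product output simulation result, Proposition~\ref{prop:product-OS-mod-length-compatible}, taking $\rel{R_1} = \change{F_1}{\set{D_1}}$ and $\rel{R_2} = \change{F_2}{\set{D_2}}$. Since by hypothesis $F_1$ and $F_2$ possess derivatives, there exist $G_1$ with $\osmod{G_1}{F_1}{\change{F_1}{\set{D_1}}}$ and $G_2$ with $\osmod{G_2}{F_2}{\change{F_2}{\set{D_2}}}$, which is precisely what that proposition needs. What remains is to verify the length-compatibility hypothesis, and to identify the conclusion's relation $\rel{R_{1\times 2}}$ with the delta relation $\change{F_1 \ptimes F_2}{\set{D_{1 \times 2}}}$ on the product device and product variator.

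Length compatibility (Definition~\ref{def:lengthcompatible}) will be immediate: inspecting Definition~\ref{def:changetrans}, every non-empty input $y_0 y_1 \dots y_m$ under a delta relation is mapped only to sequences $y_0 d_1 \dots d_m$ of the same length $m{+}1$, and $\epsilon$ maps only to $\epsilon$. Therefore, given two equal-length inputs, their images under $\change{F_1}{\set{D_1}}$ and $\change{F_2}{\set{D_2}}$ automatically contain pairs of equal-length outputs (in fact, all outputs have the same length as the input). Applying the proposition then yields $\osmod{G_1 \ptimes G_2}{F_1 \ptimes F_2}{\rel{R_{1 \times 2}}}$ with $\rel{R_{1 \times 2}}$ as defined by~(\ref{eq:relation_prod}).

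The main obstacle, though not a difficult one, is verifying that the relation $\rel{R_{1 \times 2}}$ produced by the proposition coincides with $\change{F_1 \ptimes F_2}{\set{D_{1 \times 2}}}$ when the former is restricted to the domain $\Language{F_1 \ptimes F_2}$. Unpacking: a sequence of pairs $(y_1,z_1)\dots(y_{m+1},z_{m+1})$ is related by $\change{F_1 \ptimes F_2}{\set{D_{1 \times 2}}}$ to $(y_1,z_1)(d_1,e_1)\dots(d_m,e_m)$ exactly when each $\big((y_i,z_i),(d_{i+1},e_{i+1}),(y_{i+1},z_{i+1})\big)\in\factraw{D_{1 \times 2}}$; by~(\ref{eq:variator_prod}) this splits into $(y_i,d_{i+1},y_{i+1})\in\factraw{D_1}$ and $(z_i,e_{i+1},z_{i+1})\in\factraw{D_2}$, which, read componentwise, is exactly $\rel{R_{1 \times 2}}$. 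The only subtlety is Remark~\ref{rem:lang_inclusion}: $\Language{F_1 \ptimes F_2}$ may be strictly smaller than $\Language{F_1}\times\Language{F_2}$, so the $\rel{R_{1 \times 2}}$ coming out of the proposition is \emph{a priori} defined on a larger domain. This is handled by restricting to $\Language{F_1 \ptimes F_2} \times B$ for the appropriate $B$ and invoking Property~\ref{prop:subsetrel}, provided left-totality is preserved on the smaller domain; left-totality follows because for every $(y_i,z_i)(y_{i+1},z_{i+1})$ appearing along a run of $F_1 \ptimes F_2$, the component pairs $(y_i,y_{i+1})$ and $(z_i,z_{i+1})$ appear in runs of $F_1$ and $F_2$ respectively, so the left-totality of each $\change{F_j}{\set{D_j}}$ (a standing precondition for $F_j$ to have a derivative, by Property~\ref{property:total}) furnishes a suitable difference in each coordinate. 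Hence $G_1 \ptimes G_2$ is a derivative of $F_1 \ptimes F_2$ under the direct product variator.
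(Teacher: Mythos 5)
Your proposal is correct and follows essentially the same route as the paper's own proof: invoke Proposition~\ref{prop:product-OS-mod-length-compatible} with $\rel{R_1} = \change{F_1}{\set{D_1}}$, $\rel{R_2} = \change{F_2}{\set{D_2}}$, get length compatibility from the fact that delta relations preserve length, and then identify $\rel{R_{1\times 2}}$ from~(\ref{eq:relation_prod}) with $\change{F_1 \ptimes F_2\,}{\set{D_{1\times 2}}}$ by unpacking~(\ref{eq:variator_prod}). Your extra step handling the possible gap between $\Language{F_1 \ptimes F_2}$ and $\Language{F_1}\times\Language{F_2}$ via Property~\ref{prop:subsetrel} is a sound refinement of a point the paper's proof leaves implicit, since Definition~\ref{def:osmod} only quantifies over $s\in\Language{F_1\ptimes F_2}$ anyway.
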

\begin{proof}
If we know $\osmod{F_1'}{F_1}{\change{F_1}{\set{D_1}}}$ and also that $\osmod{F_2'}{F_2}{\change{F_2}{\set{D_2}}}$, then 
Proposition~\ref{prop:product-OS-mod-length-compatible} will give the 
result that $\osmod{F_1'\ptimes F_2'}{F_1\ptimes F_2}{\change{F_1 \ptimes F_2\,}{\set{D_{1\times 2}}}}$ provided
we establish two facts.
First, that $\change{F_1}{\set{D_1}}$ and  $\change{F_2}{\set{D_2}}$ are length compatible. 
Definition~\ref{def:changetrans} implies (the stronger fact) that delta relations preserve length (i.e., whenever $a\!\change{F}{\set{D}}\, b$, then $|a| = |b|$), 
so length compatibility follows.
Secondly, that if $\rel{R_1} = {\change{F_1}{\set{D_1}}}$ and $\rel{R_2} = {\change{F_2}{\set{D_2}}}$, then
$\rel{R_{1\times 2}}$ as given in \eqref{eq:relation_prod} is $\change{F_1 \ptimes F_2\,}{\set{D_{1\times 2}}}$, i.e.,
the following does indeed commute:
\adjustbox{scale=0.75,center}{%
\begin{tikzcd}[swap, row sep = 5ex, column sep = 15ex]
    (\set{D_1},\factraw{D_1})\;,\;(\set{D_2},\factraw{D_2}) \arrow{r}[swap]{\text{Def.~\ref{def:direct-product-of-variator} via \eqref{eq:variator_prod}}}
   \arrow[shift right=6,swap]{d}[inner sep = 0.9ex]{\text{Def.~\ref{def:changetrans}}}
   \arrow[d, shift left=6]
  &(\set{D_{1\times 2}},\factraw{D_{1\times 2}})  \arrow{d}[swap]{\text{Def.~\ref{def:changetrans}}} \\
    \change{F_1}{\set{D_1}}\;,\;\change{F_2}{\set{D_2}}  \arrow{r}[swap]{}{\text{Prop.~\ref{prop:product-OS-mod-length-compatible} via \eqref{eq:relation_prod}}}  
  & \change{F_1 \ptimes F_2\,}{\set{D_{1\times 2}}}
\end{tikzcd}
}
~\\
Simply following the definitions:\newline
$\big((y_0,z_0)(y_1, z_1)\dots (y_n,z_n)\big)\change{F_1 \ptimes F_2\,}{\set{D_{1\times 2}}} \big((y_0,z_0)(d_1,u_1)\dots (d_n,u_n)\big)$
with $(y_{k}, z_{k}) = \fact{D_{1 \times 2}}{(y_{k-1}, z_{k-1})}{(d_{k}, u_{k})}$, from Def.~\ref{def:changetrans}.
But, from \eqref{eq:variator_prod},  $y_{k} =  \fact{D_1}{y_{k-1}}{d_{k}}$ and $z_{k} = \fact{D_2}{z_{k-1}}{u_{k}}$; so
$y_0y_1\dots y_n\!\change{F_1}{\set{D_1}} y_0d_1,\dots, d_n$ and $z_0z_1\dots z_n\!\change{F_2}{\set{D_2}} z_0u_1,\dots, u_n.$
\end{proof}

The proof of Corollary~\ref{corollary:delta-products} and the foundation it
builds upon, Proposition~\ref{prop:product-OS-mod-length-compatible},
construct the output simulating \device via a direct product.

To summarize, we may compose \scdevices to give a more complex aggregate \device, and also compose output variators 
to give a composite variator. Question~\ref{q:change} for the aggregate device, can be answered in the affirmative by
consider the same question for the individual \devices.

\subsection{Answering Question~\ref{q:change}}
\label{answer:q1}

In most of the examples we have considered, the argument for the existence of
an output simulating \scdevice has been on the basis of the fact that the
variator allows complete recovery of the original stream.  
Indeed, being based on the same reasoning,
conditions like
those in Propositions~\ref{prop:function} and~\ref{prop:injective-f-act} are
rather blunt.  Remark~\ref{remark:partial}
has already mentioned that some specific two-observation subsequences might
never appear in any string in $\Language{F}$, thus 
a set \set{D} smaller than one might na\"\i vely expect may suffice.  Even
beyond this, and perhaps more crucially in practice, full reconstruction of the $y_0 y_1
\dots y_{m}$ from $y_0 d_{1} d_{2} \dots  d_{m}$ may be unnecessary as whole
subsets of $\Language{F}$ may produce the same or a compatible output, some
element of $\reachedc{F}{y_0 y_1 \dots y_{m}}$. 

Hence, to answer an instance of Question~\ref{q:change} when some input streams are allowed to be collapsed by $\change{F}{\set{D}}$,
one must examine the behavior of the specific \scdevice at hand.
But to provide an answer for a given
$(\set{D},\factraw{D})$, a computational procedure cannot enumerate the possibly
infinite domain of $\change{F}{\set{D}}$.  We give an algorithm for answering the
question; the procedure is constructive in that it produces a derivative of $F$
if and only if one exists. 

\addtocounter{q}{-1}
\begin{question}[\textbf{Constructive version}]
\label{q:changeC}
Given $F$ with variator $(\set{D},\factraw{D})$, find 
an $F'$ such that 
$\osmod{F'}{F}{\change{F}{\set{D}}}$ if one exists, or indicate otherwise.
\end{question}

In this case, we shall additionally assume that the set~$\set{D}$ is finite.

\begin{algorithm}
\algosize
\caption{Delta Transform: 
$\variation{\set{D}}{F} \rightarrowtail F'$}
\label{alg:variation}
\renewcommand{\algorithmicrequire}{\textbf{Input:}}
\renewcommand{\algorithmicensure}{\textbf{Output:}}
\SetKwInOut{Input}{Input}
\SetKwInOut{Output}{Output}
\Input{A \device $F$, output variator  $(\set{D},\factraw{D})$
}
\Output{A determinized \device $F'$ that output simulates $F$ modulo 
$\change{F}{\set{D}}$; `No Solution' otherwise}
Create a \scdevice $F''$ as a copy of $F$ with each state $v''$ in $F''$ corresponding to state $v$ in $F$\label{line:delta:start-split}\\
\For(\tcp*[h]{Make vertex copies}){$v''\in V(F'')$}{
        $L \gets Set(v''$.\textit{IncomingLabels())}\\
	\lIf{$v''$ = $v''_0$}
		{
			$L\gets L\cup\{\epsilon\}$ 
		}
	\For{$\ell \in L$}{
		Create a new state $v''_{\ell}$ and set $c(v''_{\ell})=c(v'')$
        }
	\For{$\ell \in L$}{
		\For{every outgoing edge $v''\xrightarrow{y} w''$, $w''\neq v''$}{
			Add an edge $v''_{\ell}\xrightarrow{y}w''$ in $F''$
		}
		\For{every $\ell$-labeled incoming edge $w''\xrightarrow{\ell} v''$}{
            \uIf {$w'' \neq v''$}{
                Add an edge $w''\xrightarrow{\ell} v''_{\ell}$ in $F''$
            }\Else(\tcp*[h]{Self loop}){
                Add edges $v''_{k}\xrightarrow{\ell} v''_{\ell}$ in $F'', \text{for } k \in L$
            }
		}
	}
	Remove $v''$ from $F''$ \label{line:delta:end-split}
}
Rename $v''_{\epsilon}$ to $v''_0$ \label{line:delta:end-split-rename}\\
Create $F'$ as a copy of $F''$ and $q\gets Queue([v'_0])$\\
\While (\tcp*[h]{Apply variator}\label{line:delta:start-apply-variator}){$q\neq \emptyset$} 
{
	$v'\gets q$.pop() \\
	Find the corresponding state $v''$ in $F''$ and $\{\ell\}\gets
v''$.\textit{IncomingLabels()}
\tcp*[h]{A singleton}\\
	\For {$z \in v'$.OutgoingLabels()}
	{
		Initialize set $U_z\gets \emptyset$\\
		\uIf {$v'$ = $v'_0$}
		{
			$U_z\gets \lbrace z \rbrace$\label{line:delta:copy-singles}
		}\Else{
		    $U_z\gets \lbrace d: \set{D}\mid 
      (\ell, d, z) \in \factraw{D}\rbrace$\label{line:delta:copy-non-singles}
		}
		\uIf {$U_z= \emptyset$}
		{
			\Return\xspace `No Solution'\label{line:delta:string-crash}\\
		}
		Replace $z$ in $F'$ with label set $U_z$ \label{line:delta:label-replacement}
	}
	Mark $v'$ as visited, and add children of $v'$ to $q$ \label{line:delta:end-apply-variator}
}
Reinitialize $q\gets Queue([v'_0])$\label{line:delta:start-state-determinization}\\
\While  (\tcp*[h]{State determinization}){$q\neq \emptyset$}{
	$v'\gets q.pop()$ \\
	\For {$d\in v'$.OutgoingLabels()}
	{
		$W'\gets \lbrace w': V(F')\mid v'\xrightarrow{d} w'\rbrace$, $X\gets\cap_{w'\in W'} c(w')$\label{line:delta:intersection}\\
		\uIf {$X = \emptyset$}
		{
			\Return \xspace `No Solution' \label{line:delta:output-inconsistent}
		}
		
		\lIf {$|W'|>1$}
		{
			Merge all states in $W'$ as $m'$, set the $c(m')\gets X$, and add $m'$ to $q$
		}\Else{
			Add states in $W'$ to $q$ if not visited \label{line:delta:end-state-determinization}
		}
	}
}
\Return $F'$\\
\end{algorithm}

The procedure is given in Algorithm~\ref{alg:variation}. 
It operates as follows: it processes input
$F$ to form an $F''$  (in lines~\ref{line:delta:start-split}--\ref{line:delta:end-split-rename}) 
by copying and splitting vertices 
so each incoming edge has a single label.
This $F''$ is processed to compute 
the differences between two consecutive labels 
(lines~\ref{line:delta:start-apply-variator}--\ref{line:delta:end-apply-variator}) and the
results are placed on edges of $F'$.
If it fails to convert any pair of consecutive labels, then it fails to compute
the change of some string in $F$ and reports `No Solution'. In
the above step, 
after computing the changes,
a single string can arrive at 
multiple vertices in $F'$. We further check whether those strings, which will be distinct strings in
$\Language{F}$ but share the same image under 
 $\change{F}{\set{D}}$,
have some common output or not.
If that check 
(in lines~\ref{line:delta:start-state-determinization}--\ref{line:delta:end-state-determinization})
finds no common output, then it fails to satisfy condition~2 of
Definition~\ref{def:osmod} and, hence, we report `No Solution'.  Otherwise the procedure
merges those reached states to determinize the graph.
As a consequence, Algorithm~\ref{alg:variation} will return a
deterministic \scdevice that output simulates the input modulo the given delta 
relation. Its correctness appears as the next lemma. 

\begin{lemma}
\label{lm:changeC}
Algorithm~\ref{alg:variation} gives a \scdevice that output simulates $F$ modulo
$\change{F}{\set{D}}$ if and only if there exists such a solution for Question~\ref{q:changeC}. 
\end{lemma}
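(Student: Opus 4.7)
The plan is to verify both directions of the biconditional by tracking invariants maintained across the three conceptual phases of Algorithm~\ref{alg:variation}: (i) the vertex-splitting producing $F''$ (lines~\ref{line:delta:start-split}--\ref{line:delta:end-split-rename}); (ii) the variator-driven relabeling (lines~\ref{line:delta:start-apply-variator}--\ref{line:delta:end-apply-variator}); and (iii) the determinizing merge (lines~\ref{line:delta:start-state-determinization}--\ref{line:delta:end-state-determinization}). First I would establish that phase (i) is language- and output-preserving: for every $s\in \Language{F}$, $\reachedc{F}{s}=\reachedc{F''}{s}$, and moreover each state of $F''$ reached by a non-empty $s$ is a copy $v_{\ell}$ whose label $\ell$ equals the last symbol of $s$ (with the $\epsilon$ copy handling the initial case and self-loops handled by the explicit $k\in L$ enumeration). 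This is purely combinatorial and follows directly from the splitting rule.

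For the forward direction, suppose the algorithm returns $F'$. Using the phase~(i) invariant, phase~(ii) rewrites each outgoing edge $v'_{\ell}\xrightarrow{z}w'$ as $v'_{\ell}\xrightarrow{U_z}w'$ with $U_z=\{d\in \set{D}\mid (\ell,d,z)\in \factraw{D}\}$, while the outgoing labels of $v'_0$ are copied verbatim to absorb the initial symbol $y_0$ of Definition~\ref{def:changetrans}. By induction on the length of $s=y_0 y_1\dots y_m\in \Language{F}$, every $t=y_0 d_1\dots d_m$ with $s\change{F}{\set{D}}t$ traces successfully in $F'$ to exactly the copies of the states reached in $F''$ (hence in $F$) by $s$; this gives condition~1 of Definition~\ref{def:osmod}. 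Phase~(iii) merges those states that distinct strings collapsing to a common $t$ reach, setting the merged colour to the intersection $X$; success at line~\ref{line:delta:intersection} guarantees $\reachedc{F'}{t}\subseteq \reachedc{F}{s}$ for all such pairs, giving condition~2.

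For the backward direction, I would argue that each `No Solution' exit witnesses genuine infeasibility. A return at line~\ref{line:delta:string-crash} means some $\ell$-into-$v''$, $z$-out-of-$v''$ pair has $U_z=\emptyset$; tracing back through phase~(i) exhibits a concrete $s\in \Language{F}$ containing the consecutive symbols $\ell,z$, so $\change{F}{\set{D}}$ fails to be left-total at $s$ and Property~\ref{property:total} forbids any output-simulating $F'$. A return at line~\ref{line:delta:output-inconsistent} means two distinct states of $F''$, reached by strings $s_1,s_2\in \Language{F}$ whose images under $\change{F}{\set{D}}$ share some common sequence $t$, have disjoint colour sets; any candidate $G$ would then need $\emptyset\neq \reachedc{G}{t}\subseteq \reachedc{F}{s_1}\cap \reachedc{F}{s_2}=\emptyset$, violating condition~2 of Definition~\ref{def:osmod}.

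The main obstacle will be the bookkeeping in phase~(iii): a single merge can create new coincidences (two merged states may now carry the same outgoing delta into distinct successors), so termination and the invariant that each surviving $F'$-state corresponds to \emph{exactly} the set of $F$-states reachable by all strings in a common $\change{F}{\set{D}}$-fiber must be established simultaneously. Making this invariant precise is what justifies using the colour intersection $X$ as the certificate of (in)feasibility rather than a mere algorithmic artifact; once it is in place, both directions follow from the phase-wise invariants above.
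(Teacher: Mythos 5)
Your proposal is correct and follows essentially the same route as the paper's own proof: a length induction showing every string in the $\change{F}{\set{D}}$-fiber of $s$ traces in the intermediate (nondeterministic) structure to copies of the states $s$ reaches, with the determinizing intersection at line~\ref{line:delta:intersection} supplying the output-containment for condition~2, and a case analysis showing each `No Solution' exit (line~\ref{line:delta:string-crash} via left-totality/Property~\ref{property:total}, line~\ref{line:delta:output-inconsistent} via an empty intersection of $\reachedc{F}{\cdot}$ over the colliding strings) certifies that no output-simulating device can exist. The phase-(iii) bookkeeping you flag is treated no more formally in the paper, and your only slight imprecision (phrasing the output-inconsistency case as two pairwise-disjoint colour sets rather than an empty intersection over the whole colliding set $S$) is immaterial to the argument.
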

\begin{movableProof}
$\Longrightarrow$: Let $F'$ be the \scdevice generated by
Algorithm~\ref{alg:variation}. Then we will show that
$\osmod{F'}{F}{\change{F}{\set{D}}}$. 
Let $T_s= \left\{t \mid
s\change{F}{\set{D}}t \right\}$.
First, we show that $\bigcup_{s \in \Language{F}}T_s \subseteq  \Language{F'}$.
Let $G_n$ denote the $F'$
after lines~\ref{line:delta:start-split}--\ref{line:delta:end-apply-variator}
(the `n' serves as a reminder that it may be 
non-deterministic).
The $\epsilon$ string is 
captured in the initial state (renamed from $v''_\epsilon$).
The unit-lengthed strings in
${\change{F}{\set{D}}}$ are 
are in $\Language{G_n}$ because they are identical to $\Language{F} \cap Y(F)$ 
and 
the (especially created) start state makes sure that the first
symbol is processed without being transformed (see line~\ref{line:delta:copy-singles}).
Suppose all strings $s \in \Language{F}$ with $|s| \leq k$ have 
$T_s \subseteq \Language{G_n}$. 
Consider a string $s' \in \Language{F}$ with $|s'| = k+1$, denote
$s' = y_0y_1y_2\dots y_{k-1}y_{k}$ and also let its $k$-length prefix $s = y_0y_1y_2\dots y_{k-1}$.
Let $U_{s'} = \{(\ell,d,z) : \factraw{D}) \mid \ell = y_{k-1}, z = y_{k}\}$.
Then the set of strings $T_{s}\times U_{s'} = T_{s'}$. 
But each of the elements of $T_{s}$ are in $\Language{G_n}$ by supposition and 
the $y_{k}$-edge traced taking $s$ to $s'$ in $F$ will correspond, after
being copied, to an edge (going from $v'_{y_{k-1}}$ to $v'_{y_{k}}$) that bears $U_{s'}$ (see line~\ref{line:delta:copy-non-singles}). Hence, no strings in $T_{s'}$ will crash on $G_n$.
And so on inductively for all $k$.
The final steps of the algorithm (lines~\ref{line:delta:start-state-determinization}--\ref{line:delta:end-state-determinization}) transform
$G_n$ to $F'$ without modifying the language.

Secondly, we need to show that for every string $s\in \Language{F}$ and every $t\in
T_s$, we have $\reachedc{F}{s}\supseteq \reachedc{F'}{t}$. 
Hence, string $t$ must reach the state in $G_n$ that is a copy of a state reached by
$s$ in $F$. 
(Strictly: the state reached is
a copy of a state in $F''$ that was split from 
the state in $F$.)
After state determinization using
lines~\ref{line:delta:start-state-determinization}--\ref{line:delta:end-state-determinization},
the output of $t$ must be a subset of the output at that state, i.e.,
$\reachedc{F'}{t}\subseteq \reachedc{F}{s}$, since it takes the
intersection of the outputs from the states that are non-deterministically
reached (line~\ref{line:delta:intersection}). 

$\Longleftarrow$: If there exists a solution $G$ for Question~\ref{q:changeC}, then
we will show that Algorithm~\ref{alg:variation} will return a \scdevice. Suppose
that Algorithm~\ref{alg:variation} does not give a \scdevice. Then it returns a
`No Solution' either at line~\ref{line:delta:string-crash} or
\ref{line:delta:output-inconsistent}. If `No Solution' is returned at
line~\ref{line:delta:string-crash}, then there exists some string $s$ in $F$
with subsequence $y_{i-1}y_i$, such that there is no element in the variator
$\set{D}$ that can characterize the change (as $\ell=y_{i-1}$ and $z = y_i$). Hence, there does not exist a string
$t$ in $G$ such that $s\change{F}{D}t$, which violates condition 1 of
Definition~\ref{def:osmod} and contradicts with the assumption that 
$\osmod{G}{F}{\change{F}{\set{D}}}$.
If `No Solution' is returned
at line~\ref{line:delta:output-inconsistent}, then there exists a non-empty set of strings
$S\subseteq \Language{F}$ that are mapped to the same string $t$ in $F'_n$, but
$\cap_{s\in S}\reachedc{F}{s}=\emptyset$. As a consequence, there is no
appropriate output chosen for $t$ to be consistent with all strings in $S$.
Since $G$ is a solution, $t\in\Language{G}$. Any output chosen for $t$ in $G$
will violate output simulation.     
\end{movableProof}

\bigskip \textbf{Proposition~\ref{prop:function} (Revisited).} The proof of
Proposition~\ref{prop:function} stopped short of giving an actual \scdevice.
One may simply employ Algorithm~\ref{alg:variation} to give an explicit
construction.


\subsection{Hardness of minimization}

Earlier discussion has already anticipated the fact that an interesting question is:
What is the minimal cardinality set $\set{D}$ that is possible for a given $F$?  

\decproblem{Observation Variator Minimization (\ovm)}
  {A \scdevice $F$, and $n\in \Naturals$.} 
  {\textsc{True} if there exists a variator ($\set{D}$,$\factraw{D}$) and a
\scdevice $F'$, such that $F'$ output simulates $F$ modulo $\change{F}{\set{D}}$ and
$|\set{D}|\leq n$. \textsc{False} otherwise.}

\ifmoveprooftoend
\else

\begin{figure*}
\begin{subfigure}{0.20\linewidth}
\centering
\includegraphics[height=0.925cm]{./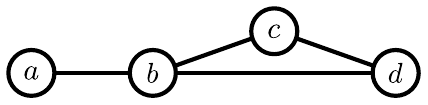}
\caption{An example instance of the 3-coloring problem.\label{fig:graph-3c}}
\end{subfigure}
\hspace*{2.1cm}
\begin{subfigure}{0.59\textwidth}
\centering
\hspace*{-1.3cm}%
\includegraphics[height=0.925cm]{./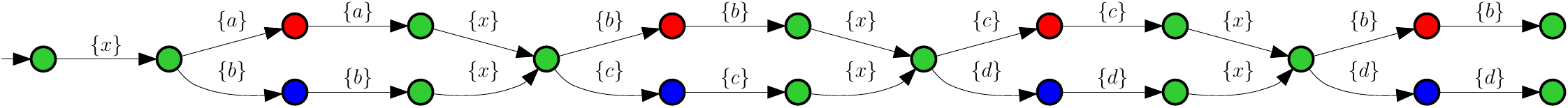}%
\hspace*{1.4cm}%
\centering
\caption{The \scdevice constructed from the graph coloring problem in
Figure~\ref{fig:graph-3c}.\phantom{Other stuff for the second line}\label{fig:filter-3c}}
\end{subfigure}
\caption{Reduction from an \gc instance to an \ovm instance. 
Strings traced on the device (in the lower figure) will map to sequences of outputs (here denoted $\{\text{red},\text{green},\text{blue}\}$) and
their behavior indicates whether the graph in the top figure is 3-colorable.
}
\end{figure*}
\fi

\begin{theorem}
\label{thm:hardness}
\ovm is NP-hard.
\end{theorem}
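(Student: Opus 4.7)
The plan is to establish NP-hardness via a polynomial-time reduction from graph $3$-coloring (\gc), the canonical NP-complete problem hinted at by the referenced figures. Given a \gc instance $G=(V,E)$, I would construct a \scdevice $F_G$ and set the bound $n=3$, then argue that the resulting \ovm instance is \True if and only if $G$ is $3$-colorable. The intuition is that the set $\set{D}$ of differences plays the role of the palette of colors, and for each edge of $G$ a gadget in $F_G$ forces the differences assigned to the two endpoint observations to be distinct.

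For the construction I would introduce a distinct observation symbol $y_v$ for each $v \in V$ and a transition structure such that any variator $(\set{D},\factraw{D})$ inducing an $F'$ with $\osmod{F'}{F_G}{\change{F_G}{\set{D}}}$ must associate each $y_v$ with a single element $d_v \in \set{D}$. For every edge $(u,v) \in E$ I would install a small gadget consisting of two observation subsequences in $\Language{F_G}$ that reach states bearing disjoint output-color sets, yet are engineered to collapse to a common image under $\change{F_G}{\set{D}}$ precisely when $d_u = d_v$. Whenever such a collision occurs, condition~2 of Definition~\ref{def:osmod} would be violated, since no single output set at the shared image state can be a subset of both disjoint color sets of the preimages; this is exactly the obstruction that Algorithm~\ref{alg:variation} detects at line~\ref{line:delta:output-inconsistent}, invoked by Lemma~\ref{lm:changeC}.

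Correctness would follow in both directions. Given a proper $3$-coloring $\chi: V \to \{1,2,3\}$ of $G$, I would define a variator with $|\set{D}|=3$ by setting $d_v := \chi(v)$; because $\chi(u)\ne\chi(v)$ along every edge, no gadget collision occurs, and Algorithm~\ref{alg:variation} yields a valid $F'$, certifying \True. Conversely, any variator with $|\set{D}|\le 3$ that yields a valid $F'$ induces a map $v \mapsto d_v$ using at most three labels and distinct on adjacent vertices (by the gadget), hence a legitimate $3$-coloring of $G$. The reduction is clearly polynomial in $|V|+|E|$ provided each gadget is of constant size.

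The hardest part I anticipate is engineering the per-edge gadget so that the biconditional ``$d_u = d_v$ if and only if an output-inconsistency is triggered'' holds crisply: the forward direction (distinctness preventing inconsistencies) must be arranged together with the reverse direction (no spurious inconsistencies between non-adjacent pairs, which would artificially inflate the required palette). This requires careful choice of output colors and transition wiring so that the only pairs of $\Language{F_G}$-strings forced to share an image under $\change{F_G}{\set{D}}$ correspond exactly to edges in $E$, and so that left-totalness (Property~\ref{property:total}) is maintained precisely when the coloring is proper. Once such a gadget is verified, NP-hardness of \ovm follows from the NP-hardness of \gc.
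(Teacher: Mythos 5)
Your outline is the same reduction the paper uses: \gc\ to \ovm\ with $n=3$, differences playing the role of colors, and per-edge gadgets in which two strings with disjoint output sets collide under the delta relation exactly when adjacent vertices receive the same difference, so that condition~2 of Definition~\ref{def:osmod} (equivalently, line~\ref{line:delta:output-inconsistent} of Algorithm~\ref{alg:variation}) is violated. Both directions of your equivalence argument match the paper's.

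The one piece you leave open---engineering the gadget so that ``$d_u=d_v$ iff inconsistency'' holds and so that $d_v$ is even a well-defined per-vertex quantity---is exactly what the paper supplies, and it is worth noting how, since it resolves both of your anticipated difficulties at once. The paper introduces a single fresh anchor observation $x$ in addition to one observation per vertex; every occurrence of a vertex symbol immediately follows $x$, so ``the difference assigned to $v$'' means a $d$ with $(x,d,v)\in\factraw{D}$, consistent across all edges containing $v$. For each edge $e=(v,w)$ the gadget is a constant-size chain: a head state $h_e$ with two outgoing branches labeled $v$ and $w$ into middle states $m^1_e$, $m^2_e$ (outputs $1$ and $2$, disjoint), each followed by a tail state reached by repeating the same vertex label, with all heads, tails, and the initial state outputting $0$; gadgets are chained serially by $x$-labeled transitions. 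Because $h_e$ is the only kind of state with two outgoing edges, collisions under $\change{F}{\set{D}}$ can occur only there, which rules out the spurious (non-edge) inconsistencies you were worried about; and the constructed variator in the forward direction ($\factraw{D}$ sending $x$ to $v$ under color $k(v)$, plus $(v,0,v)$ and $(v,1,x)$ to keep the delta relation left-total along the chain) shows three differences suffice when the coloring is proper. With that gadget in hand, your converse argument (any valid variator with $|\set{D}|\le 3$ induces a map $v\mapsto d_v$ that is distinct across edges, hence a $3$-coloring) goes through as you state it.
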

\begin{movableProof}
This is proved by reducing the graph 3-coloring problem, denoted \gc, to \ovm in
polynomial time. Given a \gc problem with graph $G$, we will construct a
corresponding \ovm instance as follows (similar to the reduction to sensor
minimization in~\cite{saberifar18pgraph}):
\begin{itemize}
\item Initialize an empty \scdevice $F=(V, \lbrace v_0\rbrace, Y, \tau, C, c)$.
\item Create one observation in $Y$ for each vertex in $G$, and an additional distinct
(arbitrary) one $x$.
\item Set the output set $C$ to be $\lbrace 0, 1, 2\rbrace$.
\item Enumerate the edges in $G$ in an arbitrary order and, for each edge
$e=(v,w)$, create a head state $h_e$, two middle states $m^1_e$ and $m^2_e$,
two tail states $t^1_e$ and $t^2_e$. Then connect from $h_e$ to middle state
$m^1_e$ with label $\{v\}$, from $h_e$ to $m^2_e$ with $\{w\}$---where $v$ and $w$ are the vertex names. Next, connect from
$m^1_e$ to tail state $t^1_e$ with the $\{v\}$ label, and from $m^2_e$ to tail state $t^2_e$
with $\{w\}$.
If $e$ is not the last edge, then let $e'$ denote
the next edge to be processed, and connect from both tail states, $t^1_e$ and $t^2_e$, to the head state $h_{e'}$
with the arbitrarily introduced label $\{x\}$.
When $e$ is the first edge, 
create a separate initial state $v_0$ for the 
\scdevice, and connect from $v_0$ to
$h_e$ also with label $\{x\}$. 
\item All head and tail states, and also $v_0$, should output~$0$.
The middle states $m^i_e$ should output $i$.
\end{itemize}

Now, taking integer $n=3$ along with \scdevice $F$, we have obtained an
\ovm instance. An example
\gc instance is given as shown in Figure~\ref{fig:graph-3c}, and its constructed
\scdevice is shown in Figure~\ref{fig:filter-3c}. 

Assume that $G$ is 3-colorable. Let $k: V(G)\to \lbrace 0,1,2\rbrace$ be a
3-coloring of $G$. Next, we will use $k$ to create an observation variator 
$(\set{D}, \factraw{\set{D}})$, where $\set{D}=\lbrace 0,1,2\rbrace$ and
$\factraw{\set{D}}$ is defined as follows: for all $v\in V(G)$,
$\factraw{\set{D}}(x, k(v))=v$ and $\factraw{\set{D}}(v, 0)=v$,
$\factraw{\set{D}}(v, 1)=x$. The only states with multiple outgoing edges are
the head states $h_e$. Since $k(v)\neq k(w)$ for every edge $(v,w)$, the
observation variator will never create any non-determinism when applied at the
head states. As a consequence, $\variation{\set{D}}{F}$ is deterministic, and
output simulates $F$ under the observation variator $\set{D}$.

Conversely, assume that there is an observation variator 
$(\set{D}, \factraw{\set{D}})$
such that $|\set{D}|=3$
and there exists a \scdevice output simulating $F$ under $\set{D}$. Let $\set{D}=\lbrace d_0, d_1, d_2\rbrace$. Then, we can construct a coloring function $k$ accordingly: $k(v)=i$
if $\factraw{\set{D}}(x,d_i)=v$, and $k(v)=0$ for vertices that are not in the
co-domain of $\factraw{\set{D}}$.  Suppose $k$ is not a valid 3-coloring of $G$.
Then there must be two states $v$ and $w$ connected by an edge $e$ but colored
the same in $G$. As a consequence,
$\variation{\set{D}}{F}$ must be non-deterministic after state
$h_e$ (going under the same $d\in\set{D}$ to both $v$ and $w$, each with disjoint outputs)
and fail to output simulate $F$, contradicting the assumption.  

In summary, the graph $G$ is colorable with 3 colors if and only if the \scdevice $F$ has a derivative. Hence, we have reduced \gc to \ovm in polynomial time, and \ovm is NP-hard. 
\end{movableProof}

\subsection{Aside: Trimming initial prefixes and absolute symbols}
\label{sec:shave}

Before moving on, a brief digression allays a potential niggling concern and
serves as our first use of relation composition. 
Some readers might find it disconcerting that 
$\change{F}{\mon{D}}$'s definition includes
the initial `absolute' symbol $y_0$ for each sequence\,---\,this might be seen as a model for hybrid devices rather than sensors that report pure changes. One possible resolution is to  introduce the new relation:

\begin{definition}[Shave relation] 
\label{def:shave}
For a \scdevice $F$, the associated \defemp{shave relation} 
for $k\in\Naturals$ is $\relsub{S}{k}  \subseteq \Language{F} \times \KleeneStr{Y(F)}$
defined as follows: 
\begin{tightenumerate}
\item[0)]  $\epsilon \relsub{S}{k} \epsilon$, and 
\item[1)]  $y_0 y_1 \dots y_{m} \relsub{S}{k} y_{k} y_{k+1} \dots  y_{m}$.
\end{tightenumerate}
\end{definition}

Intuitively, $\relsub{S}{k}$ trims away prefixes of length $k$.
Then we can compose relations to give $\change{F}{\mon{D}} \rcmp \relsub{S}{1}$, which 
removes the initial  `absolute' symbol $y_0$. Hence by analogy to Question~\ref{q:change}, one might ask
whether \scdevice $F$ with variator $(\set{D},\factraw{D})$ is 
$(\change{F}{\mon{D}} \rcmp \relsub{S}{1})$-simulatable?
This can be answered most directly by modifying Algorithm~\ref{alg:variation}, inserting a step between lines~\ref{line:delta:end-apply-variator} and 
\ref{line:delta:start-state-determinization} which replaces the labels of
edges departing $v'_0$ with $\epsilon$ labels and performs an $\epsilon$-closure thereafter. Then, state determinization (lines \ref{line:delta:start-state-determinization}--\ref{line:delta:end-state-determinization}) will succeed if and only if
the answer to the question is affirmative.

\section{Data acquisition semantics: Observation sub-sequences and super-sequences}
\label{sec:shrink-and-pump}

It is worth closely scrutinizing the phrase `event-based', a technical term
used  in multiple different ways.  
When people speak of event-based sensors or event sensors, they typically refer to a device capable of reporting changes.
But in computing more generally, and in the sub-field of `event processing' specifically, the
phrase is used when a system is driven by elementary stimulus from without. 
Adopting that standpoint as an organizational principle in structuring software, one obtains
event-based (or event-driven) software architectures. 
%
%
Such architectures will oftentimes impose synchronous operation. This contrasts with systems that must `poll' to obtain information, and 
polling systems will usually do some sort of comparison or differencing between successive checks to detect a change. 
To de-conflate these slightly intertwined ideas, we must distinguish
data acquisition from difference calculation. (Concretely: Definitions~\ref{def:diffobs} \&~\ref{def:changetrans} dealt with the latter; Definitions~\ref{def:shrink} \&~\ref{def:pump} will deal with the former.) 

\smallskip

The previous section determines if the set $\set{D}$ of differences retains
adequate information to preserve input--output behavior. 
All 
processing considered thus far preserves sequence lengths precisely, which
may encode structured information. But such tight synchronous operation may be infeasible or an inconvenient mode of data acquisition for certain devices, and  is inconsistent with, for example, event cameras.
To emphasize this fact, recall Example~\ref{ex:compass90}. It showed, albeit
only indirectly, how the lockstep flow of information from the world to the device
affects whether $\set{D}$ is adequate. In that example, the robot was allowed
to turn $\ang{90}$ in a single step, whereas previously
(in Example~\ref{ex:compass45}) it could do at most $\ang{45}$. In
practice, a $\ang{90}$
change in raw compass readings could happen for the robot in
Example~\ref{ex:compass45} if there was skew, or timing differences, or
other non-idealities so that two actions occurred between sensor updates.
One must be able to treat such occurrences,  partly because they arise in
practice, and partly because the implicit structure arising from synchronization is
an artefact of the discrete-time model and is not something one wishes the event sensors to exploit.
(The information baked in to time, especially as it is given privileged status, is often quite subtle, cf.~\cite{lavalle07time}.)

Thus, we next consider two other, practical ways in which sensors might generate the symbols that they send downstream. The
first is that they may be \emph{change-triggered} in that the \scdevice produces an output symbol only when a change has occurred.
The second is for the element consuming the \scdevice's output to \emph{poll} at some high frequency. 
Definitions that suffice to express each of these two modes, 
Definitions~\ref{def:shrink} and~\ref{def:pump}, are developed next.

(Note on notation: In what appears next, we consider sequences which may be from an observation set $Y$, or from a $\set{D}$ of differences, or a combination, \etc.;
we use $\Sigma$ as an arbitrary set to help emphasize this fact.)

\begin{definition}[shrink] 
\label{def:shrink}
Given $L \subseteq \KleeneStr{\Sigma}$ and $\neut \subseteq \Sigma$, then the $\neut$-\emph{shrink}
is the single-valued, total function $\relsub{\pi}{\neut}$ defined recursively as follows:
\begin{align*}
\relsub{\pi}{\neut} \colon  L & \to  \KleeneStr{(\Sigma \setminus \neut)},\\[-2pt]
\epsilon & \mapsto  \epsilon,\\[-2pt]
s_1\dots s_m  & \mapsto  s_1\dots s_m \text{ if } \forall j\in\{1,\dots,m\}, s_j \not\in \neut,\\[-2pt]
s_1\dots s_i \dots s_m  & \mapsto  \relsub{\pi}{\neut}(s_1\dots s_{i-1} s_{i+1} \dots s_m) \text{ when } s_i \in \neut.
\end{align*}
\end{definition}


The intuition is that the $\neut$-shrink drops all elements in $\neut$ from
sequences. 
The mnemonic for $\neut$ is `neutral' and the idea is that we will
apply the $\neut$-shrink  relation in order to model change-triggered sensors;
we can do this by choosing,
for the set $\neut$, symbols that reflect no change in signal. 
This assumes some subset
of $\set{D}$ will represent this no-change condition. Shortly,
Section~\ref{sec:monoidal} will address choices for $\set{D}$ that guarantee
such a subset is present.

\begin{question}
\label{q:shrink}
For \scdevice $F$ and a set $\neut \subseteq Y(F)$, is $F$ $\relsub{\pi}{\neut}$-simulatable?
\end{question}

\addtocounter{q}{-1}
\begin{question}[\textbf{Constructive}]
\label{q:shrinkC}
For \device $F$ and $\neut \subseteq Y(F)$, give some \scdevice $G$ 
such that \mbox{$\osmod{G}{F}{\relsub{\pi}{\neut}}$} if any exists, or indicate otherwise.
\end{question}

\begin{example}[Wall sensor, revisited]
\label{ex:wallshrink}
In Example~\ref{ex:create-binary-beam} the iRobot Create's traditional wall
sensor produces a stream of $\texttt{0}$'s and $\texttt{1}$'s depending on the
intensity of the infrared reflection it obtains. We discussed how, under output
variator $\set{D_2} = \{\Dstay, \Dflip\}$, a derivative \scdevice exists that
produces a stream of $\Dstay$s and $\Dflip$s, the former occurring when there
is no change in the presence/absence of a wall, and latter when there is. 
When one examines this derivative \device under the $\{\Dstay\}$-shrink
relation, we are considering whether the desired output can be obtained merely on a
sequence of $\Dflip$s.  If the output depends on a count of the number of $\Dflip$s,
like the even- and odd-numbered doorways, then this is possible. If it depends on
a count of the number of $\Dstay$s, or the interleaving of 
$\Dflip$s and $\Dstay$s then it can not.

If some derivative, $F'$ say, is $\relsub{\pi}{\{\Dstay\}}$-simulatable, then it can
operate effectively even if it is notified only when the wall-presence
condition changes.  It is in this sense that such $F'$s are change-triggered.
\end{example}

\begin{algorithm}
\algosize
\caption{Shrink Transform: 
$\shrink{\neut}{F} \rightarrowtail G$}
\label{alg:shrink}
\renewcommand{\algorithmicrequire}{\textbf{Input:}}
\renewcommand{\algorithmicensure}{\textbf{Output:}}
\SetKwInOut{Input}{Input}
\SetKwInOut{Output}{Output}
\Input{A \scdevice $F$, a set $\neut$}
\Output{A deterministic \scdevice $G$ if $G$ output simulates $F$ modulo
$\relsub{\pi}{\neut}$; otherwise, return `No Solution'} 
Make $G$, a copy of
$F$\label{line:shrink:start-replace-label}\\
\For(\tcp*[h]{Label replacement}){$e'\in G$.edges()}{
    \For{$\ell \in e'.labels()$}
    {
        \uIf{$\ell \in \neut$}
        {
            Replace $\ell$ with $\epsilon$ on edge $e'$\label{line:shrink:end-replace-label}\\
        }
    }
}
Merge $\epsilon$-closure$(V_0(G))$ as a single state $v'_0$ in
$G$\label{line:shrink:start-state-determinization}\\
$q\gets Queue([v'_0])$\\
\While  (\tcp*[h]{State determinization}){$q\neq \emptyset$}{
	$v'\gets q.pop()$ \\
	\For {$\ell \in v'$.OutgoingLabels()}
	{
		$W\gets \lbrace w : V(G)\mid v'\xrightarrow{\ell} w\rbrace$\\
		$W'\gets \cup_{w\in W} \epsilon$-closure($w$)\\
		$X\gets\cap_{w'\in W'} c(w')$\label{line:shrink:intersection}\\
		\If {$X = \emptyset$}
		{
			\Return \xspace `No Solution'\label{line:shrink:output-inconsistent}
		}
		
		\uIf {$|W'|>1$}
		{
			Create a new state $w''$ inheriting all outgoing edges of $W'$ in $G$, add a new edge $v'\xrightarrow{\ell}w''$, remove $\ell$ from $v'$ to $W$\\
			$c(w'')\gets X$, add $w''$ to $q$ \\
		}\ElseIf{$W'$ is not visited}{
		    Add the single $w' \in W'$ to $q$\label{line:shrink:end-state-determinization}\\
	    }
    }
}
\Return $G$
\end{algorithm}

A constructive procedure for addressing Question~\ref{q:shrink}
appears in Algorithm~\ref{alg:shrink}. It 
operates as follows:
first, it changes all the transitions bearing labels in $\neut$ to
$\epsilon$-transitions between
lines~\ref{line:shrink:start-replace-label}--\ref{line:shrink:end-replace-label}.
It then shrinks those $\epsilon$-transitions by determinizing the structure between
lines~\ref{line:shrink:start-state-determinization}--\ref{line:shrink:end-state-determinization}.
By doing so, it captures all the sequences possible after the shrink relation.
The following lemma shows correctness:

\begin{lemma}
\label{lm:shrink}
Algorithm~\ref{alg:shrink} gives a \scdevice that output simulates $F$ modulo
$\relsub{\pi}{\neut}$ if and only if there exists a solution for
Question~\ref{q:shrinkC}.
\end{lemma}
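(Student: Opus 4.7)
The plan is to follow the same double-implication template used for Lemma~\ref{lm:changeC}, adapting it to this simpler setting where $\relsub{\pi}{\neut}$ is a total single-valued function on $\KleeneStr{Y(F)}$. Let $G_n$ denote the intermediate (possibly non-deterministic) machine after lines~\ref{line:shrink:start-replace-label}--\ref{line:shrink:end-replace-label}, before determinization. Observe that $G_n$ shares its state set with $F$ and differs only in that labels lying in $\neut$ have been replaced by $\epsilon$.

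For the forward direction ($\Longrightarrow$), I would verify the two conditions of Definition~\ref{def:osmod}. For condition~1, by induction on $|s|$ I would show that tracing any $s = y_1\dots y_m \in \Language{F}$ on $G_n$ visits the same state sequence as on $F$, but the labels read are exactly $\relsub{\pi}{\neut}(s)$ after the $\epsilon$s are absorbed by the $\epsilon$-closure step. The initial $\epsilon$-closure merging of $V_0(G)$ on line~\ref{line:shrink:start-state-determinization} accounts for any leading $\neut$ symbols, and the subset construction in the while-loop preserves the language. Hence $\relsub{\pi}{\neut}(s) \in \Language{G}$. For condition~2, any state $v'$ of $G$ that is reached by some $t$ in $G$ corresponds (by the determinization bookkeeping) to the set of $G_n$-states reached by $t$ via $\epsilon$-closure, i.e.\ exactly the set of states in $F$ reached by the $\relsub{\pi}{\neut}$-preimages of $t$ in $\Language{F}$. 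The intersection taken on line~\ref{line:shrink:intersection} therefore yields $c_G(v') = \bigcap_{s \,:\, \relsub{\pi}{\neut}(s)=t} \reachedc{F}{s}$, which is contained in $\reachedc{F}{s}$ for each such $s$, as required.

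For the backward direction ($\Longleftarrow$), I would argue the contrapositive: if Algorithm~\ref{alg:shrink} returns `No Solution', the only failure site is line~\ref{line:shrink:output-inconsistent}, at which $X = \bigcap_{w' \in W'} c(w') = \emptyset$. Using the correspondence from the forward direction, this means there exist strings $s_1, s_2, \dots \in \Language{F}$ all with $\relsub{\pi}{\neut}(s_i) = t$ for a common $t$, reaching states in $F$ whose output sets have empty intersection. Suppose for contradiction that some $G'$ with $\osmod{G'}{F}{\relsub{\pi}{\neut}}$ exists. By condition~1 of Definition~\ref{def:osmod}, $t \in \Language{G'}$, so $\reachedc{G'}{t} \neq \emptyset$. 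But condition~2 forces $\reachedc{G'}{t} \subseteq \reachedc{F}{s_i}$ for every $i$, hence $\reachedc{G'}{t} \subseteq \bigcap_i \reachedc{F}{s_i} = \emptyset$, a contradiction.

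The main obstacle will be the careful bookkeeping for the correspondence between paths in $F$, in $G_n$, and in the determinized $G$: in particular, showing that $\epsilon$-closure exactly captures all $F$-strings which are $\relsub{\pi}{\neut}$-preimages of a given $t$ (no spurious preimages introduced, none missed), and correctly handling the treatment of the initial state when prefixes lie in $\neut$. Once that bijection on reachable state-sets is nailed down, both directions reduce to direct verification of the clauses of Definition~\ref{def:osmod}.
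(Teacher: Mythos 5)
Your proposal is correct and follows essentially the same route as the paper's proof: the forward direction traces each $s\in\Language{F}$ through the $\epsilon$-relabeled copy $G_n$ to show $\relsub{\pi}{\neut}(s)\in\Language{G}$ and uses the intersection at line~\ref{line:shrink:intersection} to get $\reachedc{G}{t}\subseteq\reachedc{F}{s}$, while the backward direction derives a contradiction from a `No Solution' at line~\ref{line:shrink:output-inconsistent} exactly as the paper does (a set of preimages of a common $t$ with empty output intersection, against conditions~1 and~2 of Definition~\ref{def:osmod}). The only cosmetic difference is that you make the correspondence between determinized states and the $\relsub{\pi}{\neut}$-preimage sets slightly more explicit, which the paper leaves implicit.
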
 

\begin{movableProof}
$\Longrightarrow$: We show that if Algorithm~\ref{alg:shrink} gives a \scdevice $G$,
then $G$ output simulates $F$ modulo $\relsub{\pi}{\neut}$. First, we 
show that for every string $s\in \Language{F}$,
$\relsub{\pi}{\neut}(s)\in\Language{G}$. 
(Because
$\relsub{\pi}{\neut}$ is a total function,
this ensures
condition~1 in Definition~\ref{def:osmod} is met,
and
that $\Language{G}$ covers the co-domain.)
Let $s=s_1 s_2\dots s_n \in \Language{F}$,
and form $t=t_1 t_2\dots t_n$, where
$t_i=s_i$ if $s_i\not\in \neut$, otherwise $t_i=\epsilon$. Then $t$ can be
traced in the structure constructed from
lines~\ref{line:shrink:start-replace-label}--\ref{line:shrink:end-replace-label}.
According to Definition~\ref{def:shrink}, $t=\relsub{\pi}{\neut}(s)$ as it has been obtained
by dropping (or, equivalently replacing by $\epsilon$) those symbols in $\neut$. Since 
the $\epsilon$-closure and determinimization steps preserve the language,
the fact $t$ can be
traced in $G$ means $\relsub{\pi}{\neut}(s)\in\Language{G}$. 

Second, we will show that for every string $s\in \Language{F}$ and  
$t=\relsub{\pi}{\neut}(s)\in \Language{G}$, we have
$\reachedc{F}{s}\supseteq \reachedc{G}{t}$. 
To facilitate the analysis, we focus on the structure $G_n$ that is constructed
between lines~\ref{line:shrink:start-replace-label}--\ref{line:shrink:end-replace-label},
(`n' stands for a non-deterministic version of $G$).  
Whichever edges are crossed in tracing $s$ on $F$, copies of those edges 
will be crossed in a tracing of $t$ on $G_n$, and
the state reached in $G_n$ in this way
must be a copy of the state reached by $s$ in $F$. After state determinization
(from $G_n$ to $G$) in
lines~\ref{line:shrink:start-state-determinization}--\ref{line:shrink:end-state-determinization},
the output of $t$ in $G$ must be a subset of $\reachedc{F}{s}$, i.e.,
$\reachedc{G}{t}\subseteq \reachedc{F}{s}$ because it takes the
intersection of the outputs from the states that are non-deterministically
reached (line~\ref{line:shrink:intersection}). 
 
$\Longleftarrow$: If there exists a solution $G$ for Question~\ref{q:shrinkC}, then
we will show that Algorithm~\ref{alg:shrink} will return a \device. Suppose
Algorithm~\ref{alg:shrink} does not give a \scdevice. Then it returns a `No
Solution' at line~\ref{line:shrink:output-inconsistent}. As a consequence, there
exists a set of strings $S$ which are mapped to the same image $t$ via
function $\relsub{\pi}{\neut}$ and $\cap_{s\in S}\reachedc{F}{s}=\emptyset$. As
a consequence, there will be no appropriate output for $t$ to satisfy for output
simulation. Since $G$ is a solution, $t\in\Language{G}$. Any output chosen for
$t$ will violate output simulation, which contradicts with the assumption that
$G$ is a solution.  
\end{movableProof}



\bigskip 
The essence of $\neut$-shrink is that, via relation $\relsub{\pi}{\neut}$, 
it associates to a string all those strings we
obtain by winnowing away symbols within $\neut$. 
A second, related definition expands the set of
strings by adding elements of $\neut$. (Thinking, again, of 
changes within $\neut$ as `neutral'.) 

\begin{definition}[pump] 
\label{def:pump}
Given $L \subseteq \KleeneStr{\Sigma}$ and $\neut \subseteq \Sigma$, then the $\neut$-\emph{pump}
is the relation $\relsub{P}{\neut} \subseteq L \times  \KleeneStr{\Sigma}$ defined as follows: $\forall (s_1\dots s_m) \in L$, 
\begin{tightenumerate}
\item[0)]  $ \epsilon \relsub{P}{\neut} \epsilon$, 
\item[1)]  $(s_1\dots s_m)\relsub{P}{\neut} (s_1\dots s_m)$, 
\item[2)]  $\forall b\in \neut, k \in \{1,\dots, \ell\},\newline
\phantom{-}(s_1\!\dots s_m)\!\relsub{P}{\!\neut}\!(t_1\!\dots t_\ell) \!\implies\!
(s_1\!\dots s_m)\!\relsub{P}{\!\neut}\!(t_1\!\dots t_k\,b\,t_{k+1}\!\dots t_\ell)$.
\end{tightenumerate}
\end{definition}

The intuition is that, to any string, the $\neut$-pump associates all those
strings with extra elements from $\neut$ sandwiched between any two symbols, or
at the very end. Notice that it does not place elements of $\neut$ at the very
beginning of the string.

\begin{question}
\label{q:pump}
For \scdevice $F$ and a set $\neut \subseteq Y(F)$, is $F$ $\relsub{P}{\neut}$-simulatable?
\end{question}

\addtocounter{q}{-1}
\begin{question}[\textbf{Constructive}]
\label{q:pumpC}
For \device $F$ and $\neut \subseteq Y(F)$, give a device $G$ 
such that \mbox{$\osmod{G}{F}{\relsub{P}{\neut}}$} if any exists, or indicate otherwise.
\end{question}

\begin{example}[The $\ang{45}$ Minispot, again]
Reconsider Example~\ref{ex:compass45}, with a derivative compass \device for
the observation variator $\set{D_3}=\{\Dminus, \Dz, \Dplus\}$.
Whenever some downstream consumer of the change-in-bearing information queries,
an element of $\set{D_3}$ is produced.  If it polls fast enough, we expect that
it would contain a large number of 
$\Dz$ values.  
Doubling the rate would (roughly) double the quantity of $\Dz$ values.
At high frequencies, there would be long sequences of $\Dz$s and
those computations on the input stream that are invariant to the rate of sampling 
would be $\relsub{P}{\{\Dz\}}$-simulatable.
\end{example}

\begin{algorithm}
\algosize
\caption{Pump Transform: 
$\pump{\neut}{F} \rightarrowtail G$}
\label{alg:pump-transform}
\renewcommand{\algorithmicrequire}{\textbf{Input:}}
\renewcommand{\algorithmicensure}{\textbf{Output:}}
\SetKwInOut{Input}{Input}
\SetKwInOut{Output}{Output}
\Input{A \scdevice $F$, a set $\neut$}
\Output{A deterministic \scdevice $G$ if $G$ output simulates $F$ modulo $\relsub{P}{\neut}$; otherwise, return `No Solution'}
Make $G$, a copy of $F$\label{line:pump:first}\\
Add a vertex $v_\text{new}$ and set $c(v_\text{new}) = c(v_0)$.
Add edges from $v_\text{new}$ pointing to the destination of edges that depart $v_0 \in V_0(G)$\\
Update $G$'s initial vertex: $V_0(G) \gets \{v_\text{new}\}$.\label{line:pump:before-loop}   \\
\For(\tcp*[h]{Add self loops\label{line:pump:start-self-loop}}){$v\in V(G)\setminus\{v_\text{new}\}$}{
Add a self loop at $v$ bearing labels $\neut$\\
} \label{line:pump:loop}   
$q\gets Queue([v_\text{new}])$\\
\label{line:pump:det-loop-begin}  
\While (\tcp*[h]{State determinization}){$q\neq \emptyset$}{
	$v\gets q.pop()$ \\
	\For {$\ell\in v$.OutgoingLabels()}
	{
		
		$W\gets \lbrace w: V(G)\mid v\xrightarrow{\ell} w\rbrace$, $X\gets\cap_{w\in W} c(w)$\label{line:pump:intersection}\\ 
		\If {$X = \emptyset$}
		{
			\Return \xspace `No Solution'
\label{line:pump:output-inconsistent}
		}
		
		\uIf {$|W|>1$}
		{
		    Create a new state $w'$ inheriting all outgoing edges of $W$ in $G$, add a new edge $v\xrightarrow{\ell}w$, remove $\ell$ from $v$ to $W$\\
			$c(w')\gets X$, add $w'$ to $q$ \\
		}\ElseIf{$W$ is not visited}{
		    Add the single $w \in W$ to $q$\label{line:pump:end-state-determinization}\\
	    }
    }
    \label{line:pump:det-loop-end}  
}
\Return $G$
\end{algorithm}

A procedure for answering Question~\ref{q:pump} constructively appears in
Algorithm~\ref{alg:pump-transform}. Its operation is as follows: first, it creates
an initial state, reached (uniquely) by the $\epsilon$ string
(line~\ref{line:pump:before-loop}). Then, in lines~\ref{line:pump:start-self-loop}--\ref{line:pump:loop}, it adds self loops bearing labels to
be pumped at all states (except the newly created one). Finally, it checks
whether the resulting structure output simulates the input, and determinizes
the structure between
line~\ref{line:pump:det-loop-begin}--\ref{line:pump:det-loop-end}.  By doing
so, it creates a deterministic structure to pump the elements in $\neut$ using
self loops. The following lemma addresses correctness.

\begin{lemma}
\label{lemma:pump-algo}
Algorithm~\ref{alg:pump-transform} gives a \scdevice that output simulates $F$ modulo
$\relsub{P}{\neut}$ if and only if there exists a solution for
Question~\ref{q:pumpC}. 
\end{lemma}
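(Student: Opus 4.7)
My plan is to follow the template used for Lemmas~\ref{lm:changeC} and~\ref{lm:shrink}: a biconditional where the forward direction verifies the two conditions of Definition~\ref{def:osmod}, and the backward direction contraposes, showing that the algorithm's sole failure mode (line~\ref{line:pump:output-inconsistent}) precludes the existence of any output-simulating \scdevice.

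For $(\Longrightarrow)$, let $G_n$ denote the intermediate non-deterministic structure produced after lines~\ref{line:pump:first}--\ref{line:pump:loop}, i.e., before determinization. I first establish the language coverage: for any $s \in \Language{F}$ and any $t$ with $s \relsub{P}{\neut} t$, the string $t$ can be traced in $G_n$. The argument is an induction on the number of inserted $\neut$ symbols; the base case $t=s$ follows because $G_n$ retains all original edges of $F$ (other than those departing $v_0$, which are redirected through $v_\text{new}$), and each inductive insertion of $b\in\neut$ at some position is realized by the self-loop bearing $b$ added at the appropriate state in line~\ref{line:pump:start-self-loop}. The fact that no self-loop is added at $v_\text{new}$ corresponds directly to Definition~\ref{def:pump}'s prohibition of prepending $\neut$-symbols. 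Next, I observe that every trace of such a $t$ terminates at a (copy of a) state reached by $s$ in $F$: the extra self-loops affect only the number of steps, never the final state. After the determinization block (lines~\ref{line:pump:det-loop-begin}--\ref{line:pump:det-loop-end}) merges co-reachable states, the output at the merged state is the intersection computed in line~\ref{line:pump:intersection}. Hence $\reachedc{G}{t}\subseteq\bigcap\{\reachedc{F}{s'}:s'\in\Language{F},\,s'\relsub{P}{\neut}t\}\subseteq\reachedc{F}{s}$, which gives condition~2, while condition~1 is immediate by taking $t=s$ (the identity case in Definition~\ref{def:pump}).

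For $(\Longleftarrow)$, I suppose some $G'$ solves Question~\ref{q:pumpC} but Algorithm~\ref{alg:pump-transform} aborts; the only abort is at line~\ref{line:pump:output-inconsistent}, where some state $v$ in the ongoing construction has outgoing label $\ell$ reaching a set $W$ of successors with $\bigcap_{w\in W} c(w)=\emptyset$. By the way $W$ is assembled, there is a string $t\in Y^{*}$ leading from $v_\text{new}$ to the set $W$ non-deterministically, and each $w\in W$ is a state reached in $F$ by some $s\in\Language{F}$ with $s\relsub{P}{\neut}t$. Let $S$ be the corresponding set of source strings, so that $\bigcap_{s\in S}\reachedc{F}{s}=\emptyset$. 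Since $G'$ is a solution, condition~1 of Definition~\ref{def:osmod} applied to any $s\in S$ gives $t\in\Language{G'}$, and condition~2 applied to every $s\in S$ yields $\reachedc{G'}{t}\subseteq\reachedc{F}{s}$ for each, hence $\reachedc{G'}{t}\subseteq\bigcap_{s\in S}\reachedc{F}{s}=\emptyset$; but $t\in\Language{G'}$ forces $\reachedc{G'}{t}\neq\emptyset$, a contradiction.

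The main obstacle I anticipate is the bookkeeping in the forward direction: making precise the correspondence between (i)~the set of states in $G_n$ non-deterministically reached while tracing a pumped string $t$ and (ii)~the set of $F$-states reached by the pre-images $\{s\in\Language{F}:s\relsub{P}{\neut}t\}$. A clean induction on $|t|$, interleaved with case analysis on whether each consumed symbol matches an original $F$-edge or a newly inserted $\neut$-self-loop, should close this gap. The remaining subtlety is ensuring the special treatment of $v_\text{new}$ in lines~\ref{line:pump:first}--\ref{line:pump:before-loop} is necessary and sufficient to realize the $\neut$-asymmetry between the front and interior/tail of a sequence that is baked into Definition~\ref{def:pump}; this is already exploited in the base case of the induction above and mirrors the analogous initial-state care taken by Algorithm~\ref{alg:variation}.
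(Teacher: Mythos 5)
Your proposal is correct and takes essentially the same approach as the paper: the forward direction traces pumped strings on the pre-determinization structure $G_n$ (pumped $\neut$-symbols consumed by the added self loops, the special initial state handling the no-prepending clause of Definition~\ref{def:pump}), then uses the intersection taken at determinization to get condition~2, while the backward direction derives a contradiction from the single failure point at line~\ref{line:pump:output-inconsistent} exactly as the paper does. One minor slip: ``every trace of $t$ terminates at a state reached by $s$'' should read ``at a state reached by \emph{some} pre-image $s'$ of $t$ under $\relsub{P}{\neut}$'' (with at least one trace reaching the state reached by $s$), but your displayed bound $\reachedc{G}{t}\subseteq\bigcap_{s'}\reachedc{F}{s'}\subseteq\reachedc{F}{s}$ already relies on the correct statement, so the argument stands.
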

\begin{movableProof}
$\Longrightarrow$: We show that if Algorithm~\ref{alg:pump-transform} gives a \scdevice $G$,
then $G$ output simulates $F$ modulo $\relsub{P}{\neut}$. 
First, we show that for every string $s\in \Language{F}$, $\relsub{P}{\neut}(s) = \{t:\KleeneStr{\Sigma} \mid s\relsub{P}{\neut}t\}\subseteq\Language{G}$; 
since $s{\relsub{P}{\neut}}s$ this also establishes condition~1 in Definition~\ref{def:osmod}.
Suppose that there exists a string
$s=s_1s_2\dots s_n$ in $\Language{F}$ with
$t=t_1t_2\dots t_m\in\relsub{P}{\neut}(s)$ and $t\not\in \Language{G}$. 
Let $t_{1\dots i}=t_1t_2\dots t_i$ be a prefix of $t$ such that $t_{1\dots i}\in
\Language{G}$, but $t_{1\dots i+1}\not\in \Language{G}$. If $t_{i+1}\in \neut$ then $t_{1\dots i+1}$ cannot
crash on $G$ due to the introduction of self loops in
lines~\ref{line:pump:start-self-loop}--\ref{line:pump:loop}. So $t_{i+1}\not\in
\neut$. But then $s$, without entering any of those self
loops, must also crash on $G$. By construction Algorithm~\ref{alg:pump-transform} ensures $\Language{F}\subseteq
\Language{G}$, and $s\not\in \Language{G}$ then violates the assumption. 

Secondly, we will show that for every string $s\in \Language{F}$ and  
$t\in \relsub{P}{\neut}(s)$, we have $\reachedc{F}{s}\supseteq \reachedc{G}{t}$.
To facilitate the analysis, we focus on the structure $G_n$ that is constructed
between lines~\ref{line:pump:first}--\ref{line:pump:loop} (`n' stands for a
non-deterministic version of $G$). Since $t$ is obtained by pumping some labels
in $\neut$ on $s$ and those pumped labels can be consumed in the self loops in
$G_n$, at least one trace of $t$ in $G_n$ must reach a copy of the state reached by $s$ in $F$.
Since state determinization (from $G_n$ to $G$) in
lines~\ref{line:pump:det-loop-begin}--\ref{line:pump:det-loop-end}
produces an output that is common across all tracings (the intersection in line~\ref{line:pump:intersection}), we have
$\reachedc{G}{t}\subseteq \reachedc{F}{s}$. 

$\Longleftarrow$: If there exists a solution $G$ for Question~\ref{q:pumpC}, then
we will show that Algorithm~\ref{alg:pump-transform} will return a \scdevice. Suppose
Algorithm~\ref{alg:pump-transform} does not give a \scdevice. Then it returns a `No
Solution' at line~\ref{line:pump:output-inconsistent}. As a consequence, there
are multiple traces of some string $t$ in $G_n$ such that these traces
do not share any common output. Those traces must be images for a set $S$ of
different strings in $F$ under relation $\relsub{P}{\neut}$. Otherwise, they
should reach the same state in $G_n$ and have the same output. Hence, there
exists a set of strings $S\subseteq \Language{F}$ such that $\forall s\in S,
s\relsub{P}{\neut} t$, and $\cap_{s\in S} \reachedc{F}{s}=\emptyset$. As a
consequence, there is no proper output chosen for $t$ to satisfy the conditions of output
simulation. Since $G$ is a solution for Question~\ref{q:pumpC}, $t$
must also be in $\Language{G}$. 
But condition~2 in Definition~\ref{def:osmod} cannot hold on $G$
for all the $s \in S$; hence, a contradiction.
\end{movableProof}


\subsection{Relationships between shrinking and pumping}

As understanding the connection between these two relations (shrink and pump) supplies
some insight, we start with some basic facts.

\begin{property} 
\label{q:pi-pump-props}
Immediately these relationships follow:
\begin{enumerate}
\item
$\relsub{\pi}{\neut} = \relsub{P}{\neut} \rcmp \relsub{\pi}{\neut}$.\quad
The preceding statement generalizes to $\relsub{\pi}{\neut} = \rel{X_1} \rcmp \cdots \rcmp \rel{X_n} \rcmp \relsub{\pi}{\neut},$ 
where by
$\rel{X_1} \rcmp \cdots \rcmp \rel{X_n}$
we denote any
sequence of concatenations under `$\rcmp$' of relations $\rel{\mathbf{id}}$, $\relsub{P}{\neut}$, and $\relsub{\pi}{\neut}$.
\item If $\neut \neq \emptyset$, then $\relsub{P}{\neut} \subseteq \relsub{\pi}{\neut} \rcmp \relsub{P}{\neut}$.\quad
More precisely: if some string in 
$s_1s_2\cdots s_n$ contains an
$s_i\in \neut$, then $\relsub{P}{\neut}$ is a strict sub-relation; otherwise the two relations are equal.
\item If $\neut \neq \emptyset$, then $\relsub{\pi}{\neut} \subsetneq \relsub{\pi}{\neut} \rcmp \relsub{P}{\neut}$.
\end{enumerate}
\end{property}

To give some brief interpretation: 
Property~\ref{q:pi-pump-props}.1) leads
one to conclude, with $\neut \subseteq Y(F)$, that determining if \device $F$
is $(\relsub{P}{\neut} \rcmp \relsub{\pi}{\neut})$-simulatable, then,
is identical to Question~\ref{q:shrink}.  
This is very intuitive, as the $\neut$-shrink has the `last word' so to speak, and
hence will drop all symbols from $\neut$\,---\,it does not care whether those symbols were in the input
or generated via the $\neut$-pump operation.

The generalization mentioned in Property~\ref{q:pi-pump-props}.1)
implies other specific facts, like that $\relsub{\pi}{\neut}$ is idempotent.
Properties~\ref{q:pi-pump-props}.2) and \ref{q:pi-pump-props}.3) suggest that $\relsub{P}{\neut}$ behaves differently from $\relsub{\pi}{\neut}$.
In fact, they share many common properties (e.g., $\relsub{P}{\neut}$ is idempotent too).
Actually, as will be established shortly,
the question of output simulation modulo each of these relations
is equivalent under conditions we shall be directly concerned with.

\begin{lemma}
\label{lemma:pi_N-implies-P_N}
With $\neut \subseteq Y(F)$ for an $F$ being $\relsub{\pi}{\neut}$-simulatable 
implies that $F$ is $\relsub{P}{\neut}$-simulatable.
\end{lemma}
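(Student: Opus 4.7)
The plan is to proceed constructively: given $G$ with $\osmod{G}{F}{\relsub{\pi}{\neut}}$ (which exists by hypothesis), exhibit a $G'$ with $\osmod{G'}{F}{\relsub{P}{\neut}}$. The construction rests on the observation that $\relsub{P}{\neut}$ is annihilated by $\relsub{\pi}{\neut}$: whenever $s \relsub{P}{\neut} t$ we have $\relsub{\pi}{\neut}(s) = \relsub{\pi}{\neut}(t)$, because pumping only inserts symbols that the shrink subsequently removes (this is essentially Property~\ref{q:pi-pump-props}.1 unpacked at the string level). Thus a device that correctly handles shrunken strings can be turned into one handling pumped strings by being taught to ignore interspersed $\neut$ symbols.

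Concretely, I would first invoke Lemma~\ref{lm:shrink} to take $G$ to be the output of Algorithm~\ref{alg:shrink}; in particular, no edge of $G$ bears any label in $\neut$, since line~\ref{line:shrink:end-replace-label} converts all such labels to $\epsilon$ and subsequent determinization removes them. Then form $G'$ by augmenting $G$ with a self-loop on every state carrying label set $\neut$. Because $G$ has no $\neut$-labeled edges, $G'$ remains deterministic. For the verification, given any $s \in \Language{F}$, set $u := \relsub{\pi}{\neut}(s) \in \Language{G}$, which traces in $G$ to a unique state $v_u$. For any $t$ with $s \relsub{P}{\neut} t$ we have $\relsub{\pi}{\neut}(t) = u$, and tracing $t$ on $G'$ alternates $G$'s transitions (for the non-$\neut$ symbols of $t$, which coincide with those taken for $u$) with the new self-loops (for its $\neut$ symbols), landing at $v_u$. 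Hence $\reachedc{G'}{t} = \reachedc{G}{u} \subseteq \reachedc{F}{s}$, where the last inclusion is the hypothesized $\osmod{G}{F}{\relsub{\pi}{\neut}}$. Condition~1 of Definition~\ref{def:osmod} follows from $s \relsub{P}{\neut} s$, and condition~2 is precisely what was just argued.

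The main subtlety concerns strings $s \in \Language{F}$ that themselves begin with a $\neut$ symbol: $\relsub{P}{\neut}$ does not insert $\neut$ before position~1, but $s$ (and so $t$) may still start in $\neut$. My construction handles this uniformly because the self-loop at $G'$'s initial state may be taken any number of times (including zero) before the first non-$\neut$ transition fires. Once this one detail is confirmed, the rest of the verification is bookkeeping; the chief conceptual obstacle is therefore just identifying the right common quantity—namely $u = \relsub{\pi}{\neut}(s) = \relsub{\pi}{\neut}(t)$—through which to route both tracings.
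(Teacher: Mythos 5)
Your proposal is correct and follows essentially the same route as the paper: start from a shrink-simulating device with no $\neut$-labeled edges, add $\neut$-labeled self loops, and verify output simulation through the chain $\reachedc{G'}{t} = \reachedc{G}{\relsub{\pi}{\neut}(t)} = \reachedc{G}{\relsub{\pi}{\neut}(s)} \subseteq \reachedc{F}{s}$. The only differences are minor: the paper obtains the $\neut$-free device by a without-loss-of-generality label removal from an arbitrary simulator and then applies the initial lines of Algorithm~\ref{alg:pump-transform} (whose fresh initial state carries no self loop), whereas you invoke Lemma~\ref{lm:shrink} and also place a self loop on the initial state, which is harmless and handles strings of $\Language{F}$ that begin with a $\neut$ symbol directly.
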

\begin{movableProof}
Suppose $\osmod{H}{F}{\relsub{\pi}{\neut}}$. We may assume without loss of generality
that no sequences in $\Language{H}$ contain any element of $\neut$: if they did, we could
remove them (by applying,  to every edge label, a set difference with $\neut$); 
removing them does not alter its output simulation of $F$, as no such sequences are in the image of ${\relsub{\pi}{\neut}}$.
Apply the steps in lines~\ref{line:pump:first}--\ref{line:pump:loop} of $\pump{\neut}{H}$, in Algorithm~\ref{alg:pump-transform},
and refer to the result as $G$.
To show  $\osmod{G}{F}{\relsub{P}{\neut}}$, the two
conditions of 
Definition~\ref{def:osmod} are:
1) For any $s \in \Language{F}$, that $s \in \Language{G}$, and $s \relsub{P}{\neut} s$. %
2) For any $s \in \Language{F}$ with  $s \relsub{P}{\neut} t$,
$t \in \Language{G}$ because all sequences of elements of $\neut$ can be (repeatedly, as needed) traced via the self loops introduced.
Further, 
    $\reachedc{G}{t} = \reachedc{G}{\relsub{\pi}{\neut}\!(t)} 
    = \reachedc{H}{\relsub{\pi}{\neut}\!(t)} 
    = \reachedc{H}{\relsub{\pi}{\neut}\!(s)} 
    \subseteq \reachedc{F}{s}$.
\end{movableProof}

\begin{lemma}
\label{lemma:P_N-implies-pi_N}
For $F$, with $\neut \subseteq Y(F)$, and
all $s_1s_2\dots s_k \in \Language{F}$
having $s_1 \in Y(F) \setminus \neut$, then
\device $F$ being $\relsub{P}{\neut}$-simulatable 
implies that $F$ is $\relsub{\pi}{\neut}$-simulatable.
\end{lemma}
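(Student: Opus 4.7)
Given $H$ with $\osmod{H}{F}{\relsub{P}{\neut}}$, my plan is to let $G$ be $\shrink{\neut}{F}$ produced by Algorithm~\ref{alg:shrink} and then invoke Lemma~\ref{lm:shrink}: it suffices to argue that this run cannot return `No Solution'. The only failure point is line~\ref{line:shrink:output-inconsistent}, which triggers when some subset-state $W'$ assembled during determinization has $\bigcap_{w' \in W'} c(w') = \emptyset$.

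First I would reduce the failure condition to a concrete intersection claim about $F$. Because $F$ is deterministic and the leading-symbol hypothesis forbids any $\neut$-labeled transition out of $v_0$ (otherwise a single-letter $\neut$-string would sit in $\Language{F}$), the $\epsilon$-closure $W'$ reached after the algorithm consumes some $u = u_1 \cdots u_m$ is exactly the set of $F$-endpoints of strings $s \in \Language{F}$ with $\relsub{\pi}{\neut}(s) = u$. So it suffices to prove
\[
\bigcap\bigl\{\reachedc{F}{s} \,:\, s \in \Language{F},\; \relsub{\pi}{\neut}(s) = u\bigr\} \neq \emptyset
\qquad\text{for every } u \in \relsub{\pi}{\neut}(\Language{F}).
\]

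Second I would construct a \emph{common pump} of all such $s$ to witness the non-emptiness. For any finite $s_1, \dots, s_n$ in the pre-image, factor each as $s_i = u_1 \alpha_{i,1} u_2 \alpha_{i,2} \cdots u_m \alpha_{i,m}$ with $\alpha_{i,j} \in \KleeneStr{\neut}$; the hypothesis guarantees there is no leading block $\alpha_{i,0}$. Set $\beta_j = \alpha_{1,j} \alpha_{2,j} \cdots \alpha_{n,j}$ and take
\[
t \;=\; u_1 \beta_1 u_2 \beta_2 \cdots u_m \beta_m.
\]
Each $s_i$ transforms into $t$ by inserting $\neut$-symbols only at positions strictly after $u_1$, all of which are legal pump operations, so $s_i \relsub{P}{\neut} t$ for every $i$. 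Condition~2 of Definition~\ref{def:osmod} applied to $\osmod{H}{F}{\relsub{P}{\neut}}$ then yields $t \in \Language{H}$ and $\reachedc{H}{t} \subseteq \reachedc{F}{s_i}$ for each $i$; since $t \in \Language{H}$ forces $\reachedc{H}{t} \neq \emptyset$, the finite intersection is non-empty. An infinite pre-image collapses to a finite intersection because $\reachedc{F}{\cdot}$ takes only finitely many values across $F$'s finite state space.

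The main subtlety I anticipate is the asymmetry of $\relsub{P}{\neut}$: it refuses to insert at position 0, so without the leading-symbol hypothesis two $F$-strings could share the same shrinkage while having irreconcilable $\neut$-prefixes, leaving no common pump $t$ available above both. Once the intersection claim is secured, Lemma~\ref{lm:shrink} packages up the existence of $G$ with $\osmod{G}{F}{\relsub{\pi}{\neut}}$, completing the argument.
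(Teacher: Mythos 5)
Your proposal is correct, but it proves the lemma by a genuinely different route than the paper. The paper's proof takes the witness to be the output of the \emph{pump} transform: it sets $G=\pump{\neut}{F}$ (which exists by Lemma~\ref{lemma:pump-algo}, given that $F$ is $\relsub{P}{\neut}$-simulatable), observes that in this particular $G$ every edge bearing a symbol of $\neut$ must be a self loop (any $\neut$-edge between distinct vertices gets merged away during determinization), and then argues that tracing $s$ and tracing $t=\relsub{\pi}{\neut}(s)$ visit the same states up to loitering on those loops, so $\reachedc{G}{t}=\reachedc{G}{s}\subseteq\reachedc{F}{s}$. You instead run the \emph{shrink} transform, $\shrink{\neut}{F}$, and show its sole failure point cannot trigger: for every shrink-class of strings you build a common pumped string $t$ (concatenating the $\neut$-blocks of finitely many class representatives, with the leading-symbol hypothesis guaranteeing no insertion at position $0$ is ever needed), and condition~2 of Definition~\ref{def:osmod} applied to an \emph{arbitrary} pump-simulator $H$ forces $\emptyset\neq\reachedc{H}{t}\subseteq\bigcap_i\reachedc{F}{s_i}$; your reduction of the algorithm's check to this intersection claim is valid because $F$ is (WLOG) deterministic, and your finiteness remark handles infinite pre-image classes. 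The trade-offs: the paper's argument leans on structural properties of Algorithm~\ref{alg:pump-transform}'s specific output (and incidentally exhibits one device that works for both relations, which feeds the surrounding discussion of Theorem~\ref{thrm:equiv-pump-shrink}), whereas your argument needs only the bare definition of output simulation modulo $\relsub{P}{\neut}$ applied to any witness $H$, plus the already-proved forward direction of Lemma~\ref{lm:shrink}; the common-pump construction also makes it especially transparent where the hypothesis on the first symbol is used, exactly the point illustrated by the $\FSmall$ counterexample in the paper.
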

\begin{movableProof}
Construct $G$ via $G=\pump{\neut}{F}$. To show that it output
simulates $F$ modulo  $\relsub{\pi}{\neut}$ observe
that, following
Algorithm~\ref{alg:pump-transform}
 (lines~\ref{line:pump:start-self-loop}--\ref{line:pump:loop}), $G$ will have self loops labeled with $\neut$ for all vertices except the initial one.
Also, we know further that all edges bearing elements in $\neut$ must be self loops because
any two vertices connected by an edge bearing elements in $\neut$ will be
merged in the state determinization process (lines~\ref{line:pump:det-loop-begin}--\ref{line:pump:det-loop-end}) and their connecting edge deleted. 
Any string $s\in \Language{F}$ maps to a 
unique $t=\relsub{\pi}{\neut}(s)$, namely
$s$ with all elements of
$\neut$ removed. \todo{P~2.7} {String $t$ can be traced in $G$
because $s$ can be traced in $G$, since $s \relsub{P}{\neut} s$ and 
$\osmod{G}{F}{\relsub{P}{\neut}}$}.
But the particular structure of $G$ means that tracing
$s$ will simply stay at any vertices when elements of
$\neut$ are encountered. Hence, $t$ will visit the same 
states, without the extra loitering due to self loops.
Furthermore, 
$\reachedc{G}{t}=\reachedc{G}{s}\subseteq \reachedc{F}{s}$.
\end{movableProof}

\begin{theorem}[equivalence of pumping and shrinking]
\label{thrm:equiv-pump-shrink}
Given any \scdevice $F$ and $\neut \subseteq Y(F)$, 
such that all $s_1s_2s_3\dots s_k \in \Language{F}$
have $s_1 \in Y(F) \setminus \neut$, then
\[F\text{\, is }\relsub{P}{\neut}\text{-simulatable}
\;\;\Longleftrightarrow \;\;
F\text{\, is }\relsub{\pi}{\neut}\text{-simulatable}\]
\end{theorem}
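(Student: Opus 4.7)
The plan is straightforward: the two directions of the equivalence have essentially been established already as Lemmas~\ref{lemma:pi_N-implies-P_N} and~\ref{lemma:P_N-implies-pi_N}, so the proof reduces to assembling these into a single biconditional while verifying that the hypotheses match up.

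For the ($\Rightarrow$) direction, I would simply invoke Lemma~\ref{lemma:P_N-implies-pi_N}, whose hypothesis is \emph{precisely} the extra condition in the theorem: every string in $\Language{F}$ begins with a symbol in $Y(F)\setminus\neut$. No additional work is needed here; the construction of a $\relsub{\pi}{\neut}$-simulating device proceeds by forming $\pump{\neut}{F}$ and observing that, after determinization, edges labeled by elements of $\neut$ collapse to self-loops that are simply skipped when tracing the $\relsub{\pi}{\neut}$-image of a string.

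For the ($\Leftarrow$) direction, I would invoke Lemma~\ref{lemma:pi_N-implies-P_N}, which requires no assumption about the first symbol of strings in $\Language{F}$. Here the construction takes an $H$ output simulating $F$ modulo $\relsub{\pi}{\neut}$, scrubs $\neut$ symbols from its edge labels without loss (since no such sequences are in the image of $\relsub{\pi}{\neut}$), adds $\neut$-labeled self-loops everywhere except at a freshly attached initial vertex, and verifies the two conditions of Definition~\ref{def:osmod} using the identity $\reachedc{G}{t} = \reachedc{H}{\relsub{\pi}{\neut}(t)}$.

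The only point worth pausing over is whether the asymmetry in hypotheses is real: Lemma~\ref{lemma:P_N-implies-pi_N} needs the no-leading-$\neut$-symbol assumption but Lemma~\ref{lemma:pi_N-implies-P_N} does not. Since the theorem imposes the stronger hypothesis globally, both lemmas are applicable, and the biconditional follows immediately by chaining them. I do not anticipate any real obstacle; the main thing to be careful about is simply citing the correct lemma in each direction and noting that the theorem's hypothesis is what licenses the $(\Rightarrow)$ direction via Lemma~\ref{lemma:P_N-implies-pi_N}.
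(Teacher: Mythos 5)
Your proposal is correct and matches the paper's proof exactly: the paper also proves Theorem~\ref{thrm:equiv-pump-shrink} by simply combining Lemma~\ref{lemma:pi_N-implies-P_N} and Lemma~\ref{lemma:P_N-implies-pi_N}, with the theorem's no-leading-$\neut$-symbol hypothesis needed only for the direction supplied by Lemma~\ref{lemma:P_N-implies-pi_N}. Your attribution of each lemma to the correct direction of the biconditional is accurate, so nothing further is required.
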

\begin{movableProof}
Combine Lemma~\ref{lemma:pi_N-implies-P_N} and Lemma~\ref{lemma:P_N-implies-pi_N}.
\end{movableProof}

\smallskip
The intuitive interpretation is clear. If elements of $\neut$ can be pumped, you
cannot conduct any computation that depends on their number. This is true even
when $F$ has strings in its language that include some elements of $\neut$
because, when such elements are encountered,
 they could either be pumped additions
or originally in the string---two cases which cannot be distinguished. The following remark does emphasize
that some care is needed, however.

\begin{remark}
Both Lemmas~\ref{lemma:pi_N-implies-P_N} and \ref{lemma:P_N-implies-pi_N}
construct a new \device. That this should be necessary for
Lemma~\ref{lemma:pi_N-implies-P_N} is scarcely surprising: if
$\osmod{H}{F}{\relsub{\pi}{\neut}}$, an attempt at redeploying \device $H$
directly under pumping could fail immediately since $\Language{H}$ need contain
no strings with any element of $\neut$, yet for $\relsub{P}{\neut}$, the
device must consume many strings full of $\neut$ elements.
In Lemma~\ref{lemma:P_N-implies-pi_N}, the case for construction of a new \scdevice
is more subtle. We show this as an example.
\end{remark}

\newcommand{\FSimp}{\ensuremath{F_{\text{tiny}}}}
\newcommand{\FSmall}{\ensuremath{F_{\text{small}}}}

\begin{example}
\label{example:not-implies}
For $F$ with $\neut \subseteq Y(F)$, beware that
\[\osmod{G}{F}{\relsub{P}{\neut}} \notimplies \osmod{G}{F}{\relsub{\pi}{\neut}}.\]
Figure~\ref{fig:notOS:input} is an example of a simple \scdevice, we shall refer to it as $\FSimp$.
Figure~\ref{fig:notOS} gives two more \devices, 
$G_1$ and $G_2$.  
All three have \todo{P~2.6}$Y(\FSimp) = Y(G_1) = Y(G_2) = \rebut{\{a,b,n\}}$.
Both $G_1$ and $G_2$ output simulate $\FSimp$ modulo~${\relsub{P}{\{n\}}}$, 
but
$G_2$ fails to output simulate $\FSimp$ modulo~${\relsub{\pi}{\{n\}}}$. The 
string $anb \in \Language{\FSimp}$, but ${\relsub{\pi}{\{n\}}\!(anb)} = ab \not\in \Language{G_2}$.
\end{example}

\begin{figure}[ht!]
    \centering
         \centering
        \includegraphics[width=0.64\linewidth]{./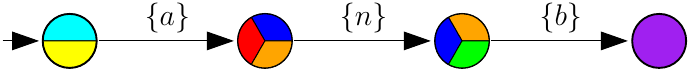}
         \caption{An example of a simple \scdevice $\FSimp$, with $Y(\FSimp) = \rebut{\{a,b,n\}}$.}
         \label{fig:notOS:input}
\end{figure}

\begin{figure}[ht!]
    \centering
    \begin{subfigure}[b]{0.74\linewidth}
         \centering
         \includegraphics[width=0.64\textwidth]{./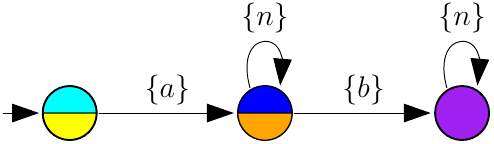}
         \caption{A $G_1$ which output simulates $\FSimp$ modulo~${\relsub{P}{\{n\}}}$\!, and 
         modulo~${\relsub{\pi}{\{n\}}}$ as well.
            Device $G_1$ can be obtained from Algorithm~\ref{alg:pump-transform}.
         }
         \label{fig:notOS:os}
        \vspace*{2ex}
     \end{subfigure}
    \begin{subfigure}[b]{0.8\linewidth}
         \centering
        \includegraphics[width=0.8\textwidth]{./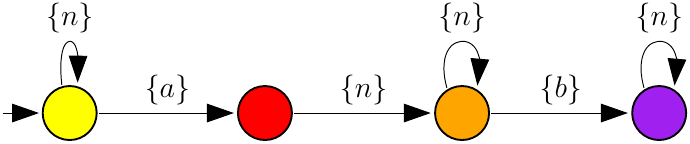}
         \caption{A \device $G_2$ 
         that output simulates $\FSimp$ modulo~${\relsub{P}{\{n\}}}$\!, but 
         which fails modulo~${\relsub{\pi}{\{n\}}}$.
         }
        \label{fig:notOS:os2}
        \vspace*{2ex}
     \end{subfigure}
    \caption{Two \devices, related to $\FSimp$, the one in Figure~\ref{fig:notOS:input}, help illustrate how Theorem~\ref{thrm:equiv-pump-shrink}
    is a statement about the existence of some device. 
    A device that will output simulate modulo~${\relsub{P}{\{n\}}}$ need not modulo~${\relsub{\pi}{\{n\}}}$;
  however, the devices Algorithm~\ref{alg:pump-transform} produces will.
    }
    \label{fig:notOS}
\end{figure}

\begin{figure}[ht!]
    \centering
    \begin{subfigure}[b]{0.8\linewidth}
         \centering
         \includegraphics[width=0.8\textwidth]{./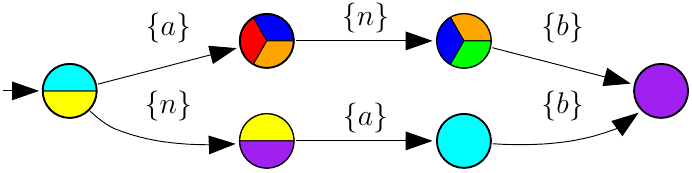}
         \caption{A new \scdevice $\FSmall$ with an additional string `$nab$' being added to $\FSimp$.
         }
         \label{fig:notOS:input-modified}
        \vspace*{2ex}
     \end{subfigure}
    \begin{subfigure}[b]{0.8\linewidth}
         \centering
        \includegraphics[width=0.8\textwidth]{./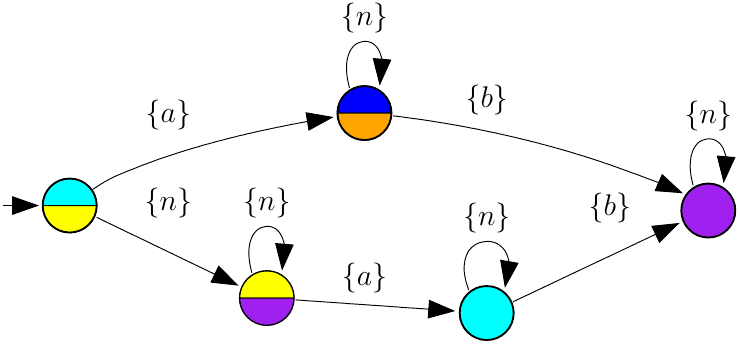}
         \caption{A \device $G'_1$ 
         that output simulates $\FSmall$ modulo~${\relsub{P}{\{n\}}}$\!.}
        \label{fig:notOS:os2-modified}
        \vspace*{2ex}
     \end{subfigure}
    \caption{
    After adding a new string, the \scdevice in Figure~\ref{fig:notOS:input} is only output simulatable under ${\relsub{P}{\{n\}}}$, not under ${\relsub{\pi}{\{n\}}}$.}
    \label{fig:notOS-modified}
\end{figure}

\begin{remark}
The condition on the first element of the sequences in Lemma~\ref{lemma:P_N-implies-pi_N} and Theorem~\ref{thrm:equiv-pump-shrink} is necessary.
The \scdevice $\FSmall$ 
shown in Figure~\ref{fig:notOS:input-modified}
is obtained by
adding a string `$nab$' to $\FSimp$.
Figure~\ref{fig:notOS:os2-modified} gives a \device $G'_1$ that output simulates $\FSmall$ modulo ${\relsub{P}{\{n\}}}$. 
However, no device exists that 
can output simulate modulo ${\relsub{\pi}{\{n\}}}$ because 
${\relsub{\pi}{\{n\}}\!(na)} = {\relsub{\pi}{\{n\}}\!(a)} = a$, and $\{\text{cyan}\} \cap \{\text{red}, \text{blue}, \text{orange}\}  = \emptyset$.
This caveat is neither a particular concern nor limitation for us, as
\scdevices that are derivatives (Definition~\ref{def:changetrans}) have
sequences where the first element is distinct. 
For these, the first element gives the offset or initial value, whereas
the remainder has the role of tracking the dynamic variations.
It is pumping or shrinking of these
variations that is important.
\end{remark}


\section{Data acquisition semantics reprise: Monoidal variators}
\label{sec:monoidal}

The previous two sections do not break the atomicity of the symbols in
the original signal space. 
For instance, the polling acquisition mode (modeled via the $\neut$-pump relation) adds
neutral elements to the stream; it does not consider what happens if a change
is occurring continuously so that the query arrives amid a change.  If we desire
to query the sensor with maximal flexibility, such as if one were to model a
general asynchronous interaction, then some 
extra structure is required.
To move in this
direction, we will need the output variator to possess some additional properties.


\begin{definition}[monoidal variator] 
\label{def:monvar}
A \defemp{monoidal variator} for an observation set  $Y$,  is a 
monoid $(\mon{D}, \monop, \id{D})$ 
and a 
right action\footnotemark\, of $\mon{D}$ on $Y$,  $\actraw{D}: Y \times \mon{D} \to Y$. 
\footnotetext{Recall that $\actraw{D}$ is a total function with two requirements--- identity: $\act{D}{y}{\id{D}} = y$; compatibility: $\act{D}{(\act{D}{y}{d_1})}{d_2} = \act{D}{y}{(d_1\monop d_2)}$, for all $y$ in $Y$, and all $d_1$, $d_2$ in $\mon{D}$.}
\end{definition}

Being concise, we will write $(\mon{D}, \actraw{D})$ for a monoidal variator, the notation showing an operation
in the second slot that helps to indicate that it is an action and hence $\mon{D}$ has additional algebraic structure. (This is consistent with the previous notation when we would include the ternary relation within the pair.)

Some of the earlier examples had output variators that were monoidal or could
be extended to be, while not so for others.

\begin{example}[Lane sensor, again]
Building on Example~\ref{ex:lane}, the observation variator was given as
$\set{D_\text{3-lane}}=\{\exleft, \excent, \exright\}$.
Given that there are \num{3} lanes, it means that one might wish to 
combine, say, two $\exright$ actions, one after the other. As there are only
three elements, two $\exright$ actions might map to a $\exright$ (as that seems less
wrong than $\exleft$ or $\excent$). Following this might give the following
`operator':
\begin{center}
\begin{tabular}{c|c|c|c|}
\multicolumn{1}{c}{$\monop_1$} & \multicolumn{1}{c}{$\exleft$}  &
\multicolumn{1}{c}{$\excent$} & \multicolumn{1}{c}{$\exright$} \\\cline{2-4}
    $\exleft$ & $\exleft$&  $\exleft$ & $\excent$ \\ \cline{2-4}
    $\excent$ & $\exleft$ &  $\excent$ & $\exright$ \\ \cline{2-4}
    $\exright$ &$\excent$&  $\exright$ & $\exright$ \\ \cline{2-4}
\end{tabular}
\end{center}
But $\monop_1$ fails to be a monoid  operator as 
since the associativity rule does not hold:
$(\exleft\monop_1\exleft)\monop_1\exright\neq \exleft\monop_1(\exleft\monop_1\exright)$.

Here is an alternative which does yield a valid operator, although it is still hard to give 
it a consistent interpretation:
\begin{center}
\begin{tabular}{c|c|c|c|}
\multicolumn{1}{c}{$\monop_2$} & \multicolumn{1}{c}{$\exleft$}  &
\multicolumn{1}{c}{$\excent$} & \multicolumn{1}{c}{$\exright$} \\\cline{2-4}
    $\exleft$ & $\exright$&  $\exleft$ & $\excent$ \\ \cline{2-4}
    $\excent$ & $\exleft$ &  $\excent$ & $\exright$ \\ \cline{2-4}
    $\exright$ &$\excent$&  $\exright$ & $\exleft$ \\ \cline{2-4}
\end{tabular}
\end{center}
But now the action causes difficulty. While $\excent$ must map $0$, $1$ and
$2$, each to themselves, the form of $\monop_2$ requires that the
action treat $\exright \monop_2 \exright$ identically with 
$\exleft$. This fails to describe lanes $0$, $1$ and $2$, in a 
consistent fashion.
The lanes do not seem to admit any monoidal variator.
\end{example}

\begin{figure}[ht!]
    \centering
    \includegraphics[width=0.48\textwidth]{./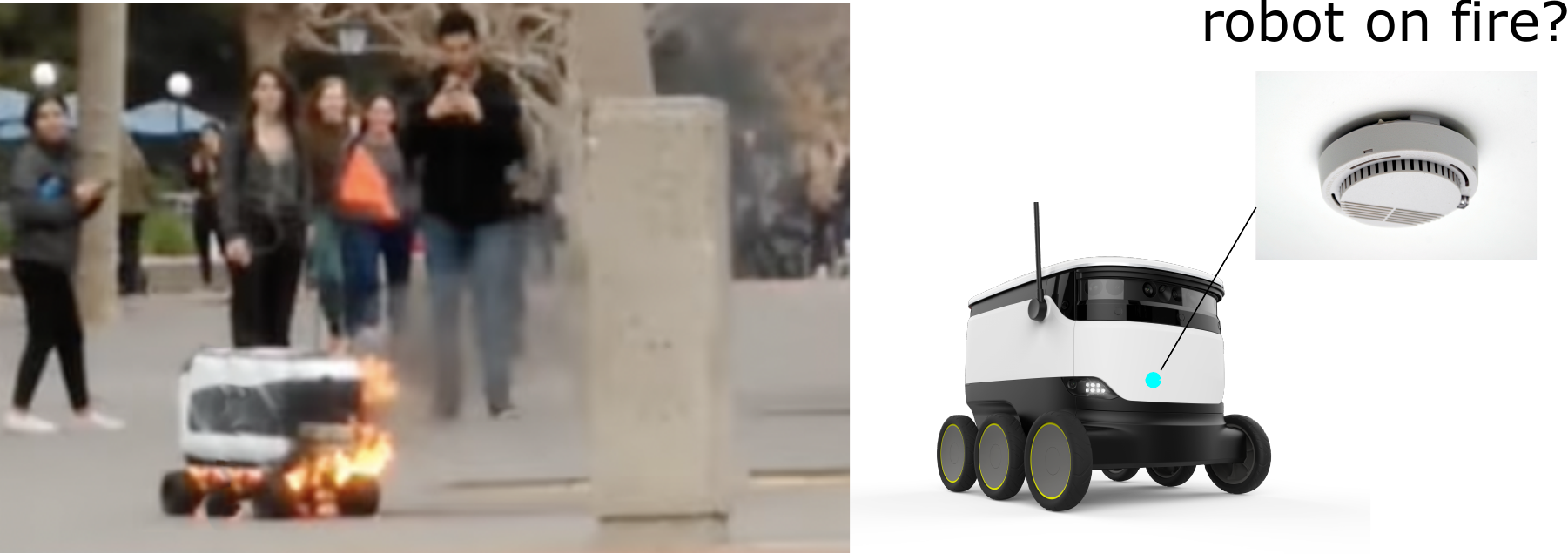}
    \caption{A delivery robot equipped with a smoke detector in order to determine whether it has caught on fire.
    \label{fig:robot-on-fire}}
\end{figure}

\begin{example}[Robot on fire]
\label{ex:robot-on-fire-sensor}
Consider a sensor indicating that the robot has encountered some irrecoverable failure.
For instance, the delivery robot shown in Figure~\ref{fig:robot-on-fire} is equipped with a
sensor to detect some irreversible condition. Once the sensor is triggered,
it retains this status permanently.

Representing the status of the robot by 0 for `normal' and 1 for `abnormal',
we may then use a monoid variator $\mon{D_2}=\{\smiley, \frownie\}$, with
$\id{\mon{D_2}}$ is $\smiley$, and the monoid operator $\monop$ and
the right action~$\actraw{D}$ in table form as:

\begin{center}
\begin{tabular}{c|c|c|}
\multicolumn{1}{c}{$\monop$} & \multicolumn{1}{c}{$\smiley$}  &
\multicolumn{1}{c}{$\frownie$}  \\\cline{2-3}
    \smiley & \smiley &  \frownie \\ \cline{2-3}
    \frownie& \frownie &  \frownie \\ \cline{2-3}
\end{tabular}
\quad\quad and \quad\quad
\begin{tabular}{c|c|c|}
\multicolumn{1}{c}{$\actraw{\set{D_2}}$} & \multicolumn{1}{c}{$\smiley$}  &
\multicolumn{1}{c}{$\frownie$}  \\\cline{2-3}
    {0} & {0} &  {1} \\ \cline{2-3}
    {1} & {1} &  {1} \\ \cline{2-3}
\end{tabular}
\end{center}  
\end{example}

\begin{example}[Wall sensor, re-revisited]
\label{example:wall-invertable}
The $\set{D_2}$ and table for $\factraw{\set{D_2}}$ 
in Example~\ref{ex:create-binary-beam} shows that 
it has the potential to be monoidal, and indeed an appropriate right action
can be defined. 
\end{example}

The case in Example~\ref{example:wall-invertable} also admits an inverse, leading on to the following.

\begin{proposition}
\label{prop:group}
A sufficient condition for an affirmative answer to Question~\ref{q:change} when 
$(\mon{D}, \actraw{D})$ is a monoidal variator is that 
$\mon{D}$ be, additionally, a group (i.e., possesses inverses), and
there be a $y_0 \in Y(F)$ for which 
$\act{D}{y_0}{\mon{D}} = Y(F)$.
\end{proposition}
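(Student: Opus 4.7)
The plan is to reduce this claim directly to Proposition~\ref{prop:function}. By Definition~\ref{def:monvar}, the right action $\actraw{D}\colon Y \times \mon{D} \to Y$ is already a (total) function, so interpreting $(y, d, y') \in \factraw{D}$ as $y' = \fact{D}{y}{d}$ makes the single-valued-partial-function hypothesis of Proposition~\ref{prop:function} trivially satisfied. Thus all that remains is to establish that $\change{F}{\mon{D}}$ is left-total, which by Definition~\ref{def:changetrans} amounts to: for every consecutive pair $(y_{k-1}, y_k)$ that appears in some $y_0 y_1 \dots y_m \in \Language{F}$, exhibit some $d_k \in \mon{D}$ with $\act{D}{y_{k-1}}{d_k} = y_k$.

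To do this, I would exploit the transitivity hypothesis $\act{D}{y_0}{\mon{D}} = Y(F)$ together with invertibility. Since $Y(F)$ is finite, for each $y \in Y(F)$ I can fix a witness $g_y \in \mon{D}$ with $\act{D}{y_0}{g_y} = y$. Because $\mon{D}$ is a group, $g_y$ has an inverse, so I may set $d := g_y^{-1} \monop g_{y'}$; the compatibility and identity axioms of a right action then give
\[
\act{D}{y}{d} \;=\; \act{D}{(\act{D}{y_0}{g_y})}{(g_y^{-1}\monop g_{y'})} \;=\; \act{D}{y_0}{(g_y \monop g_y^{-1} \monop g_{y'})} \;=\; \act{D}{y_0}{g_{y'}} \;=\; y'.
\]
In particular, given any string $y_0 y_1 \dots y_m \in \Language{F}$, the sequence $y_0 d_1 d_2 \dots d_m$ with $d_k := g_{y_{k-1}}^{-1} \monop g_{y_k}$ is related to it under $\change{F}{\mon{D}}$, so left-totalness holds.

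With both hypotheses of Proposition~\ref{prop:function} now verified, the conclusion follows: there exists an \scdevice $F'$ with $\osmod{F'}{F}{\change{F}{\mon{D}}}$, and in fact such an $F'$ can be produced explicitly by feeding $F$ and $(\mon{D}, \actraw{D})$ into Algorithm~\ref{alg:variation}. The only non-routine ingredient is the middle step: recognizing that ``transitivity under $y_0$ plus invertibility'' upgrades, via the compatibility law, to the stronger statement that every ordered pair $(y, y')$ of observations is reachable by the action of some single monoid element. The rest is straightforward bookkeeping against the two cited results.
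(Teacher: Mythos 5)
Your proof is correct, and it reaches the conclusion by a slightly different decomposition than the paper. The paper routes the claim through Proposition~\ref{prop:injective-f-act}: it shows that for every pair $y, y' \in Y(F)$ there is a connecting element $g = d^{-1}\monop d'$ (the same algebraic move you make, using transitivity from the base point plus inverses and the compatibility law), and then it must additionally prove that this $g$ is \emph{unique}, since uniqueness is part of that proposition's hypothesis; the uniqueness argument is a second computation with inverses. You instead route the claim through Proposition~\ref{prop:function}, observing that single-valuedness of $\fact{D}{y}{d}$ is automatic because a monoidal variator's action is a total function, so the only thing left to check is left-totality of $\change{F}{\mon{D}}$, which your witness construction $d_k = g_{y_{k-1}}^{-1}\monop g_{y_k}$ supplies (and your chain of equalities via compatibility is correct). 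What your route buys is that the uniqueness step disappears entirely; what it costs is nothing substantive, though note that Proposition~\ref{prop:function}'s own proof in the paper is only completed by deferring the explicit finite-state construction to Algorithm~\ref{alg:variation}, so your final appeal to that algorithm is doing the same load-bearing work as in the paper. Two cosmetic points: finiteness of $Y(F)$ is not actually needed to fix the witnesses $g_y$, and you should keep the proposition's distinguished base point notationally separate from the leading symbol of a string in $\Language{F}$ (your computation is stated for arbitrary $y, y'$, so it is unaffected, but the shared name $y_0$ invites confusion).
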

\begin{movableProof}
We establish the conditions in Proposition~\ref{prop:injective-f-act}:
for every $y, y'\in Y(F)$ there exists some $d\in\mon{D}$ so that
$y = \act{D}{y_0}{d}$ and $d'\in\mon{D}$ so $y' = \act{D}{y_0}{d'}$.
But then let $g = d^{-1}\monop d'$, then
$\act{D}{y}{g} = \act{D}{y}{(d^{-1}\monop d')} = \act{D}{(\act{D}{y}{d^{-1}})}{d'} = y'$.
And $g$ must be unique, for if $g$ and $g'$  both have 
$y' = \act{D}{y}{g'} = \act{D}{y}{g}$, then  $\act{D}{y}{(g'\monop g^{-1})} = 
\act{D}{(\act{D}{y}{g'})}{g^{-1}} = \act{D}{y'}{g^{-1}} = \act{D}{(\act{D}{y}{g})}{g^{-1}} = \act{D}{y}{(g\monop g^{-1})} = \act{D}{y}{\id{D}}$,
thus $g'^{-1} = g^{-1}$. Hence $g' = g$.
\end{movableProof}

\begin{example}[$\ang{90}$ Minispot, again]
In Example~\ref{ex:compass90}, we took as the
output variator the integers and addition. After discussing restricting the transformation, what remained was, $\Integers_8$, the cyclic group of order eight.
\end{example}


\medskip

Suppose a down-stream consumer of some 
\device generates its
queries in an asynchronous fashion. 
If that \device measures changes, then it reports the change since the last query.
When the consumer
queries at a high frequency, the change sequence contains many elements, presumably with only moderate changes. Otherwise, the
sequence is sparse and the change between symbols would be more considerable.
To model this asynchronous data acquisition mode wherein the events reported are causally triggered by the 
down-stream element, we give the definition 
that follows.  It expresses the idea that the sequences of changes should agree on
the accumulated change, regardless of when the queries occur.

\begin{definition}[monoid disaggregator] 
\label{def:disaggregator}
Given the monoid $(\mon{D}, \monop, \id{D})$
and observation set $Y$,
the associated \emph{monoid disaggregator} is a relation, 
$\disaggregator{Y}{D}
 \subseteq 
\left(\{\epsilon\} \cup  (Y \scat \KleeneStr{\mon{D}})\right)
\times
\left(\{\epsilon\} \cup  (Y \scat \KleeneStr{\mon{D}})\right)$
 defined as:
\begin{tightenumerate}
\item[0)]  $\epsilon\disaggregator{Y}{D} \epsilon$, and 
\item[1)]  $y_0\disaggregator{Y}{D} y_0$ for all $y_0 \in Y$, and 
\item[2)]  $y_0 d_1 d_2 \dots d_m  \disaggregator{Y}{D} y_0 d'_1 d'_2 \dots d'_n$ if
$d_1 \monop d_2 \monop  \cdots \monop d_m = 
d'_1 \monop d'_2 \monop \cdots \monop d'_n$.
\end{tightenumerate}
\end{definition}

Based on the above relation, and adhering to the established
pattern, we have the natural question, posed in two forms:

\begin{question}
\label{q:dis}
For any \device $F$ with monoidal variator $(\mon{D},\actraw{D})$ on $Y(F)$, is it $\left(\change{F}{\mon{D}} \rcmp \disaggregator{Y(F)}{D}\right)$-simulatable?
\end{question}

\addtocounter{q}{-1}
\begin{question}[\textbf{Constructive}]
Given $F$ with monoidal variator $(\mon{D},\actraw{D})$, 
find a \scdevice $F'$ such that $\osmod{F'}{F}{\change{F}{\mon{D}} \rcmp \disaggregator{Y(F)}{D}}$, 
or indicate none exist.
\end{question}

\begin{remark}
In the reflection at the beginning of Section~\ref{sec:shrink-and-pump},
on Example~\ref{ex:compass90} and its 
relation to Example~\ref{ex:compass45}, attention was directed to the significance of 
missing an element from the symbol stream (there a skipped $\ang{45}$ gave the appearance of a $\ang{90}$ turn). With a
monoidal variator one can talk meaningfully of a single symbol from the
variator expressing accumulated changes over time: 
the monoidal variator will take
you from any configuration to any other, 
regardless of how many elements in the sequence of $\ang{45}$ turns have occurred.
\end{remark}

Question~\ref{q:dis} is challenging because it is rather cumbrous. The source
of this is Definition~\ref{def:disaggregator}: it expresses the idea that
asynchronous queries might happen at any time in a very direct and unwieldy
way.  So, instead, we will consider the `accumulated changes' intuition just
described in the previous remark. This gives a new relation (actually a function) and, following the
pattern employed for the pump and shrinking cases, we will then form a connection
between the two.

\begin{definition}[monoid integrator] 
\label{def:collapser}
Given the
monoid  $(\mon{D}, \monop, \id{D})$ and observation set $Y$, 
the associated \emph{monoid integrator} is a function $\collapser{Y}{D}$ is given by: \begin{align*}
\collapser{Y}{D} \colon  \{\epsilon\} \cup  (Y \scat \KleeneStr{\mon{D}}) & \to 
\{\epsilon\} \cup  Y \cup (Y \scat \mon{D}) \\[-2pt]
\epsilon &\mapsto \epsilon \\[-2pt]
y_0 & \mapsto y_0 \\[-2pt]
y_0 d_1 d_2 \dots d_m & \mapsto y_0 (d_{1} \monop d_{2} \monop\cdots  \monop d_{m}).
\end{align*}
\end{definition}

Notice that $\collapser{Y}{D}$ is a rather different relation from the previous
ones.  
All the relations express an
alteration under which we wish the \device to be invariant.
Or more precisely: in which we seek to determine whether the
requisite information processing \emph{can} be invariant.
The relations prior to Definition~\ref{def:collapser}
all describe transformations which we may envision
being produced and processed directly---it is easy to think of 
 the robot operating in the world and tracing
strings in the relation's image.
Not so for the monoid integrator: it serves mostly as an abstract definition.
All the strings are short: the robot gets at most two symbols.  Nevertheless, the usual questions still apply:

\begin{question}
\label{q:int}
For any \device $F$ with monoidal variator 
$(\mon{D},\actraw{D})$,
 of $Y(F)$, is it $\left(\change{F}{\mon{D}} \rcmp \collapser{Y(F)}{D}\right)$-simulatable?
\end{question}

\addtocounter{q}{-1}
\begin{question}[\textbf{Constructive}]
\label{q:int-const}
Given $F$ with monoidal variator $(\mon{D},\actraw{D})$, 
find a \scdevice
$F'$ such that $\osmod{F'}{F}{\change{F}{\mon{D}} \rcmp \collapser{Y(F)}{D}}$,
or indicate none exist.
\end{question}

\begin{algorithm}
\algosize
\caption{Monoid Integrator: 
$\mi{\mon{D}}{F} \rightarrowtail G$}
\label{alg:monoid-integrator}
\renewcommand{\algorithmicrequire}{\textbf{Input:}}
\renewcommand{\algorithmicensure}{\textbf{Output:}}
\SetKwInOut{Input}{Input}
\SetKwInOut{Output}{Output}
\Input{A \scdevice $F$, a monoid variator 
with monoid $(\mon{D}, \monop, \id{D})$
}
\Output{A deterministic \scdevice $G$ if $G$ output simulates $F$ modulo $\change{F}{\mon{D}} \rcmp \collapser{Y}{\mon{D}}$; otherwise, return `No Solution'}
$F'\gets\variation{\set{D}}{F}$\label{line:monoid-integrator:first}\\
\If {$F'$ is `No Solution'}
{
	\Return\xspace`No
Solution'\label{line:monoid-integrator:derivative-failure}
}
Initialize $G$ with an initial state $v_0$ \label{line:monoid-integrator:levelone}\\
\For{edge $v_0'\xrightarrow{y} w'$ in $F'$}
{
    Create a state $w$ in $G$ and add edge $v_0\xrightarrow{y} w$\label{line:monoid-integrator:levelonetwo}\\
    Associate $w$ with $w'$,  $c(w)\gets
c(w')$\label{line:monoid-integrator:leveltwo}\\
    $q'\gets Queue([(\id{\set{D}},w')])$\label{line:monoid-integrator:leveltwothree}\\
    \While{$q' \neq \empty$}
    {
        $(d_{\acc}, v')\gets q'.pop()$\\
        \For {$d \in v'$.OutgoingLabels()}
        {
            Let $w'$ be the state such that $v'\xrightarrow{d}{w'}$\\
            \uIf{there is no state $v_d$ in $G$}
            {
                Create a state $v_d$ in $G$,
                $c(v_d)\gets c(w')$\\
                Add edge $w\xrightarrow{d} v_d$ \label{line:monoid-integrator:add-state}
            }\Else{
                $X\gets c(v_d)\cap
c(w')$\label{line:monoid-integrator:output-intersection}\\
                \uIf{$X = \emptyset$}
                {
                    \Return \xspace `No
Solution'\label{line:monoid-integrator:output-inconsistent}
                }\Else{
			$c(v_d)\gets X$\label{line:monoid-integrator:set-output}
		}
            }
            \If{$(d_{\acc}\monop d, w')$ is not visited}
            {
                Add $(d_{\acc}\monop d, w')$ to
$q'$\label{line:monoid-integrator:levelthree}
            }
        }
        
    }
}
\Return $G$
\end{algorithm}

To answer these questions, a constructive procedure is given in Algorithm~\ref{alg:monoid-integrator}: it
builds a three-layer tree, where the first layer is the initial state, the
second layer consists of the states reached by a single label in $Y(F)$, the
third layer consists of the states reached by strings $yd$ where $y\in Y(F)$
and $d\in \mon{D}$. 
The first layer is produced through 
lines~\ref{line:monoid-integrator:first}--\ref{line:monoid-integrator:levelone}, and
the second layer through
lines~\ref{line:monoid-integrator:levelonetwo}--\ref{line:monoid-integrator:leveltwo}.
The third layer is constructed via a depth-first search on the derivative $F'$
as shown between
lines~\ref{line:monoid-integrator:leveltwothree}--\ref{line:monoid-integrator:levelthree}
by keeping track of the accumulated change $d_{\acc}$ and creating the new state
reached by $yd_{\acc}$ accordingly. The correctness of the algorithm follows next.

\begin{lemma}  
\label{lm:monoid-integrator}
Algorithm~\ref{alg:monoid-integrator} gives a \scdevice that output simulates
$F$ modulo $\change{F}{\mon{D}} \rcmp \collapser{Y(F)}{\mon{D}}$ if and only if there exists
a solution for Question~\ref{q:int-const}.
\end{lemma}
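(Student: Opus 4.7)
The plan is to decompose the relation $\change{F}{\mon{D}} \rcmp \collapser{Y(F)}{\mon{D}}$ along the composition and leverage Theorem~\ref{thrm:chain}. Specifically, lines~\ref{line:monoid-integrator:first}--\ref{line:monoid-integrator:derivative-failure} first build $F' = \variation{\set{D}}{F}$, for which Lemma~\ref{lm:changeC} provides $\osmod{F'}{F}{\change{F}{\mon{D}}}$ whenever that first call succeeds. I then argue that the three-layer tree built in lines~\ref{line:monoid-integrator:levelone}--\ref{line:monoid-integrator:levelthree} yields $\osmod{G}{F'}{\collapser{Y(F)}{\mon{D}}}$, and apply Theorem~\ref{thrm:chain} to obtain the claimed output simulation.

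For the forward direction ($\Longrightarrow$), I would verify the two clauses of Definition~\ref{def:osmod} for the monoid-integrator step. Because $\collapser{Y(F)}{\mon{D}}$ is a total function on $\{\epsilon\}\cup(Y(F)\scat \KleeneStr{\mon{D}})$, clause~1 reduces to showing that every image string $\collapser{Y(F)}{\mon{D}}(s)$ can be traced in $G$: the empty string sits at $v_0$; any single-symbol image $y_0$ is realized by the layer-two edge placed in line~\ref{line:monoid-integrator:levelonetwo}; any two-symbol image $y_0 d$ with $d=d_1\monop\cdots\monop d_m$ is realized by the layer-three edge created during the depth-first traversal that tracks $d_{\acc}$. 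For clause~2, I would use an induction on the DFS to show that whenever two derivative-strings $y_0 d_1\cdots d_m$ and $y_0 d'_1\cdots d'_n$ share both their leading observation and their accumulated product in $\mon{D}$, they target the same $v_d$, and line~\ref{line:monoid-integrator:output-intersection} intersects the output color sets of all the $F'$-states they visit. Hence $c(v_d)\subseteq \bigcap_{w'} c(w') = \bigcap \reachedc{F'}{\cdot}$, which combined with the derivative stage gives $\reachedc{G}{y_0 d}\subseteq \reachedc{F}{s}$ for every pre-image $s$ of $y_0 d$ under the composed relation.

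For the backward direction ($\Longleftarrow$), I would prove the contrapositive: if the algorithm fails, no valid $H$ exists. Two failure points must be treated. The first, line~\ref{line:monoid-integrator:derivative-failure}, occurs when $\variation{\set{D}}{F}$ returns `No Solution'; by Lemma~\ref{lm:changeC} this is because either $\change{F}{\mon{D}}$ is not left-total on $\Language{F}$ or two $\change{F}{\mon{D}}$-equivalent strings in $\Language{F}$ have disjoint output sets in $F$. The first case forces the composed relation $\change{F}{\mon{D}}\rcmp\collapser{Y(F)}{\mon{D}}$ to violate Property~\ref{property:total}, since the right factor is a function; the second case persists through the function $\collapser{Y(F)}{\mon{D}}$, producing two strings in $\Language{F}$ that get mapped to a common image but whose outputs in $F$ have empty intersection, again contradicting any candidate $H$. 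The second failure point, line~\ref{line:monoid-integrator:output-inconsistent}, means the algorithm has discovered two derivative-strings $y_0 d_1\cdots d_m$ and $y_0 d'_1\cdots d'_n$ in $\Language{F'}$ with equal accumulated product in $\mon{D}$ but $c(w')\cap c(v_d) = \emptyset$; chasing these back through the derivative yields two strings $s_1,s_2\in\Language{F}$ that both collapse to the same $y_0 d$ yet satisfy $\reachedc{F}{s_1}\cap\reachedc{F}{s_2}=\emptyset$, so no output for $y_0 d$ in any candidate $H$ can satisfy clause~2 of Definition~\ref{def:osmod}.

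The main obstacle is the bookkeeping in the second failure case: I need to connect the ``local'' intersection computed along the DFS (which only compares states reached by the current accumulated $d_{\acc}$) to a ``global'' statement over all derivative-strings sharing the same collapsed image. The monoid's associativity is what makes this work, and I would formalize it by showing inductively that at every visit to $v_d$ the color set $c(v_d)$ equals $\bigcap_{w'\in S_d} c(w')$, where $S_d$ is the set of states in $F'$ reachable by some string $y_0 d_1\cdots d_k$ with $d_1\monop\cdots\monop d_k = d$; the intersection reaching $\emptyset$ is then exactly the obstruction to any $H$ existing.
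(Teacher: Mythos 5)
Your proposal is correct and follows essentially the same route as the paper: the forward direction rests on Lemma~\ref{lm:changeC} plus the accumulated-product and output-intersection bookkeeping of the three-layer tree (your explicit appeal to Theorem~\ref{thrm:chain} merely packages what the paper argues directly as a two-stage containment $\reachedc{G}{r}\subseteq\reachedc{F'}{t}\subseteq\reachedc{F}{s}$), and the backward direction is the same contrapositive over the two `No Solution' exits. One small slip: the obstructions are families $S$ of strings whose outputs have empty \emph{common} intersection, not necessarily two strings with pairwise-empty intersection, but the contradiction with clause~2 of Definition~\ref{def:osmod} goes through unchanged at the family level, exactly as in the paper.
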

\begin{movableProof}
$\Longrightarrow$: We show that if Algorithm~\ref{alg:monoid-integrator} gives a
\scdevice $G$, then $G$ output simulates $F$ modulo $\change{F}{\mon{D}} \rcmp
\collapser{Y(F)}{\mon{D}}$. First, we need to show that for every string $s\in
\Language{F}$, $\change{F}{\mon{D}} \rcmp \collapser{Y(F)}{\mon{D}}(s)\subseteq \Language{G}$.
If $s$ has length 0 or 1, this holds trivially. 
Let $T_s= \left\{t \mid s \change{F}{\mon{D}} t \right\}$,  i.e. the set of images of string $s$ under the delta relation. 
Suppose $s=s_1s_2
\dots s_n$, having length greater than 1. This $s$ 
can be traced in $F'$, for otherwise we will get a `No
Solution' on line~\ref{line:monoid-integrator:derivative-failure}, 
contradicting the assumption that $G$ was produced.
Since $\mon{D}$ is a monoid, every string
in $T_s$, denoted as $yd_1d_2\dots d_n$, will be mapped to a string $yd_{\acc}$
where $d_{\acc}=d_1\monop d_2\dots \monop d_n$. The depth-first search procedure
between
lines~\ref{line:monoid-integrator:leveltwothree}--\ref{line:monoid-integrator:levelthree}
will traverse each string $yd_1d_2\dots d_n$ in $F'$ and compute its image
$yd_{\acc}$. 
It creates a new state $v_d$ that is reached by $yd_{\acc}$.
The algorithm continues to increase the depth until the current string
arrives not only in a state in $F'$ that is visited, but one visited under the same accumulated change
$d_{\acc}$. (This is why the queue $q'$ contains pairs.) 
 Therefore, all images of
$T_s$ after $\collapser{Y(F)}{\mon{D}}(s)$ can be traced in $G$. Hence, $\change{F}{\mon{D}}
\rcmp \collapser{Y(F)}{\mon{D}}(s)\subseteq \Language{G}$. 

Second, we will show that
for every string $s\in \Language{F}$ and $r\in \change{F}{\mon{D}} \rcmp
\collapser{Y(F)}{\mon{D}}(s)$, we have $\reachedc{F}{s}\supseteq \reachedc{G}{r}$.
Via Lemma~\ref{lm:changeC}, for every string $s\in \Language{F}$, 
$\reachedc{F'}{t}\subseteq \reachedc{F}{s}$ holds for every string $t\in T_s$. In
line~\ref{line:monoid-integrator:set-output}, the image of string $t$ 
outputs a subset of $\reachedc{F'}{t}$. As a consequence, for every string $s\in
\Language{F}$ and $r\in \change{F}{\mon{D}} \rcmp
\collapser{Y(F)}{\mon{D}}(s)$, we have $\reachedc{F}{s}\supseteq
\reachedc{G}{r}$.

$\Longleftarrow$: If there exists a solution $G$ for Question~\ref{q:int}, then
we will show that Algorithm~\ref{alg:monoid-integrator} will return a \scdevice.
Suppose instead, it returns `No Solution'.  If `No Solution' is
returned at line~\ref{line:monoid-integrator:derivative-failure} then it fails
because no $F'$ exists (Lemma~\ref{lm:changeC}). But this will also be an
obstruction to the existence of a $G$ because either: ($i$)~there exists a
string in $\Language{F}$ that fails to be mapped to any string via relation
$\change{F}{\mon{D}}$ (line~\ref{line:delta:string-crash} of
Algorithm~\ref{alg:variation}); or ($ii$)~there exists a string $s$ in $F$ with
$\cap_{s\in S}\reachedc{F}{s}=\emptyset$
(line~\ref{line:delta:output-inconsistent} of Algorithm~\ref{alg:variation}).
But then we obtain contradictions for relation $\change{F}{\mon{D}} \rcmp
\collapser{Y(F)}{\mon{D}}$: the same string in ($i$) makes it impossible for
$G$ to satisfy condition~1 of Definition~\ref{def:osmod}; similarly for ($ii$),
condition~2 of Definition~\ref{def:osmod} cannot be met by any $G$.

If `No Solution' is returned at
line~\ref{line:monoid-integrator:output-inconsistent} then there is a set of
strings $T$ with the same image, say $r$, after function
$\collapser{Y(F)}{\mon{D}}$, with no output common to all elements of
$T$. Since those strings in $T$ are mapped from a set of strings
$S\subseteq \Language{F}$ via relation\,$\change{F}{\mon{D}}$, the strings in $S$ do not
share any common output in $F$. Otherwise, Algorithm~\ref{alg:monoid-integrator}
(line~\ref{line:monoid-integrator:output-intersection}) and
Algorithm~\ref{alg:variation} (line~\ref{line:delta:intersection}) will identify
a suitable output, since the set of all
possible common outputs is maintained by 
the intersection operations.
But this contradicts $G$ being a solution: $r\in \Language{G}$ but
there is no appropriate
output for $r$ to yield, so condition~2 of Definition~\ref{def:osmod} cannot be
met for relation $\change{F}{\mon{D}} \rcmp
\collapser{Y(F)}{\mon{D}}$.
Hence,
if there exists a solution $G$ for Question~\ref{q:int},
Algorithm~\ref{alg:monoid-integrator} will return a \scdevice.  
\end{movableProof}

\begin{figure}[t]
\centering
    \includegraphics[width=0.47\textwidth]{./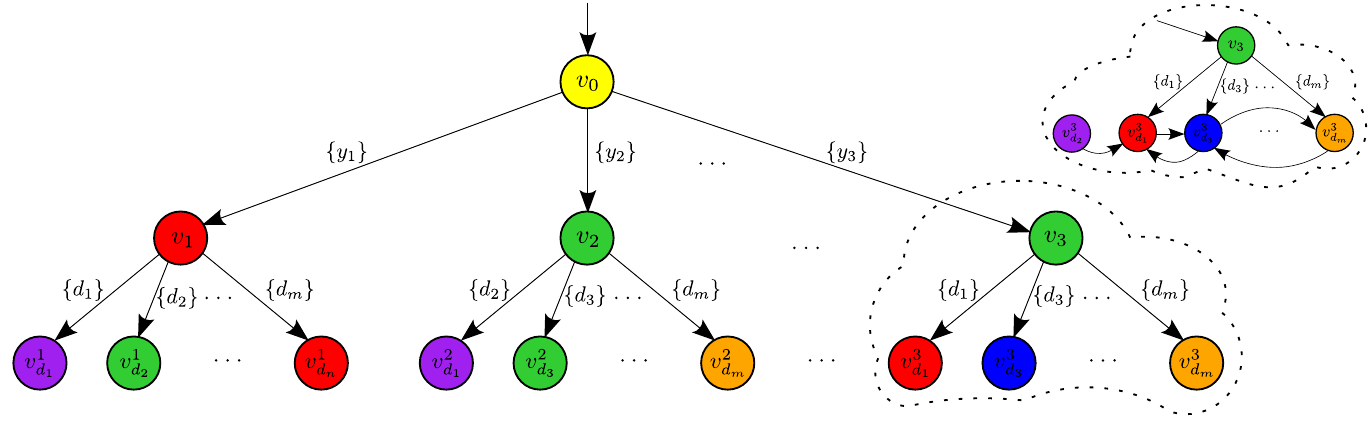}
 \caption{
A depiction of the tree with three layers employed in
Algorithm~\ref{alg:monoid-integrator}
for the monoid integrator.
Top~right: the inset shows how 
Algorithm~\ref{alg:monoid-table} modifies
sub-trees so they gain extra edges (and potentially extra vertices) to make
composite leaves as self-contained blocks with transitions to
encode $(\mon{D},\actraw{D})$.
}
    \label{fig:monoid-integrator}
\end{figure}

\bigskip

We remark on the fact that Algorithm~\ref{alg:monoid-integrator} is unlike
the previous algorithms. The preceding ones act as a sort of mutator: they begin
by making a copy of their input graph, and local modifications are made before,
finally, being determinized to catch inappropriate outputs (via the $c(\cdot)$
function) and to ensure that the resulting \scdevice is deterministic.
This means that structure of their input $F$ influences the structure of their
result.  But Algorithm~\ref{alg:monoid-integrator} 
produces an \scdevice \emph{de novo}, and its structure is essentially fixed: it is a
\num{3}-layer tree regardless of the specifics of $F$. The $F$ serves really to
define the input--output behavior. 
\todo{P~3.7}\rebut{Figure~\ref{fig:monoid-integrator} offers a visualization of the 
structure of $F$: the three-layer tree produces outputs for $\epsilon$, the $y_i$ elements, and strings of the form $y_j d_k$.}

\bigskip

Now we return to consideration of Question~\ref{q:dis}.

\begin{algorithm}
\algosize
\caption{Disaggregator: 
$\filltable{\mon{D}}{F'} \rightarrowtail G$}
\label{alg:monoid-table}
\renewcommand{\algorithmicrequire}{\textbf{Input:}}
\renewcommand{\algorithmicensure}{\textbf{Output:}}
\SetKwInOut{Input}{Input}
\SetKwInOut{Output}{Output}
\Input{A monoid integrator \device $F'$ from Algorithm~\ref{alg:monoid-integrator}, and monoid variator~$\mon{D}$}
\Output{A deterministic \device $G$ that accepts additional elements in
$\mon{D}$} 
Initialize an empty graph $M$ \tcp{Graph for \mon{\textcolor{commentcolor}{D}}}\label{line:filltable:start-monoid-graph}
\For{$d\in \mon{D}$}
{
	Create a state $w_d$ in $M$	
}
\For{$d_1, d_2\in \mon{D}$}
{	
	$d\gets d_1\monop d_2$\\
	Form an edge $w_{d_1}\xrightarrow{d} w_{d_2}$ in
$M$\label{line:filltable:end-monoid-graph}
}
Create graph $G$ with a single vertex $v_0$, with output $c(v_0) \gets c(v'_0)$\\
Let $v'_0$ be the initial state of $F'$\label{line:filltable:start-add-monoid}\\
\For{every outgoing edge $v'_0\xrightarrow{y}v'_y$ in $F'$}
{
	Add a vertex $v_y$ to $G$\\
 Set $c(v_y) \gets c(v'_y)$, and connect $v_0\xrightarrow{y} v_y$\\
	Create $M^y$ as a copy of $M$, adding $M^y$ to $G$\\
	\For {every outgoing edge $v'_y\xrightarrow{d} v'_d$ in $F'$}
	{
		Form an edge $v_y\xrightarrow{d} w_d^y$\\
		$c(w_d^y)\gets c(v'_d)$\\
	}	
	Assign an arbitrary color to the remaining vertices in $M^y$\label{line:filltable:end-add-monoid}\\
}
\Return $G$
\end{algorithm}

\begin{lemma}
\label{lemma:disag-implies-integrate}
Given a \scdevice $F$ with monoidal variator $(\mon{D},\actraw{D})$,
$F$ being $\left(\change{F}{\mon{D}} \rcmp \disaggregator{F}{D}\right)$-simulatable
implies that $F$ is $\left(\change{F}{\mon{D}} \rcmp \collapser{F}{D}\right)$-simulatable.
\end{lemma}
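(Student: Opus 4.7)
\begin{movableProof}
The plan is to leverage the hypothesized \scdevice $G$ (which output simulates $F$ modulo $\change{F}{\mon{D}} \rcmp \disaggregator{Y(F)}{D}$) to construct a target \scdevice $G'$ having the three-layer tree shape produced by Algorithm~\ref{alg:monoid-integrator}. The guiding observation is that the image of $\change{F}{\mon{D}} \rcmp \collapser{Y(F)}{D}$ contains only strings of length at most two ($\epsilon$, $y_0$, or $y_0 d$), and each such short string is a disaggregator-equivalent reshuffling of any derivative with the same accumulated monoid product. Hence $G$ already handles these ``canonical'' strings suitably, and its outputs at them can be inherited wholesale by $G'$.

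Concretely, I would build $G'$ with a root $v_\epsilon$, a second layer consisting of one $v_{y_0}$ for each $y_0 \in Y(F) \cap \Language{F}$ (reached from $v_\epsilon$ by a $y_0$-edge), and a leaf $v_{y_0, d}$ whenever some $s \in \Language{F}$ beginning with $y_0$ has a derivative $y_0 d_1 \dots d_n$ with $d_1 \monop \dots \monop d_n = d$ (reached from $v_{y_0}$ by a $d$-edge); set $c(v_\epsilon) = \reachedc{G}{\epsilon}$, $c(v_{y_0}) = \reachedc{G}{y_0}$, and $c(v_{y_0, d}) = \reachedc{G}{y_0 d}$. The tree structure makes $G'$ trivially deterministic. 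Verifying $\osmod{G'}{F}{\change{F}{\mon{D}} \rcmp \collapser{Y(F)}{D}}$ proceeds by first invoking Property~\ref{property:total} applied to the hypothesis, which ensures every $s \in \Language{F}$ admits a derivative under $\change{F}{\mon{D}}$; its collapser image then lies in $\Language{G'}$ by construction, securing clause~1 of Definition~\ref{def:osmod}. For clause~2, each $t$ of the form $\epsilon$, $y_0$, or $y_0 d$ reachable from $s$ via $\change{F}{\mon{D}} \rcmp \collapser{Y(F)}{D}$ is also reachable via $\change{F}{\mon{D}} \rcmp \disaggregator{Y(F)}{D}$ (the length-two form lying trivially in the disaggregator equivalence class of the corresponding derivative), so the containment $\reachedc{G'}{t} = \reachedc{G}{t} \subseteq \reachedc{F}{s}$ transports directly from the hypothesis.

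The delicate point I would have to check is consistency: distinct $s, s' \in \Language{F}$ may have derivatives with the same accumulated product $d$, forcing $c(v_{y_0, d})$ to fit inside both $\reachedc{F}{s}$ and $\reachedc{F}{s'}$ at once. The uniform choice $c(v_{y_0, d}) = \reachedc{G}{y_0 d}$ discharges this requirement because $G$'s output simulation already guarantees $\reachedc{G}{y_0 d} \subseteq \reachedc{F}{s}$ for every such $s$ simultaneously; this is where the monoidal structure is crucially used, via clause~2 of Definition~\ref{def:disaggregator}, to identify all derivatives with matching product.
\end{movableProof}
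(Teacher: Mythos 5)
Your proposal is correct, but it takes a genuinely different---and considerably heavier---route than the paper. The paper's own proof is a one-liner: every pair of the monoid integrator $\collapser{Y(F)}{D}$ is also a pair of the disaggregator $\disaggregator{Y(F)}{D}$ (clause~2 of Definition~\ref{def:disaggregator} with $n=1$), hence $\change{F}{\mon{D}} \rcmp \collapser{Y(F)}{D} \subseteq \change{F}{\mon{D}} \rcmp \disaggregator{Y(F)}{D}$, and Property~\ref{prop:subsetrel} then says that the \emph{very same} device $G$ witnessing the hypothesis already output simulates $F$ modulo the smaller (still left-total) relation; no new construction is needed. You instead build a fresh three-layer tree $G'$ whose outputs at $\epsilon$, $y_0$ and $y_0 d$ are inherited from $G$, and your verification is sound: in particular the ``delicate point'' is resolved correctly, since for every $s$ admitting a derivative string with accumulated product $d$ the pair $(s, y_0 d)$ lies in the disaggregator composite, so $\reachedc{G}{y_0 d} \subseteq \reachedc{F}{s}$ holds simultaneously for all such $s$, and your appeal to Property~\ref{property:total} for clause~1 is legitimate. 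What your route buys is an explicit witness in the canonical three-layer shape---essentially the universal monoid integrator that the paper reserves for the \emph{converse} direction (Lemma~\ref{lemma:integrate-implies-disag}, via Algorithms~\ref{alg:monoid-integrator} and~\ref{alg:monoid-table}); what it costs, besides length, is a finiteness caveat: your construction creates one leaf per achievable accumulated product, which is a finite state set only under the paper's standing assumption that the variator is finite, whereas the paper's same-device argument is insensitive to that and works for the lemma as stated.
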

\begin{movableProof}
As $\disaggregator{F}{D} \supseteq \collapser{F}{D}$, 
what suffices for $\change{F}{\mon{D}} \rcmp \disaggregator{F}{D}$ must also serve for
$\change{F}{\mon{D}} \rcmp \collapser{F}{D}$,
via Property~\ref{prop:subsetrel}.
\end{movableProof}

\begin{lemma}
\label{lemma:integrate-implies-disag}
Given a \scdevice $F$ with monoidal variator $(\mon{D},\actraw{D})$,
$F$ being $\left(\change{F}{\mon{D}} \rcmp \collapser{F}{D}\right)$-simulatable
implies that $F$ is $\left(\change{F}{\mon{D}} \rcmp \disaggregator{F}{D}\right)$-simulatable.
\end{lemma}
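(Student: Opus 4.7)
\begin{movableProof}
The plan is to convert a collapser-simulator for $F$ into a disaggregator-simulator by extending it so that arbitrary sequences of monoid elements can be consumed while the accumulated product is tracked internally. By the hypothesis and Lemma~\ref{lm:monoid-integrator}, we may take as our starting point the three-layer tree $G$ produced by Algorithm~\ref{alg:monoid-integrator}. In $G$, each second-layer state $v_y$ has leaves $v_{(y,d_{\acc})}$, and each leaf's output is a subset of $\reachedc{F}{s}$ for every $s$ satisfying the appropriate $\change{F}{\mon{D}} \rcmp \collapser{F}{D}$-chain. I would then apply Algorithm~\ref{alg:monoid-table} to $G$, producing $G'$, which attaches beneath each $v_y$ a copy $M^y$ of a monoid-structured graph $\{w_d\}_{d\in \mon{D}}$ with edges $w_{d_1} \xrightarrow{d} w_{d_2}$ whenever $d = d_1 \monop d_2$; existing leaves of $G$ are identified with the corresponding $w_d^y$ (inheriting outputs), while any remaining $w$-vertices receive arbitrary outputs.

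The core step is what I shall call \emph{product tracking}: starting from $v_y$ in $G'$ and reading any $d'_1 d'_2 \dots d'_n \in \KleeneStr{\mon{D}}$, the trace terminates (after determinization) at the leaf that was identified with $v_{(y, d'_1 \monop \cdots \monop d'_n)}$. This follows by induction on $n$ using associativity and the identity law; determinization is harmless because any two traces arriving at a common state must share the same accumulated product and hence inherit a common output from the same leaf of $G$. With product tracking in hand, verifying the two conditions of Definition~\ref{def:osmod} for $\change{F}{\mon{D}} \rcmp \disaggregator{F}{D}$ becomes routine. Condition~1: for any $s \in \Language{F}$, pick a derivative $u = y_0 d_1 \dots d_m$ with $s \change{F}{\mon{D}} u$ (which exists by the left-totality that the hypothesis entails); since $u \disaggregator{F}{D} u$ by reflexivity, and $u \in \Language{G'}$ by construction, $(s, u)$ is a valid witness. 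Condition~2: any $t = y_0 d'_1 \dots d'_n$ with $(s, t) \in \change{F}{\mon{D}} \rcmp \disaggregator{F}{D}$ factors through some $u$ whose accumulated product $p = d_1 \monop \cdots \monop d_m$ equals $d'_1 \monop \cdots \monop d'_n$, so product tracking places the trace of $t$ at the leaf $v_{(y_0, p)}$, whence $\reachedc{G'}{t} \subseteq c(v_{(y_0, p)}) \subseteq \reachedc{F}{s}$ by Lemma~\ref{lm:monoid-integrator}.

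The main obstacle is the product-tracking claim itself, especially verifying that the determinization step in Algorithm~\ref{alg:monoid-table} never produces an empty output intersection. This should parallel, and can directly reuse, the argument behind the intersection operations in Algorithms~\ref{alg:variation} and~\ref{alg:monoid-integrator}: the collapser-compatibility already baked into $G$ guarantees that any two leaves labelled by the same pair $(y, p)$ agree on their outputs, so the extension in $G'$ to handle all disaggregations whose product is $p$ remains internally consistent.
\end{movableProof}
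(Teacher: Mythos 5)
Your proposal is correct and follows essentially the same route as the paper: take the three-layer integrator from Algorithm~\ref{alg:monoid-integrator}, attach the monoid blocks via Algorithm~\ref{alg:monoid-table}, and use the accumulated-product ("product tracking") property to reduce every disaggregated string to its integrated form, whose output is already constrained by Lemma~\ref{lm:monoid-integrator}. Your only slight misstatement is attributing a determinization/intersection step to Algorithm~\ref{alg:monoid-table} (it has none; the output consistency is entirely inherited from Algorithm~\ref{alg:monoid-integrator}), but as you yourself observe, the needed consistency is already baked into the integrator, so the argument stands.
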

\begin{movableProof}
Form the
monoidal integrator 
$F' = \mi{\mon{D}}{F}$. 
Given the lemma's premise
and the correctness of Algorithm~\ref{alg:monoid-integrator}, some $F'$ must be produced.
Now, construct a \device $G$ via $\filltable{\mon{D}}{F'}$, as defined in
Algorithm~\ref{alg:monoid-table}. It first creates a structure (called $M$) to capture the
transition between different monoid elements in $\mon{D}$
(lines~\ref{line:filltable:start-monoid-graph}--\ref{line:filltable:end-monoid-graph}),
and then affixes a copy of this structure below each $y$ to enable the tracing of
additional elements in $\mon{D}$. Following this algorithm, we obtain a \scdevice 
$G$, and $\Language{G} = \{\epsilon\} \cup  (Y \scat \KleeneStr{\mon{D}})$.
To show that $G$ output simulates $F$ modulo $\change{F}{D}\rcmp
\disaggregator{F}{D}$, consider a string $s \in \Language{F}$ with $|s| = m$.
When $m = 0$ or $m = 1$, the requirements are clearly met since 
$G$ behaves as $F'$ does on those strings. 
For $m > 1$, the string $s$ will correspond to the some non-empty set of
strings under $\change{F}{D}$, each of them of the form $t=yd_1d_2\dots
d_{m-1}$; thus, $t \in \Language{G}$. Let $d_{\acc}=d_1\monop d_2\monop\dots\monop d_{m-1}$,
then we know that $t'=yd_{\acc}$ reaches the same state as that of $t$ in $G$,
according to the construction of $G$
(lines~\ref{line:filltable:start-monoid-graph}--\ref{line:filltable:end-monoid-graph}).
But $t'$ on $G$ is identical to $t'$ on $F'$, hence: $\reachedc{G}{t} =
\reachedc{G}{t'} = \reachedc{F'}{t'} \subseteq \reachedc{F}{s}$,
the last being because
$\osmod{F'}{F}{\;\change{F}{\mon{D}} \rcmp \collapser{F}{D}}$.
Therefore, $G$ output simulates $F$ modulo relation $\change{F}{\mon{D}} \rcmp \disaggregator{F}{D}$. 
\end{movableProof}

\begin{theorem}
\label{thrm:integrate-iff-disag}
For any \scdevice $F$ with monoidal variator $(\mon{D},\actraw{D})$,
\begin{align*}
F\text{\, is }&\left(\change{F}{\mon{D}} \rcmp \disaggregator{F}{D}\right)\text{-simulatable}\\
                                        & \qquad\quad\;\big\Updownarrow  \label{eq:relation_prod} \\
F\text{\, is }&\left(\change{F}{\mon{D}} \rcmp \collapser{F}{D}\right)\text{-simulatable}.
\end{align*}
\end{theorem}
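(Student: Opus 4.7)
My plan is to prove the biconditional by establishing each direction separately, appealing to the two preceding lemmas which together furnish exactly what is needed. The proof should be essentially a packaging step.

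For the forward direction ($\Downarrow$), I would invoke Lemma~\ref{lemma:disag-implies-integrate} directly. The key observation is that $\disaggregator{F}{D} \supseteq \collapser{F}{D}$: the disaggregator is a relation that groups together all strings sharing the same monoidal sum of increments, whereas the integrator $\collapser{F}{D}$ is a function picking out a single canonical representative of each such class (namely, the one-symbol form $y_0 d_{\acc}$). Hence $\change{F}{\mon{D}} \rcmp \disaggregator{F}{D}$ is a super-relation of $\change{F}{\mon{D}} \rcmp \collapser{F}{D}$. Provided left-totality is preserved (which it is, since both relations are reflexive on the image of $\change{F}{\mon{D}}$), Property~\ref{prop:subsetrel} transports simulatability downward to the sub-relation.

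For the reverse direction ($\Uparrow$), I would appeal to Lemma~\ref{lemma:integrate-implies-disag}, whose proof uses the explicit construction in Algorithm~\ref{alg:monoid-table} applied to the monoid integrator $F' = \mi{\mon{D}}{F}$. The idea is to take $F'$ (the three-layer tree), and graft onto each second-layer state $v_y$ a copy of the Cayley-like graph $M$ on $\mon{D}$, so that any string $y d_1 d_2 \dots d_{m-1}$ may be traced by its partial products and arrives at the same state as its collapsed form $y(d_1 \monop \cdots \monop d_{m-1})$. Output simulation modulo the disaggregator then follows since $\reachedc{G}{y d_1 \dots d_{m-1}} = \reachedc{G}{y d_{\acc}} = \reachedc{F'}{y d_{\acc}} \subseteq \reachedc{F}{s}$, the last inclusion inherited from the integrator $F'$ already output simulating $F$ modulo $\change{F}{\mon{D}} \rcmp \collapser{F}{D}$.

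The main obstacle, as far as I can see, is not in the biconditional itself but in the two lemmas it rests on; once those are in hand, the theorem is a one-line combination. In the write-up I would simply state that combining Lemmas~\ref{lemma:disag-implies-integrate} and~\ref{lemma:integrate-implies-disag} yields the equivalence, and perhaps remark that the monoid structure is what permits the collapse of arbitrary-length difference sequences into the single-element form acted upon by $\actraw{D}$, which is precisely what gives the two acquisition-mode relations the same discriminating power.
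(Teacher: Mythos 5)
Your proposal is correct and follows the paper's own route exactly: the theorem is obtained by combining Lemma~\ref{lemma:disag-implies-integrate} (via the sub-relation argument through Property~\ref{prop:subsetrel}) with Lemma~\ref{lemma:integrate-implies-disag} (via the construction of Algorithms~\ref{alg:monoid-integrator} and~\ref{alg:monoid-table}). Your sketches of the two lemmas' inner workings also match the paper's arguments, so nothing further is needed.
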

\begin{movableProof}
Combine Lemma~\ref{lemma:disag-implies-integrate}.
and Lemma~\ref{lemma:integrate-implies-disag}
\end{movableProof}

\bigskip

\ifmoveprooftoend
The proof of Lemma~\ref{lemma:integrate-implies-disag} (available in the Supplement) uses Algorithm~\ref{alg:monoid-table} to construct 
a \scdevice with
a very specific form: two layers, as a tree, reaching `composite-leaves' formed by connected blocks. 
\rebut{The inset to Figure~\ref{fig:monoid-integrator} illustrates this.}
(Note that Algorithm~\ref{alg:monoid-table} itself takes as input
Algorithm~\ref{alg:monoid-integrator}'s
output.)
\else
The construction in Lemma~\ref{lemma:integrate-implies-disag} (via Algorithms~\ref{alg:monoid-integrator} and~\ref{alg:monoid-table})
has a specific form: two layers, as a tree, reaching `composite-leaves' formed by connected blocks. 
\rebut{The inset to Figure~\ref{fig:monoid-integrator} illustrates this.}
\fi
Since, for any $F$ that is $\left(\change{F}{\mon{D}} \rcmp \disaggregator{Y(F)}{D}\right)$-simulatable, there is a $F'$ possessing this structure, we term it the \emph{universal} monoid integrator. 
That same integrator will also output simulate 
$F$ under $\change{F}{\mon{D}} \rcmp \collapser{Y(F)}{D}$.
(Heed: the universality refers to its structural form, the particular outputs at each vertex will
depend on the $F$ and $(\mon{D},\actraw{D})$ used.)
This structure is will lead to 
Theorem~\ref{thm:compose-algos}.

\section{Chatter-free behavior}
\label{sec:chatter-free}

In this brief section, we introduce two additional concepts that resemble some
aspects of the relations in Section~\ref{sec:shrink-and-pump}, and which also
connect with the topics just discussed
in Section~\ref{sec:monoidal}. We start by presenting a new, detailed
example which will motivate a pair of additional definitions.

\begin{figure}[h]
    \vspace*{10pt}
	\centering
        \hspace*{-0.4cm}\includegraphics[scale=0.475]{./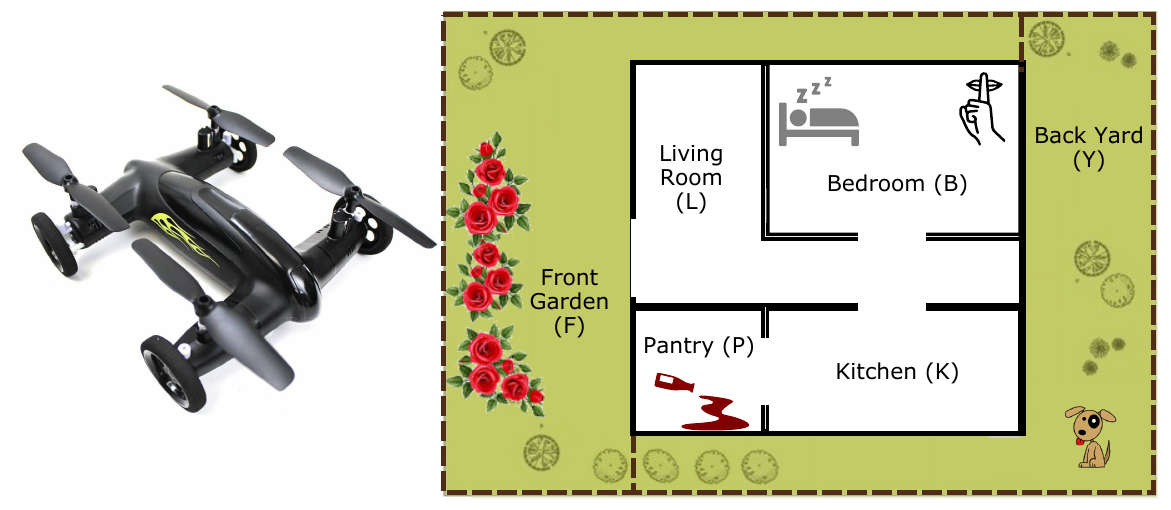}
        \includegraphics[scale=0.40]{./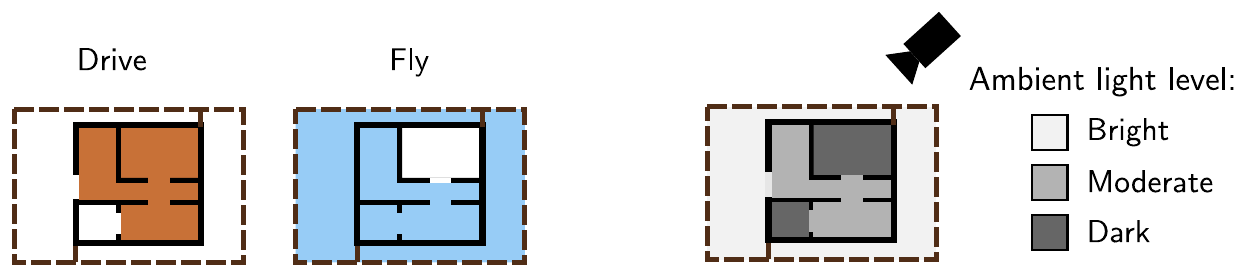}
     \caption{A driving drone monitors a
     home environment. The robot is capable of both flight and wheeled
     locomotion and is equipped with a single-pixel camera. As an occupied residence, the space imposes complex constraints on how
     the vehicle may move.  It must fly to avoid grass outdoors (\ssl{F} and
     \ssl{Y}) and liquids in the pantry (\ssl{P}); in the bedroom (\ssl{B}),
     it should drive to minimize
     noise. The robot is initially located in either the
     front garden (\ssl{F}) or the living room~(\ssl{L}). To determine its
     state, the robot uses its single-pixel camera, which is capable of discerning just three
     different ambient light levels (Bright (${B}$), Moderate (${M}$), Dark
     (${D}$)). The insets show: (left) the motion constraints and (right) the various light levels.
     \label{fig:drivingdrone-scenario}}
\end{figure}

\begin{figure*}
\begin{subfigure}[b]{0.24\linewidth}
	\centering
        \includegraphics[scale=0.52]{./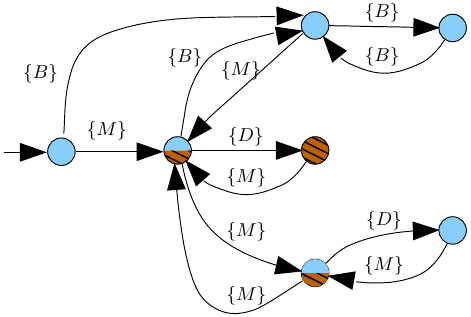}
	 \caption{
	 \label{fig:drivingdrone-signalspace}}
     \end{subfigure}
	\begin{subfigure}[b]{0.24\linewidth}
	\centering
        \hspace*{-0.8cm}
        \includegraphics[scale=0.52]{./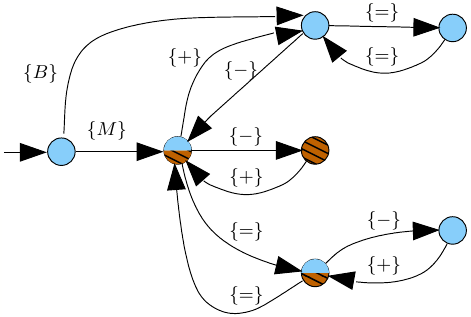}
         \caption{
         \label{fig:drivingdrone-derivative}}
     \end{subfigure}
	\begin{subfigure}[b]{0.24\linewidth}
	\centering
        \hspace*{-0.3cm}
         \includegraphics[scale=0.53]{./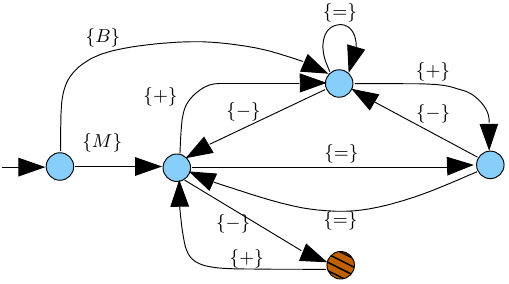}
         \caption{
        \label{fig:drivingdrone-output-stable}}
     \end{subfigure}
	\begin{subfigure}[b]{0.24\linewidth}
	\centering
        \includegraphics[scale=0.53]{./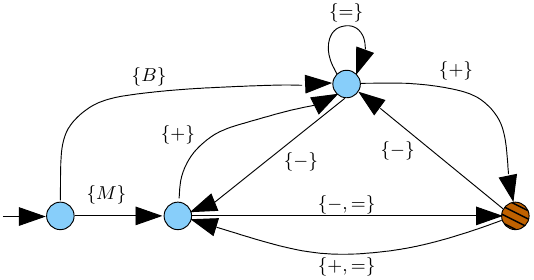}
	 \caption{
         \label{fig:drivingdrone-unstable}}
     \end{subfigure}
\caption{\Scdevices to choose appropriate locomotion modes  for the driving drone in
Figure~\ref{fig:drivingdrone-scenario}, with blue=fly, brown=drive; (a)
works in the original signal space. 
The other three are derivatives 
that operate in the space of changes
as expressed 
via variator $\set{D_\ell}$.  
Applying 
$\variation{\set{D_\ell}}{{\rm a}}$
gives
(b). Both (c) and (d) choose a single output
for each vertex; (c) is output stable, while (d) is not. 
\label{fig:drivingd}
}
\end{figure*}

\begin{example}[Driving drone]
The Syma X9 Flying Quadcopter Car, shown in
Figure~\ref{fig:drivingdrone-scenario}, is a robot marketed as a `driving
drone'. It is capable of switching between driving and flying modes, the idea
being that it can make use of either mode of locomotion and determine what best suits the
demands of its task.  Imagine deploying such a robot in the scenario shown as the
map on right-hand side of the figure. The robot, starting in either the front
garden (\ssl{F}) or the living room~(\ssl{L}), will move about the home and
garden. Its size and construction, along with task constraints, mean that it
must adjust its mode of locomotion depending on where it is. The robot has a
single-pixel camera with which it determines different levels of ambient light.
(The figure's caption provides specific details, and further explanation.)

It uses a \scdevice that processes the light readings as input, and outputs the appropriate mode (driving or flying).
Figure~\ref{fig:drivingdrone-signalspace} gives such a \device; it is 
essentially a state diagram encoding the problem constraints,
topological structure, and raw sensor readings.
As Figure~\ref{fig:drivingdrone-signalspace} is essentially a transcription of the problem, it serves as a type of specification for
acceptable input--output functionality.

An observation variator $\set{D_\ell}=\{\Dplus, \Dminus, \De\}$ uses $\Dplus$ to
capture the brightness increase (from dark to moderate, from moderate to
bright), $\Dminus$ for brightness decrease (from bright to moderate, from moderate
to dark), $\De$ for brightness equivalence. A derivative \device 
obtained by applying Algorithm~\ref{alg:variation}
to 
 Figure~\ref{fig:drivingdrone-signalspace}
with this
variator is shown in Figure~\ref{fig:drivingdrone-derivative}. 
%
%
\end{example}

Figure~\ref{fig:drivingdrone-output-stable} presents a \device that, though
different from the straightforward derivative in Figure~\ref{fig:drivingdrone-derivative}, also
implements the functionality in Figure~\ref{fig:drivingdrone-signalspace} under
delta relation associated with $\set{D_\ell}$. That is to say, it also output
simulates 
modulo $\change{F}{\set{D_\ell}}$ and is, thus, also a derivative.
Figure~\ref{fig:drivingdrone-output-stable}, being smaller that either
\ref{fig:drivingdrone-signalspace} or \ref{fig:drivingdrone-derivative}, might be desirable for practical purposes.
But now consider that the `$\De$' element of the variator is produced when there is
no change in light levels. If events are triggered when the robot moves from
one room (or region) to the next, then there may not be too many of them.  On
the other hand, if the robot is using these readings to localize, that is, to
actually determine that it may have transitioned from one room or region to the
next, then many such elements will likely be generated.

Particularly when there are cycles on elements such as the `$\De$' symbol,
as these are `neutral' changes in the signal, this 
may induce oscillatory behavior in the device as it fluctuates between states, flip-flopping rapidly.
One may ask, thus, whether there are \scdevices that can avoid this 
issue.

\begin{definition}[vertex stable]
\label{def:vertex-stable}
For an 
$F=(V, V_0, Y, \tau, C, c)$ and a set $\neut \subseteq Y(F)$, we
say $F$ is \defemp{vertex stable with respect to $\neut$} when, for all $s_1s_2\dots s_{n-1}s_n \in \Language{F}$ with $s_n \in \neut$,
$\reachedf{F}{s_1s_2\dots s_{n-1}} = \reachedf{F}{s_1s_2\dots s_n}$.
\end{definition}

Intuitively: having handled an input stream of symbols, processing an
additional element from $\neut$ does not cause a vertex stable \device to move
to a new vertex. 

\begin{remark}
The concept of vertex stability is related to, but distinct from, the concept of
the $\neut$-pump (from Definition~\ref{def:pump}). For instance, a
vertex stable \scdevice may not always be ready for an element from $\neut$.
On the other hand, simulating modulo the $\neut$-pump relation need not
imply the \device is vertex stable; Figure~\ref{fig:OS-not-table:input} provides
such an example.
\end{remark}

\begin{figure}[ht!]
    \centering
         \centering
        \includegraphics[width=0.64\linewidth]{./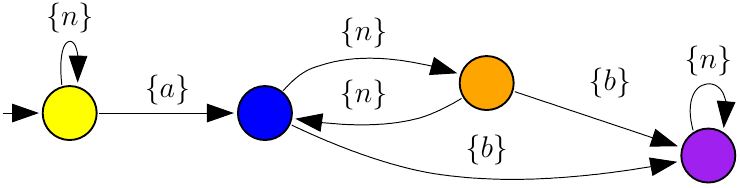}
         \caption{A \device that output simulates $\FSimp$ modulo 
         ${\relsub{P}{\{n\}}}$.
         (Recall that \device $\FSimp$ is the one in Figure~\ref{fig:notOS:input}.) 
         It is neither vertex stable nor output stable with respect
         to $\{n\}$. The device can be made output stable, without becoming vertex stable,
         by altering the orange to become blue. Thereupon, vertex stability 
         is possible if the two blue vertices are merged.
         \label{fig:OS-not-table:input}}
\end{figure}

Figure~\ref{fig:drivingdrone-unstable} is another derivative for the  driving
drone scenario (and is smaller even than
Figure~\ref{fig:drivingdrone-output-stable}).  It has not only a multi-state
cycle on the `$\De$' symbol, but this time the oscillatory behavior produces
fluctuations in the output stream, with values going: blue, brown, blue, brown,
\dots. Given that these describe actions for the robot to take-off, and then
land, then take-off, \dots this is highly undesirable behavior.  One naturally
asks whether a \device exists which avoids this issue:

\begin{definition}[output stable]
\label{def:output-stable}
For an 
$F=(V, V_0, Y, \tau, C, c)$ and a set $\neut \subseteq Y(F)$, we
say $F$ is \defemp{output stable with respect to $\neut$} when, for all $s_1s_2\dots s_{n-1}s_n \in \Language{F}$ with $s_n \in \neut$,
$\reachedc{F}{s_1s_2\dots s_{n-1}}\!=\!\reachedc{F}{s_1s_2\dots s_n}$ and
\mbox{$|\reachedc{F}{s_1s_2\dots s_n}|\!=\!1$}.
\end{definition}

This concept is related to the notion of chatter in switched systems, and work
that schedules events of a switched system in order to be
non-chattering~\cite{caldwell16chatterfree}.

Further we note, both Figures~\ref{fig:drivingdrone-output-stable}
and~\ref{fig:drivingdrone-unstable} differ from
Figure~\ref{fig:drivingdrone-derivative} in that they provide a single output
per state. One is always free to make a singleton prescription:

\begin{property} 
\label{prop:vso}
For any \device $F=(V, V_0, Y, \tau, C, c)$, suppose one constructs
$F_{\text{sing}}=(V, V_0, Y, \tau, C, c_{\text{sing}})$, where $c_{\text{sing}}$ is a version of $c$ restricted to singleton choices,
viz., picked so that for all $v \in V$,  $c_{\text{sing}}(v) \subseteq c(v)$ and $|c_{\text{sing}}(v)| = 1$.
Then 
$\osmod{F_{\text{sing}}}{F}{\rel{\mathbf{id}}}$.
\end{property}

Hence, if one seeks an output stable \scdevice, then finding a vertex stable
one will suffice (because, formally, Theorem~\ref{thrm:chain} can be applied at
last step to change to a singleton version).

The universal monoid integrator (from 
Algorithm~\ref{alg:monoid-table}) has 
blocks that encode the transitions arising
from the action of the monoid.
Being a monoid action, the $\id{D}$ element is neutral, which manifests in
a simple fact: vertices in the third layer, after $\filltable{\mon{D}}{\cdot}$, have self
loops labeled with $\id{D}$. This specific structure leads to the following observation.

\begin{theorem}
\label{thm:compose-algos}
For any \device $F$ with monoidal variator $(\mon{D},\actraw{D})$, $\id{D} \not\in Y(F)$,
which is $\left(\change{F}{\mon{D}} \rcmp \collapser{F}{D}\right)$-simulatable,
there exists a single $F'$ such that:
\begin{tightenumerate}
\item $\osmod{F'}{F}{\change{F}{\mon{D}} \rcmp \disaggregator{Y(F)}{D}},$
\item $\osmod{F'}{F}{\change{F}{\mon{D}} \rcmp \relsub{P}{\{\id{D}\}}},$
\item $\osmod{F'}{F}{\change{F}{\mon{D}} \rcmp \relsub{\pi}{\{\id{D}\}}},$ 
\item $F'$ is vertex stable with respect to $\{\id{D}\},$ and
\item $F'$ is output stable with respect to $\{\id{D}\}.$
\end{tightenumerate}
\end{theorem}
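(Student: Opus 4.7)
The plan is to take $F'$ to be the \emph{universal monoid integrator} produced by $\filltable{\mon{D}}{\mi{\mon{D}}{F}}$ (Algorithms~\ref{alg:monoid-integrator} and~\ref{alg:monoid-table}), with a final application of Property~\ref{prop:vso} to restrict $c(\cdot)$ to singletons. Because $F$ is $\left(\change{F}{\mon{D}} \rcmp \collapser{F}{D}\right)$-simulatable by hypothesis, Lemma~\ref{lm:monoid-integrator} guarantees Algorithm~\ref{alg:monoid-integrator} succeeds, and then Algorithm~\ref{alg:monoid-table} runs to completion. Item~(1) is then obtained immediately from Lemma~\ref{lemma:integrate-implies-disag}, which is literally a statement about this construction.

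The structural crux for items (2)--(5) is that, inside each block $M^y$ attached to a layer-2 vertex, the vertex indexed by an accumulated change $d$ carries a self-loop labeled $\id{D}$, since $d \monop \id{D} = d$ in the monoid. Consequently, once any sequence of shape $y d_1 \dots d_k$ has been traced in $F'$, processing an additional $\id{D}$ returns to the same vertex. Hypothesis $\id{D}\not\in Y(F)$ ensures $\id{D}$ never appears at position one, so the only place where one must worry is the transition from the layer-2 vertex $v_y$ to a layer-3 vertex upon observing $\id{D}$; here I would identify $v_y$ with the block vertex labelled $\id{D}$ (a merge that is benign because, by the construction of Algorithms~\ref{alg:monoid-integrator} and~\ref{alg:monoid-table}, the two share the same output and the same outgoing $d$-behavior $d \mapsto w_d^y$). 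With this identification, every $\id{D}$-edge becomes a self-loop, which directly yields vertex stability (4); composing with the singleton-output restriction from Property~\ref{prop:vso} gives output stability (5) for free.

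For items (2) and (3) I would lift (4) to full insertion/deletion of $\id{D}$'s anywhere in a sequence: by determinism plus the self-loop property, the reached state and all intermediate outputs are invariant under inserting or removing $\id{D}$ symbols (except at position one, where by hypothesis no $\id{D}$ occurs). Coupled with the fact that $\id{D}$'s contribute trivially to the monoid product so that $\change{F}{\mon{D}} \rcmp \relsub{P}{\{\id{D}\}}$ is a sub-relation of $\change{F}{\mon{D}} \rcmp \disaggregator{Y(F)}{D}$, item~(2) follows from (1) via Property~\ref{prop:subsetrel}; then item~(3) follows by Theorem~\ref{thrm:equiv-pump-shrink}, whose precondition (first symbol not in $\{\id{D}\}$) is exactly the consequence of $\id{D}\not\in Y(F)$ together with the form of strings in the image of $\change{F}{\mon{D}}$.

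The main obstacle I expect is the careful bookkeeping at the layer-2/layer-3 boundary described above: verifying that the merge of $v_y$ with the $\id{D}$-indexed block vertex preserves, simultaneously, the output-simulation witness used for (1) (so that the disaggregator identity still holds) and does not inadvertently collapse states that need to remain distinct for correct behavior on non-$\id{D}$ inputs. Once this merge is shown to be a no-op on all relevant languages, the remaining items fall out by the chain of implications above without further difficulty.
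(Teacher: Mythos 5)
Your proposal follows essentially the same route as the paper: the same candidate $F'$ (apply Property~\ref{prop:vso} to $\filltable{\mon{D}}{\mi{\mon{D}}{F}}$), item~1 from the correctness of Algorithms~\ref{alg:monoid-integrator} and~\ref{alg:monoid-table} (Lemma~\ref{lemma:integrate-implies-disag}), items~2--4 from the $\id{D}$-labeled self loops in the block layer, and item~5 from item~4 plus the singleton restriction. Two differences are worth noting. First, your explicit merge of the layer-2 vertex $v_y$ with the block vertex indexed by $\id{D}$ is something the paper does not do --- its proof simply asserts ``this specific structure implies 4)'' --- and your worry is legitimate: without some such identification the string $y\,\id{D}$ reaches a layer-3 vertex distinct from the one reached by $y$, so Definition~\ref{def:vertex-stable} would fail at exactly that transition; the merge is benign for the reason you give, namely that output simulation modulo $\change{F}{\mon{D}} \rcmp \disaggregator{Y(F)}{D}$ already forces the outputs for $y$ and for continuations whose accumulated change is $\id{D}$ to be compatible, so the identified vertex can be colored consistently. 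In this respect you are more careful than the published proof at the one spot where it hand-waves. Second, your appeal to Theorem~\ref{thrm:equiv-pump-shrink} for item~3 is a slight misfire if read literally: that theorem only yields the existence of \emph{some} $\relsub{\pi}{\{\id{D}\}}$-simulating device, whereas the theorem being proved needs the \emph{same} $F'$ to satisfy all five items. Fortunately your preceding observation --- that by determinism and the self loops, deleting $\id{D}$ symbols (never in first position, since $\id{D}\not\in Y(F)$) leaves the reached vertex unchanged --- is precisely the direct argument for item~3 on this $F'$, and is how the paper argues it; similarly, routing item~2 through Property~\ref{prop:subsetrel} from item~1 is a clean alternative to the paper's direct self-loop argument, and both are valid.
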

\begin{movableProof} 
An $F'$ obtained by applying the choice process of Property~\ref{prop:vso} to the result of $\filltable{\mon{D}}{\mi{\mon{D}}{F}}$. It will suffice for 1) owing to the
previous correctness of Algorithms~\ref{alg:monoid-integrator} and~\ref{alg:monoid-table}.
The self loops on all vertices in the layer comprising blocks implies that 2) holds, as
any string traced with $\id{D}$ added
(after the first symbol)
in any quantity, arrives at
the same final vertex.
Since $\relsub{\pi}{\{\id{D}\}}$ cannot drop the first symbol, the omission 
of monoid units simply avoids some loops,
hence 3) holds.
This specific structure implies 4). And, also,
since the $c(\cdot)$ function has been restricted to a $c_{\text{sing}}(\cdot)$ choice in
producing $F'$, 
4) suffices for 5).
\end{movableProof}

\section{Summary and Outlook}
\label{sec:conclusion}

This paper's focus has been less on sensors as used by people currently but rather on whether 
some hypothetical event sensor might be useful were it produced. 
So: what then is an event sensor, exactly? The preceding treatment has shown that
there are several distinct facets. At the very core is the need to have some
signal space on which differences can be meaningfully computed. This requires
some basic statefulness, even if it is very shallow (like
Example~\ref{ex:single-pixel-camera}, the single-pixel camera). We formalize
this idea in the concept of an observation variator.  Also important is the
model of event propagation. In this paper four separate cases
have been identified and distinguished, namely: tightly coupled synchronous, event-triggered,
polling, and asynchronous cases. At least in our framework, some of these
choices depend on variators having certain properties with which to encode
or express aspects of signal differences.  
Our model expresses these cases through relations.  The notion
of output simulation modulo those relations leads to decision questions,
for which we were able to provide algorithms that give solutions
if they exist.
There remain other properties of interest and practical importance (such as
vertex and output stability) which one might like to impose as constraints on the \scdevices one seeks.
We can meet these constraints 
when the variator possesses
the algebraic structure of a monoid, as
our final theorem is constructive.

More work remains to be done, but a start has been made on the question of 
whether information conflated in the process of forming an
event sensor\,---the process of eventification---\,harms input--output behavior. 
We especially believe that this paper's extension of the notion of output simulation, and 
the algorithms we describe, ought to serve as a useful foundation for future work.
\todo{P~2.3}\rebut{One important limitation of the theory developed in  
this paper is that, as it depends on sequences of symbols, discrete time appears from the very outset.
To directly model truly analog devices (as distinct from eventified digital ones) a theory dealing with continuous time may be required. }
\todo{P~2.1}\rebut{Since the vast majority of robots process streams of digitized data, this may be a question of what one decides to treat as the atomic elements that generate observations.
But one might also imagine a hybrid theory, connecting a continuous time approach with the model presented in 
this  work.
}

\subsection*{\centering\textsc{Acknowledgements}}

The authors would like to thank the anonymous reviewers who identified
several opportunities for improvements, e.g.,
Section~\ref{sec:shave} arose directly in response to an insightful question.
This work was partly supported by the NSF through award  \href{https://nsf.gov/awardsearch/showAward?AWD_ID=2034097}{IIS-2034097}, and 
partly through the support of Office of Naval Research Award
\#N00014-22-1-2476.

\bibliographystyle{plain}
\bibliography{mybib}

\ifmoveprooftoend
    \newpage
    ~\\
    \newpage
    \input{supplement.tex}
\fi

\end{document}